\theoremstyle{plain}
\newtheorem{theorem}{Theorem}
\newtheorem{proposition}{Proposition}
\newtheorem{lemma}{Lemma}
\newtheorem{corollary}{Corollary}
\theoremstyle{definition}
\theoremstyle{remark}
\newtheorem{remark}{Remark}
\newcommand*\diff{\mathop{}\!\mathrm{d}}
\newcommand{\cmark}{\ding{51}}%
\newcommand{\xmark}{\ding{55}}%
\definecolor{Gray}{gray}{0.9}
\newcolumntype{g}{>{\columncolor{Gray}}c}
\newcolumntype{f}{>{\columncolor{Gray}}l}
\def\smallunderbrace#1{\mathop{\vtop{\m@th\ialign{##\crcr
   $\hfil\displaystyle{#1}\hfil$\crcr
   \noalign{\kern3\p@\nointerlineskip}%
   \tiny\upbracefill\crcr\noalign{\kern3\p@}}}}\limits}
\def\smalloverbrace#1{\mathop{\vtop{\m@th\ialign{##\crcr
   $\hfil\displaystyle{#1}\hfil$\crcr
   \noalign{\kern3\p@\nointerlineskip}%
   \tiny\upbracefill\crcr\noalign{\kern3\p@}}}}\limits}
\icmltitlerunning{Soft Truncation: A Universal Training Technique of Score-based Diffusion Model}
\begin{document}

\twocolumn[
\icmltitle{Soft Truncation: A Universal Training Technique of \\Score-based Diffusion Model for High Precision Score Estimation}




\begin{icmlauthorlist}
	\icmlauthor{Dongjun Kim}{kaist}
	\icmlauthor{Seungjae Shin}{kaist}
	\icmlauthor{Kyungwoo Song}{uos}
	\icmlauthor{Wanmo Kang}{kaist}
	\icmlauthor{Il-Chul Moon}{kaist,summary_ai}
\end{icmlauthorlist}

\icmlaffiliation{kaist}{KAIST, South Korea}
\icmlaffiliation{uos}{University of Seoul, South Korea}
\icmlaffiliation{summary_ai}{Summary.AI}

\icmlcorrespondingauthor{Dongjun Kim}{dongjoun57@kaist.ac.kr}

\icmlkeywords{Machine Learning, ICML}

\vskip 0.3in
]



\printAffiliationsAndNotice{}  

\begin{abstract}
Recent advances in diffusion models bring state-of-the-art performance on image generation tasks. However, empirical results from previous research in diffusion models imply an inverse correlation between density estimation and sample generation performances. This paper investigates with sufficient empirical evidence that such inverse correlation happens because density estimation is significantly contributed by small diffusion time, whereas sample generation mainly depends on large diffusion time. However, training a score network well across the entire diffusion time is demanding because the loss scale is significantly imbalanced at each diffusion time. For successful training, therefore, we introduce Soft Truncation, a universally applicable training technique for diffusion models, that softens the fixed and static truncation hyperparameter into a random variable. In experiments, Soft Truncation achieves state-of-the-art performance on CIFAR-10, CelebA, CelebA-HQ $256\times 256$, and STL-10 datasets.
\end{abstract}

\section{Introduction}

Recent advances in generative models enable the creation of highly realistic images. One direction of such modeling is \textit{likelihood-free models} \citep{karras2019style} based on minimax training. The other direction is \textit{likelihood-based models}, including VAE \cite{vahdat2020nvae}, autoregressive models \citep{parmar2018image}, and flow models \citep{grcic2021densely}. Diffusion models \citep{ho2020denoising} are one of the most successful \textit{likelihood-based models}, where the reverse diffusion models the generative process. The success of diffusion models achieves state-of-the-art performance in image generation \citep{dhariwal2021diffusion}.

Previously, a model with the emphasis on Fr\'echet Inception Distance (FID), such as DDPM \cite{ho2020denoising} and ADM \cite{dhariwal2021diffusion}, trains the score network with the variance weighting; whereas a model with the emphasis on Negative Log-Likelihood (NLL), such as ScoreFlow \cite{song2021maximum} and VDM \cite{kingma2021variational}, trains the score network with the likelihood weighting. Such models, however, have the trade-off between NLL and FID: models with the emphasis on FID perform poorly on NLL, and vice versa. Instead of widely investigating the trade-off, they limit their work by separately training the score network on FID-favorable and NLL-favorable settings. This paper introduces Soft Truncation that significantly resolves the trade-off, with the NLL-favorable setting as the default training configuration. Soft Truncation reports a comparable FID against FID-favorable diffusion models while keeping NLL at the equivalent level of NLL-favorable models.

For that, we observe that the truncation hyperparameter is a significant hyperparameter that determines the overall scale of NLL and FID. This hyperparameter, $\epsilon$, is the smallest diffusion time to estimate the score function, and the score function beneath $\epsilon$ is not estimated. A model with small enough $\epsilon$ favors NLL at the sacrifice on FID, and a model with relatively large $\epsilon$ is preferable to FID but has poor NLL. Therefore, we introduce Soft Truncation, which softens the fixed and static truncation hyperparameter ($\epsilon$) into a random variable ($\tau$) that randomly selects its smallest diffusion time at every optimization step. In every mini-batch update, we sample a new smallest diffusion time, $\tau$, randomly, and the batch optimization endeavors to estimate the score function only on $[\tau,T]$, rather than $[\epsilon,T]$, by ignoring beneath $\tau$. As $\tau$ varies by mini-batch updates, the score network successfully estimates the score function on the entire range of diffusion time on $[\epsilon,T]$, which brings an improved FID. 

There are two interesting properties of Soft Truncation. First, though Soft Truncation is nothing to do with the weighting function in its algorithmic design, surprisingly, Soft Truncation turns out to be equivalent to a diffusion model with a \textit{general weight} in the expectation sense (Eq. \eqref{eq:general_weight}). The random variable of $\tau$ determines the weight function (Theorem \ref{thm:1}), and this gives a partial reason why Soft Truncation is successful in FID as much as the FID-favorable training (Table \ref{tab:ablation_weighting_function}), even though Soft Truncation only considers the truncation threshold in its implementation (Section \ref{sec:softrunc}). Second, once $\tau$ is sampled in a mini-batch optimization, Soft Truncation optimizes the log-likelihood \textit{perturbed} by $\tau$ (Lemma \ref{lemma:1}). Thus, Soft Truncation could be framed by Maximum Perturbed Likelihood Estimation (MPLE), a generalized concept of MLE that is specifically defined only in diffusion models (Section \ref{sec:MPLE}). 

\section{Preliminary}\label{sec:preliminary}

Throughout this paper, we focus on continuous-time diffusion models \cite{song2020score}. A continuous diffusion model slowly and systematically perturbs a data random variable, $\mathbf{x}_{0}$, into a noise variable, $\mathbf{x}_{T}$, as time flows. The diffusion mechanism is represented as a Stochastic Differential Equation (SDE), written by
\begin{align}\label{eq:forward_sde}
\diff\mathbf{x}_{t}=\mathbf{f}(\mathbf{x}_{t},t)\diff t+g(t)\diff\mathbf{w}_{t},
\end{align}
where $\mathbf{w}_{t}$ is a standard Wiener process. The drift ($\mathbf{f}$) and the diffusion ($g$) terms are fixed, so the data variable is diffused in a fixed manner. We denote $\{\mathbf{x}_{t}\}_{t=0}^{T}$ as the solution of the given SDE of Eq. \eqref{eq:forward_sde}, and we omit the subscript and superscript to denote $\{\mathbf{x}_{t}\}$, if no confusion is arised.

The theory of stochastic calculus indicates that there exists a corresponding reverse SDE given by
\begin{align}\label{eq:reverse_sde}
\diff\mathbf{x}_{t}=\big[\mathbf{f}(\mathbf{x}_{t},t)-g^{2}(t)\nabla\log{p_{t}(\mathbf{x}_{t})}\big]\diff\bar{t}+g(t)\diff\mathbf{\bar{w}}_{t},
\end{align}
where the solution of this reverse SDE exactly coincides to the solution of the forward SDE of Eq. \eqref{eq:forward_sde}. Here, $\diff\bar{t}$ is the backward time differential; $\diff\mathbf{\bar{w}}_{t}$ is a standard Wiener process flowing backward in time \cite{anderson1982reverse}; and $p_{t}(\mathbf{x}_{t})$ is the probability distribution of $\mathbf{x}_{t}$. Henceforth, we represent $\{\mathbf{x}_{t}\}$ as the solution of SDEs of Eqs. \eqref{eq:forward_sde} and \eqref{eq:reverse_sde}.

The diffusion model's objective is to \textit{learn} the stochastic process, $\{\mathbf{x}_{t}\}$, as a parametrized stochastic process, $\{\mathbf{x}_{t}^{\bm{\theta}}\}$. A diffusion model builds the parametrized stochastic process as a solution of a generative SDE,
\begin{align}\label{eq:generative_sde}
\diff\mathbf{x}_{t}^{\bm{\theta}}=\big[\mathbf{f}(\mathbf{x}_{t}^{\bm{\theta}},t)-g^{2}(t)\mathbf{s}_{\bm{\theta}}(\mathbf{x}_{t}^{\bm{\theta}},t)\big]\diff\bar{t}+g(t)\diff\mathbf{\bar{w}}_{t}.
\end{align}
We construct the parametrized stochastic process by solving the generative SDE of Eq. \eqref{eq:generative_sde} backward in time with a starting variable of $\mathbf{x}_{T}^{\bm{\theta}}\sim \pi$, where $\pi$ is an noise distribution. Throughout the paper, we denote $p_{t}^{\bm{\theta}}$ as the probability distribution of $\mathbf{x}_{t}^{\bm{\theta}}$.

A diffusion model learns the generative stochastic process by minimizing the score loss \cite{song2021maximum} of
\begin{align*}
\mathcal{L}(\bm{\theta};\lambda)=\frac{1}{2}\int_{0}^{T}\lambda(t) \mathbb{E}_{\mathbf{x}_{t}}\big[\Vert\mathbf{s}_{\bm{\theta}}(\mathbf{x}_{t},t)-\nabla\log{p_{t}(\mathbf{x}_{t})}\Vert_{2}^{2}\big] \diff t,
\end{align*}
where $\lambda(t)$ is a weighting function that counts the contribution of each diffusion time on the loss function. This score loss is infeasible to optimize because the data score, $\nabla\log{p_{t}(\mathbf{x}_{t})}$, is intractable in general. Fortunately, $\mathcal{L}(\bm{\theta};\lambda)$ is known to be equivalent to the (continuous) denoising NCSN loss \cite{song2020score, song2019generative},
\begin{align*}
&\mathcal{L}_{NCSN}(\bm{\theta};\lambda)\\
&=\frac{1}{2}\int_{0}^{T}\lambda(t) \mathbb{E}_{\mathbf{x}_{0},\mathbf{x}_{t}}\big[\Vert\mathbf{s}_{\bm{\theta}}(\mathbf{x}_{t},t)-\nabla\log{p_{0t}(\mathbf{x}_{t}\vert\mathbf{x}_{0})}\Vert_{2}^{2}\big] \diff t,
\end{align*}
up to a constant that is irrelevant to $\bm{\theta}$-optimization.

Two important SDEs are known to attain analytic transition probabilities, $\log{p_{0t}(\mathbf{x}_{t}\vert\mathbf{x}_{0})}$: Variance Exploding SDE (VESDE) and Variance Preserving SDE (VPSDE) \cite{song2020score}. First, VESDE assumes $\mathbf{f}(\mathbf{x}_{t},t)=0$ and $g(t)=\sigma_{min}(\frac{\sigma_{max}}{\sigma_{min}})^{t}\sqrt{2\log{\frac{\sigma_{max}}{\sigma_{min}}}}$. With such specific forms of $\mathbf{f}$ and $g$, the transition probability of VESDE turns out to follow a Gaussian distribution of $p_{0t}(\mathbf{x}_{t}\vert\mathbf{x}_{0})=\mathcal{N}(\mathbf{x}_{t};\mu_{VE}(t)\mathbf{x}_{0},\sigma_{VE}^{2}(t)\mathbf{I})$ with $\mu_{VE}(t)\equiv 1$ and $\sigma_{VE}^{2}(t)=\sigma_{min}^{2}[(\frac{\sigma_{max}}{\sigma_{min}})^{2t}-1]$. Similarly, VPSDE takes $\mathbf{f}(\mathbf{x}_{t},t)=-\frac{1}{2}\beta(t)\mathbf{x}_{t}$ and $g(t)=\sqrt{\beta(t)}$, where $\beta(t)=\beta_{min}+t(\beta_{max}-\beta_{min})$; and its transition probability falls into a Gaussian distribution of $p_{0t}(\mathbf{x}_{t}\vert\mathbf{x}_{0})=\mathcal{N}(\mathbf{x}_{t};\mu_{VP}(t)\mathbf{x}_{0},\sigma_{VP}^{2}\mathbf{I})$ with $\mu_{VP}(t)=e^{-\frac{1}{2}\int_{0}^{t}\beta(s)\diff s}$ and $\sigma_{VP}^{2}(t)=1-e^{-\int_{0}^{t}\beta(s)\diff s}$.

\begin{figure*}[t]
\centering
	\begin{subfigure}{0.32\linewidth}
	\centering
		\includegraphics[width=\linewidth]{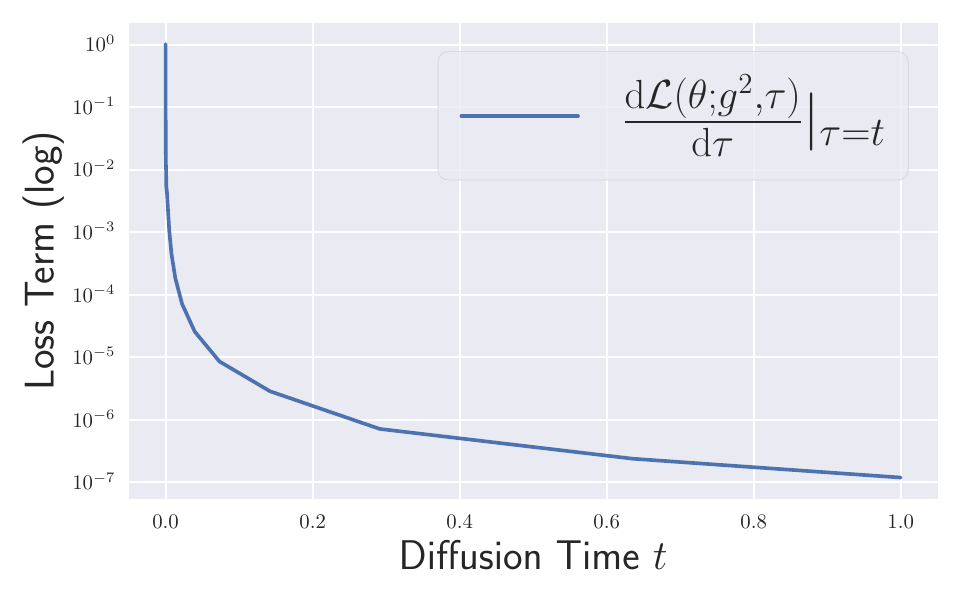}
		\subcaption{Integrand by Time}
	\end{subfigure}
	\begin{subfigure}{0.32\linewidth}
	\centering
		\includegraphics[width=\linewidth]{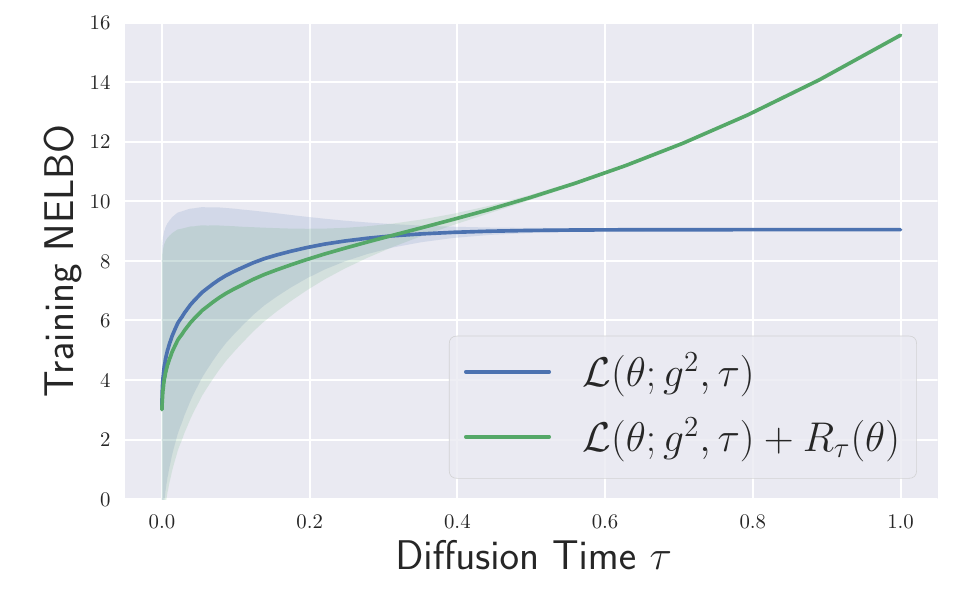}
		\subcaption{Variational Bound Truncated at $\tau$}
	\end{subfigure}
	\begin{subfigure}{0.32\linewidth}
		\includegraphics[width=\linewidth]{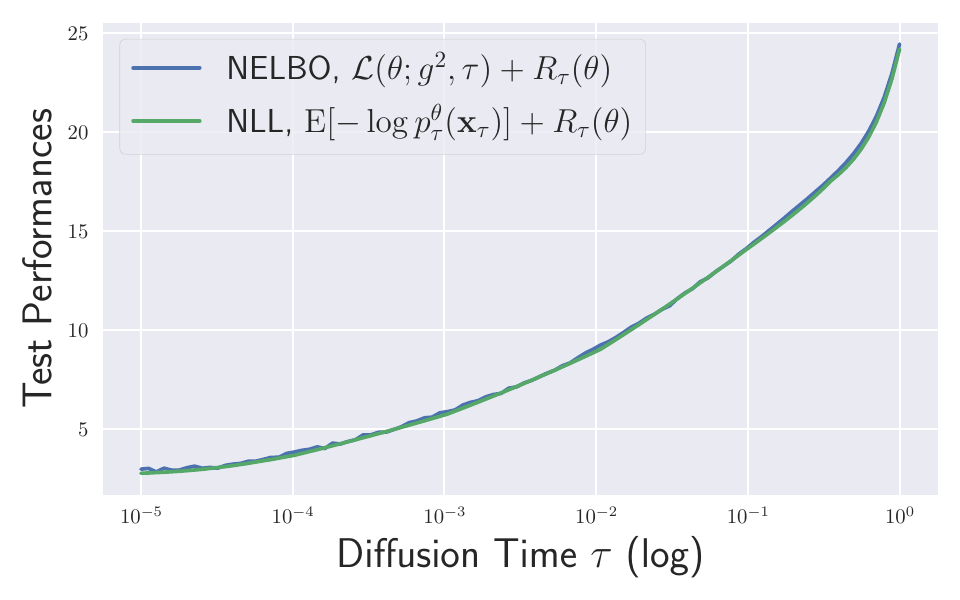}
		\subcaption{Test Performance by Log-Time}
	\end{subfigure}
	\caption{The contribution of diffusion time on the variational bound experimented on CIFAR-10 with DDPM++ (VP, NLL) \cite{song2021maximum}. (a) The integrand of the variational bound is extremely imbalanced on $[\epsilon,T]$. (b) The truncated variational bound only changes near $\tau\approx 0$. (c) The truncation hyperparameter ($\epsilon$) is a significant factor for performances.}
	\label{fig:nelbo}
\end{figure*}

Recently, \citet{kim2022maximum} categorize VESDE and VPSDE as a family of linear diffusions that has the SDE of
\begin{align}\label{eq:forward_linear_sde}
\diff\mathbf{x}_{t}=-\frac{1}{2}\beta(t)\mathbf{x}_{t}\diff t+g(t)\diff\mathbf{w}_{t},
\end{align}
where $\beta(t)$ and $g(t)$ are generic $t$-functions. Under the linear diffusions, we derive the transition probability to follow a Gaussian distribution $p_{0t}(\mathbf{x}_{t}\vert\mathbf{x}_{0})=\mathcal{N}(\mathbf{x}_{t};\mu(t)\mathbf{x}_{0},\sigma^{2}(t)\mathbf{I})$ for certain $\mu(t)$ and $\sigma(t)$ depending on $\beta(t)$ and $g(t)$, respectively (see Eq. \eqref{eq:appendix_transition_probability} of Appendix \ref{sec:transition_probability}). We emphasize that the suggested Soft Truncation is applicable for any SDE of Eq. \eqref{eq:forward_sde}, but we limit our focus to the family of linear SDEs of Eq. \eqref{eq:forward_linear_sde}, particularly VESDE and VPSDE among linear SDEs, to maintain the simplicity. With such a Gaussian transtion probability, the denoising NCSN loss with a linear SDE is equivalent to
\begin{align*}
\frac{1}{2}\int_{0}^{T}\frac{\lambda(t)}{\sigma^{2}(t)}\mathbb{E}_{\mathbf{x}_{0},\bm{\epsilon}}\big[\Vert\bm{\epsilon}_{\bm{\theta}}(\mu(t)\mathbf{x}_{0}+\sigma(t)\bm{\epsilon},t)-\bm{\epsilon}\Vert_{2}^{2}\big]\diff t,
\end{align*}
if $\bm{\epsilon}_{\bm{\theta}}(\mu(t)\mathbf{x}_{0}+\sigma(t)\bm{\epsilon},t)=-\sigma(t)\mathbf{s}_{\bm{\theta}}(\mu(t)\mathbf{x}_{0}+\sigma(t)\bm{\epsilon},t)$, where $\bm{\epsilon}\sim\mathcal{N}(0,\mathbf{I})$ is a random perturbation, and $\bm{\epsilon}_{\bm{\theta}}$ is the neural network that predicts $\bm{\epsilon}$. This is the (continuous) DDPM loss \cite{song2020score}, and the equivalence of the two losses provides a unified view of NCSN and DDPM. Hence, NCSN and DDPM are exchangeable for each other, and we take the NCSN loss as a default form of a diffusion loss throughout the paper.

The NCSN loss training is connected to the likelihood training in \citet{song2021maximum} by
\begin{align}\label{eq:variational_bound_of_song}
\mathbb{E}_{\mathbf{x}_{0}}[-\log{p_{0}^{\bm{\theta}}(\mathbf{x}_{0})}]\le\mathcal{L}_{NCSN}(\bm{\theta};g^{2}),
\end{align}
when the weighting function is the square of the diffusion term as $\lambda(t)=g^{2}(t)$, called the likelihood weighting. 

\section{Training and Evaluation of Diffusion Models in Practice}\label{sec:practice}

\subsection{The Need of Truncation}\label{sec:diverging}

In the family of linear SDEs, the gradient of the log transition probability satisfies $\nabla\log{p_{0t}(\mathbf{x}_{t}\vert\mathbf{x}_{0})}=-\frac{\mathbf{x}_{t}-\mu(t)\mathbf{x}_{0}}{\sigma^{2}(t)}=-\frac{\mathbf{z}}{\sigma(t)}$, where $\mathbf{x}_{t}$ is given to $\mu(t)\mathbf{x}_{0}+\sigma(t)\mathbf{z}$ with $\mathbf{z}\sim\mathcal{N}(0,\mathbf{I})$. The denominator of $\sigma(t)$ converges to zero as $t\rightarrow 0$, which leads $\Vert\mathbf{s}_{\bm{\theta}}(\mathbf{x}_{t},t)-\nabla\log{p_{0t}(\mathbf{x}_{t}\vert\mathbf{x}_{0})}\Vert_{2}$ to diverge as $t\rightarrow 0$, as illustrated in Figure \ref{fig:nelbo}-(a), see Appendix \ref{sec:score_fail} for details. Therefore, the Monte-Carlo estimation of the NCSN loss is under high variance, which prevents stable training of the score network. In practice, therefore, previous research truncates the diffusion time range to $[\tau,T]$, with a positive truncation hyperparameter, $\tau=\epsilon>0$.

\subsection{Variational Bound With Positive Truncation}

For the analysis for density estimation in Section \ref{sec:universal}, this section derives the variational bound of the log-likelihood when a diffusion model has a positive truncation because Inequality \eqref{eq:variational_bound_of_song} holds only with zero truncation ($\tau=0$). Lemma \ref{lemma:1} provides a generalization of Inequality \eqref{eq:variational_bound_of_song}, proved by applying the data processing inequality \cite{gerchinovitz2020fano} and the Girsanov theorem \cite{pavon1991free,vargas2021solving,song2021maximum}.
\begin{lemma}\label{lemma:1}
For any $\tau\in[0,T]$,
\begin{align}\label{eq:perturbed_nelbo}
\mathbb{E}_{\mathbf{x}_{\tau}}\big[-\log{p_{\tau}^{\bm{\theta}}(\mathbf{x}_{\tau})}\big]\le\mathcal{L}(\bm{\theta};g^{2},\tau)
\end{align}
holds, where $\mathcal{L}(\bm{\theta};g^{2},\tau)=\frac{1}{2}\int_{\tau}^{T}g^{2}(t)\mathbb{E}_{\mathbf{x}_{0},\mathbf{x}_{t}}\big[\Vert\mathbf{s}_{\bm{\theta}}(\mathbf{x}_{t},t)-\nabla\log{p_{0t}(\mathbf{x}_{t}\vert\mathbf{x}_{0})}\Vert_{2}^{2}\big]\diff t$, up to a constant, see Eq. \eqref{eq:appendix_denoising_loss}. 
\end{lemma}

Lemma \ref{lemma:1} is a generalization of Inequality \eqref{eq:variational_bound_of_song} in that Inequality \eqref{eq:perturbed_nelbo} collapses to Inequality \eqref{eq:variational_bound_of_song} under the zero truncation: $\mathcal{L}_{NCSN}(\bm{\theta};\lambda)=\mathcal{L}(\bm{\theta};\lambda,\tau=0)$. If the time range is truncated to $[\tau,T]$ for $\tau\in[0,T]$, then from the variational inference, the log-likelihood becomes 
\begin{align}\label{eq:VI}
\mathbb{E}_{\mathbf{x}_{0}}\big[-\log{p_{0}^{\bm{\theta}}(\mathbf{x}_{0})}\big]\le\mathbb{E}_{\mathbf{x}_{\tau}}\big[-\log{p_{\tau}^{\bm{\theta}}(\mathbf{x}_{\tau})}\big]+R_{\tau}(\bm{\theta})
\end{align}
where
\begin{align*}
R_{\tau}(\bm{\theta})=\mathbb{E}_{\mathbf{x}_{0}}\bigg[\int p_{0\tau}(\mathbf{x}_{\tau}\vert\mathbf{x}_{0}) \log{\frac{p_{0\tau}(\mathbf{x}_{\tau}\vert\mathbf{x}_{0})}{p_{\bm{\theta}}(\mathbf{x}_{0}\vert\mathbf{x}_{\tau})}} \diff\mathbf{x}_{\tau}\bigg],
\end{align*}
with $p_{\bm{\theta}}(\mathbf{x}_{0}\vert\mathbf{x}_{\tau})$ being the probability distribution of $\mathbf{x}_{0}$ given $\mathbf{x}_{\tau}$ and the score estimation with $\mathbf{s}_{\bm{\theta}}$ at $\tau$. For any $\tau$, we apply Lemma \ref{lemma:1} to the right-hand-side of Inequality \eqref{eq:VI} to obtain the variational bound of the log-likelihood as 
\begin{align}\label{eq:nelbo_st}
\mathbb{E}_{\mathbf{x}_{0}}\big[-\log{p_{0}^{\bm{\theta}}(\mathbf{x}_{0})}\big]\le \mathcal{L}(\bm{\theta};g^{2},\tau) +R_{\tau}(\bm{\theta}).
\end{align}

\begin{figure}[t]
\vskip -0.2in
\centering
\includegraphics[width=\linewidth]{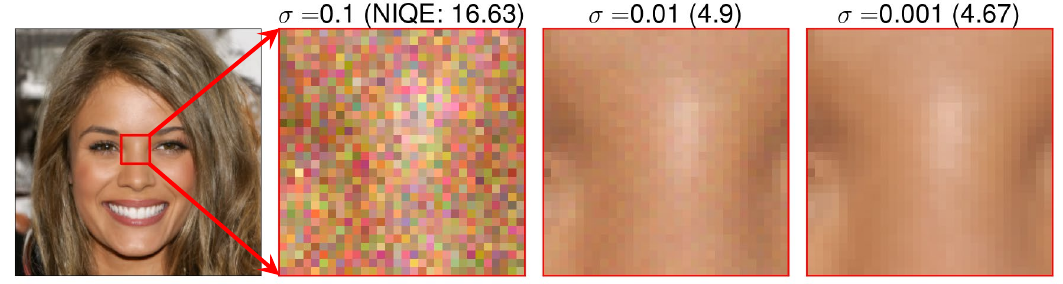}
\caption{The truncation time is key to enhance the microscopic sample quality.}
\label{fig:truncation_variance}
\end{figure}

\subsection{A Universal Phenomenon in Diffusion Training: Extremely Imbalanced Loss}\label{sec:universal}

To avoid the diverging issue introduced in Section \ref{sec:diverging}, previous works in VPSDE \cite{song2021maximum, vahdat2021score} modify the loss by truncating the integration on $[\tau,T]$ with a fixed hyperparameter $\tau=\epsilon>0$ so that the score network does not estimate the score function on $[0,\epsilon)$. Analogously, previous works in VESDE \cite{song2020score, chen2021likelihood} approximate $\sigma_{VE}^{2}(t)\approx\sigma_{min}^{2}(\frac{\sigma_{max}}{\sigma_{min}})^{2t}$ to truncate the minimum variance of the transition probability to be $\sigma_{min}^{2}$. Truncating diffusion time at $\epsilon$ in VPSDE is equivalent to truncating diffusion variance ($\sigma_{min}^{2}$) in VESDE, so these two truncations on VE/VP SDEs have the identical effect on bounding the diffusion loss. Henceforth, this paper discusses the argument of truncating diffusion time (VPSDE) and diffusion variance (VESDE) exchangeably. 

Figure \ref{fig:nelbo} illustrates the significance of truncation in the training of diffusion models. With the truncation of strictly positive $\epsilon=10^{-5}$, Figure \ref{fig:nelbo}-(a) shows that the integrand of $\mathcal{L}(\bm{\theta};g^{2},\tau)$ in the Bits-Per-Dimension (BPD) scale is still extremely imbalanced. It turns out that such extreme imbalance appears to be a universal phenomenon in training a diffusion model, and this phenomenon lasts from the beginning to the end of training.

\begin{figure}[t]
	\centering
		\includegraphics[width=\linewidth]{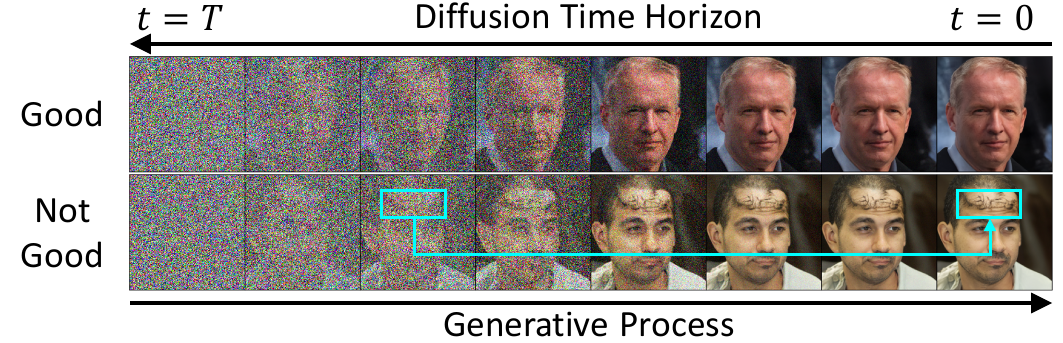}
	\caption{Illustration of the generative process trained on CelebA-HQ $256\times 256$ with NCSN++ (VE) \cite{song2020score}. The score precision on large diffusion time is key to construct the realistic overall sample quality.}
	\label{fig:large_diffusion_time}
\end{figure}

\begin{figure}[t]
\centering
\includegraphics[width=0.65\linewidth]{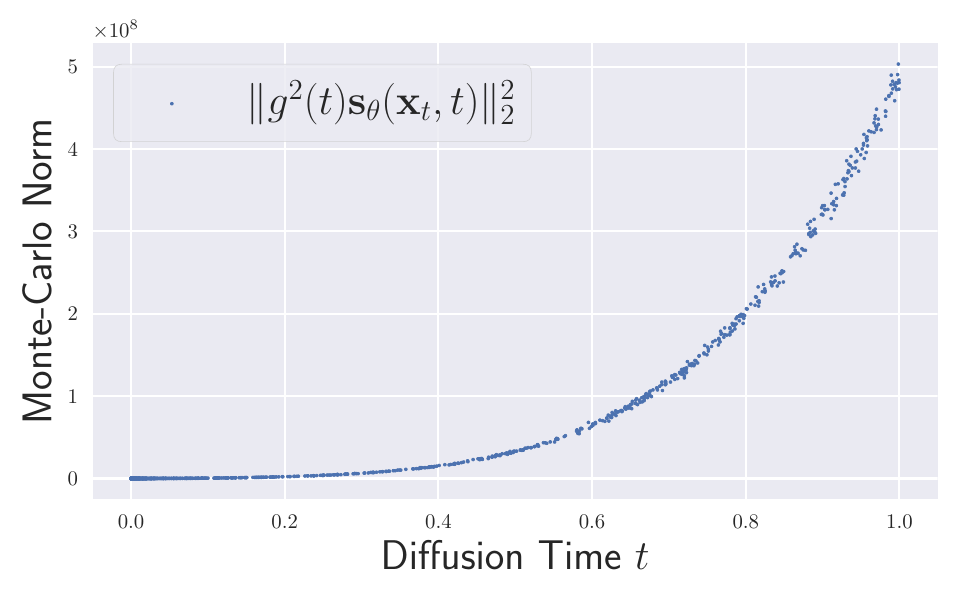}
\caption{Norm of reverse drift of generative process, trained on CIFAR-10 with DDPM++ (VP, FID) \cite{song2020score}.}
\label{fig:monte_carlo_norm}
\end{figure}

Figure \ref{fig:nelbo}-(b) with the green line presents the variational bound of the log-likelihood (right-hand-side of Inequality \eqref{eq:nelbo_st}) on the $y$-axis, and it indicates that the variational bound is sharply decreasing near the small diffusion time. Therefore, if $\epsilon$ is insufficiently small, the variational bound is not tight to the log-likelihood, and a diffusion model fails at MLE training. In addition, Figure \ref{fig:truncation_variance} indicates that insufficiently small $\epsilon$ (or $\sigma_{min}$) would also harm the microscopic sample quality. From these observations, $\epsilon$ becomes a significant hyperparameter that needs to be selected carefully.

\subsection{Effect of Truncation on Model Evaluation}

Figure \ref{fig:nelbo}-(c) reports test performances on density estimation. Figure \ref{fig:nelbo}-(c) illustrates that both Negative Evidence Lower Bound (NELBO) and NLL monotonically decrease by lowering $\epsilon$ because NELBO is largely contributed by small diffusion time at test time as well as training time. Therefore, it could be a common strategy to reduce $\epsilon$ as much as possible to reduce test NELBO/NLL. 

\begin{figure}[t]
		\centering
		\includegraphics[width=\linewidth]{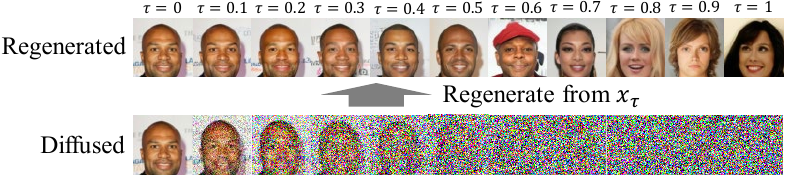}
		\caption{Regenerated samples synthesized by solving the probability flow ODE on $[\epsilon,\tau]$ backwards with the initial point of $\mathbf{x}_{\tau}=\mu(\tau)\mathbf{x}_{0}+\sigma(\tau)\mathbf{z}$ for $\mathbf{z}\sim\mathcal{N}(0,\mathbf{I})$, trained on CelebA with DDPM++ (VP, FID) \cite{song2020score}.}
		\label{fig:regenerated}
		\vskip -0.1in
\end{figure}

\begin{wraptable}{r}{0.22\textwidth}
\vskip -0.13in
\centering
	\caption{Ablation on $\sigma_{min}$.}
	\label{tab:cifar10_ablation_ncsn}
	\tiny
	\begin{tabular}{ccc}
		\toprule
		\multirow{2}{*}{$\sigma_{min}$} & \multicolumn{2}{c}{CIFAR-10}\\
		& NLL ($\downarrow$) & FID-10k ($\downarrow$) \\\midrule
		$10^{-2}$ & 4.95 & 6.95 \\
		$10^{-3}$ & 3.04 & 7.04 \\
		$10^{-4}$ & 2.99 & 8.17 \\
		$10^{-5}$ & 2.97 & 8.29 \\
		\bottomrule
	\end{tabular}
\end{wraptable}
On the contrary, there is a counter effect on FID for $\epsilon$. Table \ref{tab:cifar10_ablation_ncsn}, trained on CIFAR-10 \cite{krizhevsky2009learning} with NCSN++ \cite{song2020score}, presents that FID is worsened as we take smaller hyperparameter $\sigma_{min}$ for the training. It is the range of small diffusion time that significantly contributes to the variational bound in the blue line of Figure \ref{fig:nelbo}-(b), so the score network with a small truncation hyperparameter, $\sigma_{min}$ or $\epsilon$, remains unoptimized on large diffusion time. In the lens of Figure \ref{fig:truncation_variance}, therefore, the inconsistent result of Table \ref{tab:cifar10_ablation_ncsn} is attributed to the inaccurate score on large diffusion time.

\begin{wraptable}{r}{0.26\textwidth}
\vskip -0.1in
\centering
	\caption{FID-10k scores.}
	\label{tab:cifar10_ablation_ncsn_truncation}
	\tiny
	\begin{tabular}{llll}
		\toprule
		$\sigma_{min}$ & $10^{-3}$ & $10^{-4}$ & $10^{-5}$ \\\midrule
		$\sigma_{tr}=1$ & 6.84 & 8.04 & 8.29 \\
		\bottomrule
	\end{tabular}
\end{wraptable}
We design an experiment to validate the above argument in Table \ref{tab:cifar10_ablation_ncsn_truncation}. This experiment utilizes two types of score networks: 1) three alternative networks (As) with diverse $\sigma_{min}\in\{10^{-3},10^{-4},10^{-5}\}$ trained in Table \ref{tab:cifar10_ablation_ncsn} experiment; 2) a network (B) with $\sigma_{min}=10^{-5}$ (the last row of Table \ref{tab:cifar10_ablation_ncsn}). With these score networks, we denoise the noises by either one of the first-typed As from $\sigma_{max}$ to a common and fixed $\sigma_{tr}(=1)$, and we use B to further denoise from $\sigma_{tr}$ to $\sigma_{min}=10^{-5}$. This further denoising step with model B enables us to compare the score accuracy on large diffusion time for models with diverse truncation hyperparameters in a fair resolution setting. Table \ref{tab:cifar10_ablation_ncsn_truncation} presents that the model with $\sigma_{min}=10^{-3}$ has the best FID, implying that the training with too small truncation would harm the sample fidelity.

\begin{figure*}[t]
\centering
	\begin{subfigure}{0.32\linewidth}
	\includegraphics[width=\linewidth]{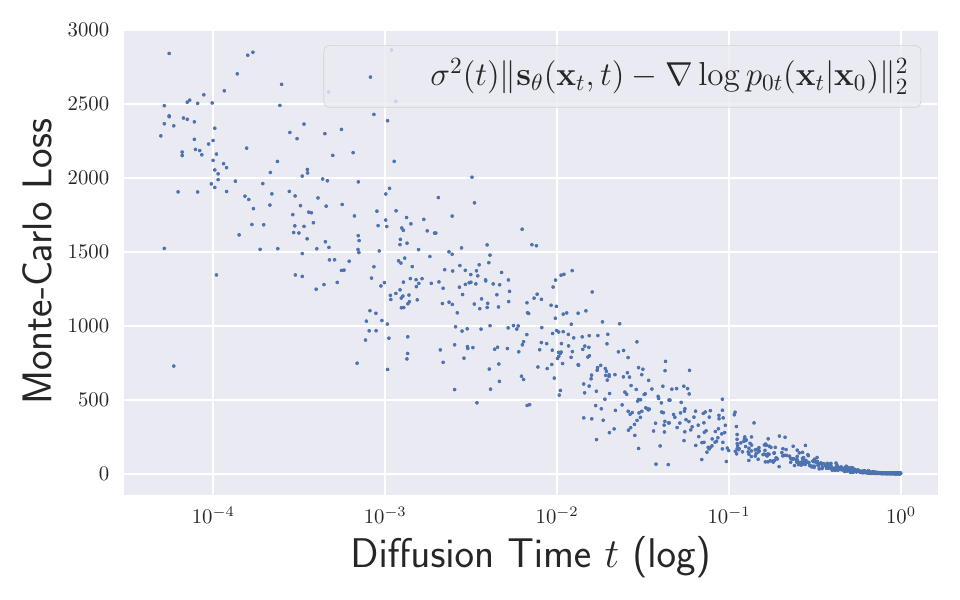}
	\subcaption{Monte-Carlo Loss}
	\end{subfigure}
	\begin{subfigure}{0.32\linewidth}
	\includegraphics[width=\linewidth]{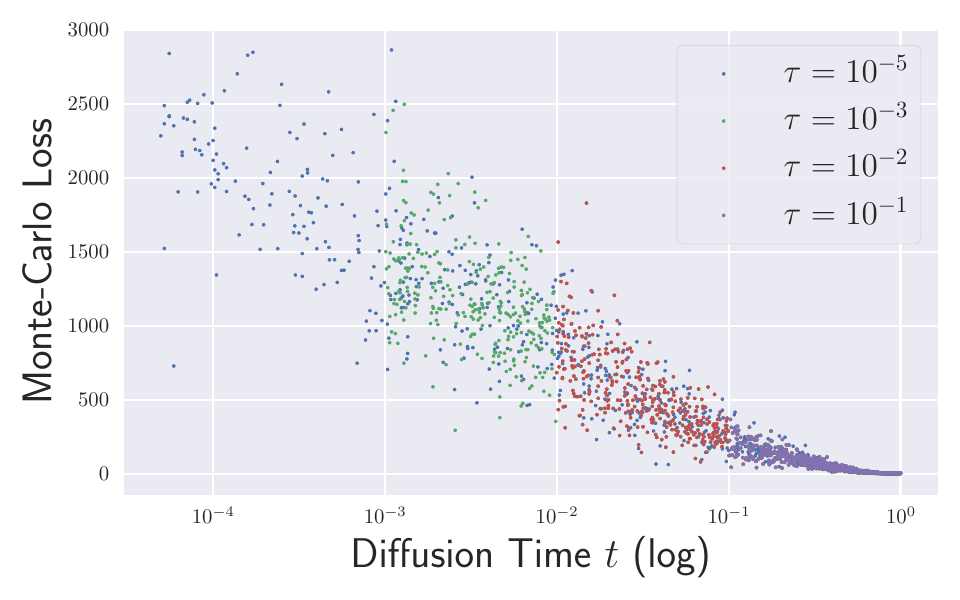}
	\subcaption{Soft Truncation}
	\end{subfigure}
	\begin{subfigure}{0.32\linewidth}
	\includegraphics[width=\linewidth]{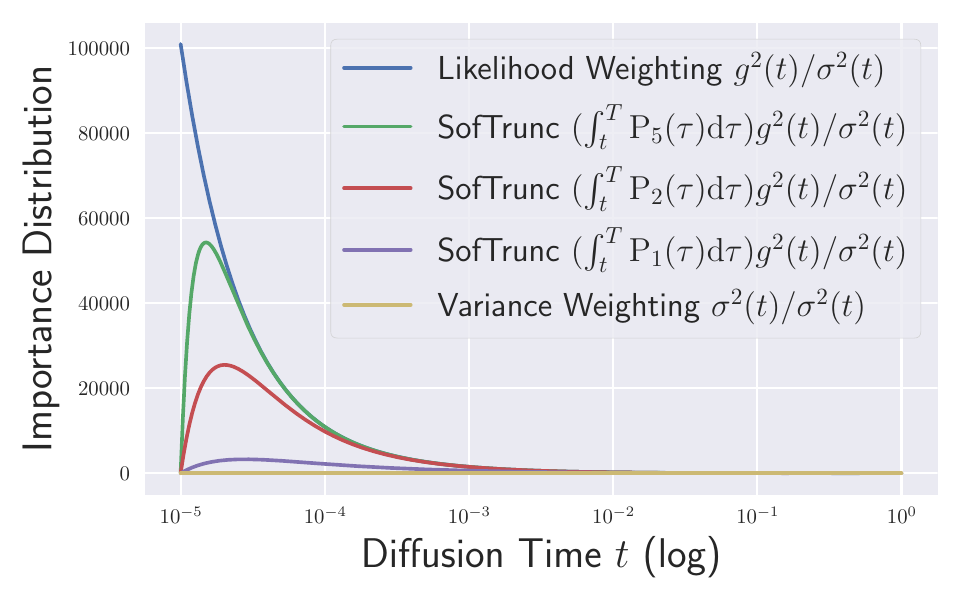}
	\subcaption{Importance Distribution}
	\end{subfigure}
	\caption{The experimental result trained on CIFAR-10 with DDPM++ (VP, NLL) \cite{song2021maximum}. (a) The Monte-Carlo loss for each diffusion time, $\sigma^{2}(t)\Vert\mathbf{s}_{\bm{\theta}}(\mathbf{x}_{t},t)-\nabla\log{p_{0t}(\mathbf{x}_{t}\vert\mathbf{x}_{0})}\Vert_{2}^{2}$. (b) The Monte-Carlo loss for each diffusion time on variaous truncation time. (c) The importance distribution for various truncation distributions.}
	\label{fig:monte_carlo_loss}
\end{figure*}

Specifically, Figure \ref{fig:monte_carlo_norm} shows the Euclidean norm of $g^{2}(t)\mathbf{s}_{\bm{\theta}}(\mathbf{x}_{t},t)$, where each dot represents for a Monte-Carlo sample from $p_{t}(\mathbf{x}_{t})$. Here, $g^{2}(t)\mathbf{s}_{\bm{\theta}}(\mathbf{x}_{t},t)$ is in the reverse drift term of the generative process, $\diff\mathbf{x}_{t}^{\bm{\theta}}=[\mathbf{f}(\mathbf{x}_{t}^{\bm{\theta}},t)-g^{2}(t)\mathbf{s}_{\bm{\theta}}(\mathbf{x}_{t}^{\bm{\theta}},t)]\diff\bar{t}+g(t)\diff\mathbf{\bar{w}}_{t}$. Figure \ref{fig:monte_carlo_norm} illustrates that it is the large diffusion time that dominates the sampling process. Therefore, a precise score network on large diffusion time is particularly important in sample generation. 

The imprecise score mainly affects the global sample context, as the denoising on small diffusion time only crafts the image in its microscopic details, illustrated in Figures \ref{fig:large_diffusion_time} and \ref{fig:regenerated}. Figure \ref{fig:large_diffusion_time} shows how the global fidelity is damaged: a man synthesized in the second row has unrealistic curly hair on his forehead, constructed on the large diffusion time. Figure \ref{fig:regenerated} deepens the importance of learning a good score estimation on large diffusion time. It shows the regenerated samples by solving the generative process time reversely, starting from $\mathbf{x}_{\tau}$ \cite{meng2021sdedit}.

\section{Soft Truncation: A Training Technique for a Diffusion Model}

As in Section \ref{sec:practice}, the choice of $\epsilon$ is crucial for training and evaluation, but it is computationally infeasible to search for the optimal $\epsilon$. Therefore, we introduce a training technique that predominantly mediates the need for $\epsilon$-search by softening the fixed truncation hyperparameter into a truncation random variable so that the truncation time varies in every optimization step. Our approach successfully trains the score network on large diffusion time without sacrificing NLL. We explain the Monte-Carlo estimation of the variational bound in Section \ref{sec:monte_carlo}, which is the common practice of previous research but explained to emphasize how simple (though effective) Soft Truncation is, and we subsequently introduce Soft Truncation in Section \ref{sec:softrunc}.

\begin{figure*}[t]
\centering
\includegraphics[width=\linewidth]{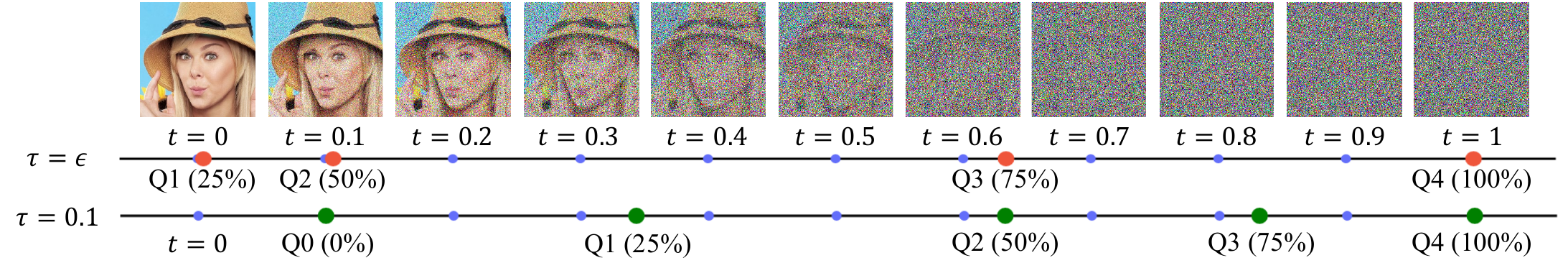}
\caption{Quartile of importance weighted Monte-Carlo time of VPSDE. Red dots represent Q1/Q2/Q3/Q4 quantiles when truncated at $\tau=\epsilon=10^{-5}$. About $25\%$ and $50\%$ of Monte-Carlo time are located in $[\epsilon,5\times 10^{-3}]$ and $[\epsilon,0.106]$, respectively. Green dots represent Q0-Q5 quantiles when truncated at $\tau=0.1$. Importance weighted Monte-Carlo time with $\tau=0.1$ is distributed much more balanced compared to the truncation at $\tau=\epsilon$.}
\label{fig:importance_sampling}
\end{figure*}

\subsection{Monte-Carlo Estimation of Truncated Variational Bound with Importance Sampling}\label{sec:monte_carlo}

In this section, we fix a truncation hyperparameter to be $\tau=\epsilon$. For every batch $\{\mathbf{x}_{0}^{(b)}\}_{b=1}^{B}$, the Monte-Carlo estimation of the variational bound in Inequality \eqref{eq:perturbed_nelbo} is $\mathcal{L}(\bm{\theta};g^{2},\epsilon)\approx\mathcal{\hat{L}}(\bm{\theta};g^{2},\epsilon)=\frac{1}{2B}\sum_{b=1}^{B}g^{2}(t^{(b)})\Vert\mathbf{s}_{\bm{\theta}}(\mathbf{x}_{t^{(b)}},t^{(b)})-\nabla\log{p_{0t^{(b)}}(\mathbf{x}_{t^{(b)}}\vert\mathbf{x}_{0})}\Vert_{2}^{2}$, up to a constant irrelevant to $\bm{\theta}$, where $\mathbf{x}_{t^{(b)}}=\mu(t^{(b)})\mathbf{x}_{0}+\sigma(t^{(b)})\bm{\epsilon}^{(b)}$ with $\{t^{(b)}\}_{b=1}^{B}$ and $\{\bm{\epsilon}^{(b)}\}_{b=1}^{B}$ be the corresponding Monte-Carlo samples from $t^{(b)}\sim [\epsilon,T]$ and $\bm{\epsilon}^{(b)}\sim\mathcal{N}(0,\mathbf{I})$, respectively. Note that this Monte-Carlo estimation is tractably computed from the analytic form of the transition probability as $\nabla\log{p_{0t^{(b)}}(\mathbf{x}_{t^{(b)}}\vert\mathbf{x}_{0})}=\frac{\bm{\epsilon}^{(b)}}{\sigma(t^{(b)})}$ under linear SDEs.

Previous works \cite{song2021maximum,huang2021variational} apply the importance sampling with the importance distribution of $p_{iw}(t)=\frac{g^{2}(t)/\sigma^{2}(t)}{Z_{\epsilon}}1_{[\epsilon,T]}(t)$, where $Z_{\epsilon}=\int_{\epsilon}^{T}\frac{g^{2}(t)}{\sigma^{2}(t)}\diff t$. It is well known \cite{goodfellow2016deep} that the Monte-Carlo variance of $\hat{\mathcal{L}}$ is minimum if the importance distribution is $p_{iw}^{*}(t)\propto g^{2}(t)L(t)$ with $L(t)=\mathbb{E}_{\mathbf{x}_{0},\mathbf{x}_{t}}[\Vert\mathbf{s}_{\bm{\theta}}(\mathbf{x}_{t},t)-\nabla\log{p_{0t}(\mathbf{x}_{t}\vert\mathbf{x}_{0})}\Vert_{2}^{2}]$, but sampling of Monte-Carlo diffusion time from $p_{iw}^{*}(t)$ at every training iteration would incur $2\times$ slower training speed, at least, because the importance sampling requires the score evaluation. Therefore, previous research approximates $L(t)$ by $\hat{L}(t)=\mathbb{E}_{\mathbf{x}_{0},\mathbf{x}_{t}}[\Vert\nabla\log{p_{0t}(\mathbf{x}_{t}\vert\mathbf{x}_{0})}\Vert_{2}^{2}]\propto 1/\sigma^{2}(t)$, and $p_{iw}(t)$ becomes the approximate importance weight. This approximation, at the expense of bias, is cheap because the closed-form of the inverse Cumulative Distribution Function (CDF) is known. Unless we train the variance directly as in \citet{kingma2021variational}, we believe $p_{iw}(t)$ is the maximally efficient sampler as long as the training speed matters. The importance weighted Monte-Carlo estimation becomes
\begin{align}
&\mathcal{L}(\bm{\theta};g^{2},\epsilon)\nonumber\\
&=\frac{Z_{\epsilon}}{2}\int_{\epsilon}^{T}p_{iw}(t)\sigma^{2}(t)\mathbb{E}\big[ \Vert \mathbf{s}_{\bm{\theta}}(\mathbf{x}_{t},t)-\nabla\log{p_{0t}(\mathbf{x}_{t}\vert\mathbf{x}_{0})} \Vert_{2}^{2} \big]\diff t\nonumber\\
&\approx\frac{Z_{\epsilon}}{2B}\sum_{b=1}^{B}\sigma^{2}(t_{iw}^{(b)})\bigg\Vert\mathbf{s}_{\bm{\theta}}\big(\mathbf{x}_{t_{iw}^{(b)}},t_{iw}^{(b)}\big)-\frac{\bm{\epsilon}^{(b)}}{\sigma(t_{iw}^{(b)})}\bigg\Vert_{2}^{2}\nonumber\\
&:=\mathcal{\hat{L}}_{iw}(\bm{\theta};g^{2},\epsilon),\label{eq:iw}
\end{align}
where $\{t_{iw}^{(b)}\}_{b=1}^{B}$ is the Monte-Carlo sample from the importance distribution, i.e., $t_{iw}^{(b)}\sim p_{iw}(t)\propto\frac{g^{2}(t)}{\sigma^{2}(t)}$. 

The importance sampling is advantageous in both NLL and FID \cite{song2021maximum} over the uniform sampling, as the importance sampling significantly reduces the estimation variance. Figure \ref{fig:monte_carlo_loss}-(a) illustrates the sample-by-sample loss, and the importance sampling significantly mitigates the loss scale by diffusion time compared to the scale in Figure \ref{fig:nelbo}-(a). However, the importance distribution satisfies $p_{iw}(t)\rightarrow\infty$ as $t\rightarrow 0$ in Figure \ref{fig:monte_carlo_loss}-(c) blue line, and most of the importance weighted Monte-Carlo time is concentrated at $t\approx \epsilon$ in Figure \ref{fig:importance_sampling}. Hence, the use of the importance sampling has a trade-off between the reduced variance (Figure \ref{fig:monte_carlo_loss}-(a)) versus the over-sampled diffusion time near $t\approx \epsilon$ (Figure \ref{fig:importance_sampling}). Regardless of whether to use the importance sampling or not, therefore, the inaccurate score estimation on large diffusion time appears sampling-strategic-independently, and solving this pre-matured score estimation becomes a nontrivial task. 

Instead of the likelihood weighting, previous works \cite{ho2020denoising, nichol2021improved, dhariwal2021diffusion} train the denoising score loss with the variance weighting, $\lambda(t)=\sigma^{2}(t)$. With this weighting, the importance distribution becomes the uniform distribution, $p_{iw}(t)=\frac{\lambda(t)}{\sigma^{2}(t)}\equiv 1$, so it significantly alleviates the trade-off of using the likelihood weighting. However, the variance weighting favors FID at the sacrifice in NLL because the loss is no longer the variational bound of the log-likelihood. In contrast, the training with the likelihood weighting is leaning towards NLL than FID, so Soft Truncation is for the \textit{balanced} NLL and FID, using the likelihood weighting.

\subsection{Soft Truncation}\label{sec:softrunc}

Soft Truncation releases the truncation hyperparameter from a static variable to a random variable with a probability distribution of $\mathbb{P}(\tau)$. In every mini-batch update, Soft Truncation optimizes the diffusion model with $\mathcal{\hat{L}}_{iw}(\bm{\theta};g^{2},\tau)$ in Eq. \eqref{eq:iw} for a sampled $\tau\sim\mathbb{P}(\tau)$. In other words, for every batch $\{\mathbf{x}_{0}^{(b)}\}_{b=1}^{B}$, Soft Truncation optimizes the Monte-Carlo loss
\begin{align*}
\mathcal{\hat{L}}_{iw}(\bm{\theta};\lambda,\tau)=\frac{Z_{\tau}}{2B}\sum_{b=1}^{B}\sigma^{2}(t_{iw}^{(b)})\bigg\Vert\mathbf{s}_{\bm{\theta}}\big(\mathbf{x}_{t_{iw}^{(b)}},t_{iw}^{(b)}\big)-\frac{\bm{\epsilon}^{(b)}}{\sigma(t_{iw}^{(b)})}\bigg\Vert_{2}^{2}
\end{align*}
with $\{t_{iw}^{(b)}\}_{b=1}^{B}$ sampled from the importance distribution of $p_{iw,\tau}(t)=\frac{g^{2}(t)/\sigma^{2}(t)}{Z_{\tau}}1_{[\tau,T]}(t)$, where $Z_{\tau}:=\int_{\tau}^{T}\frac{g^{2}(t)}{\sigma^{2}(t)}\diff t$. 

Soft Truncation resolves the oversampling issue of diffusion time near $t\approx \epsilon$, meaning that Monte-Carlo time is not concentrated on $\epsilon$ anymore. Figure \ref{fig:importance_sampling} illustrates the quantiles of importance weighted Monte-Carlo time with Soft Truncation under $\tau=\epsilon$ and $\tau=0.1$. The score network is trained more equally on diffusion time when $\tau=0.1$, and as a consequence, the loss imbalance issue in each training step is also alleviated as in Figure \ref{fig:monte_carlo_loss}-(b) with purple dots. This limited range of $[\tau,T]$ provides a chance to learn a score network more balanced on diffusion time. As $\tau$ is softened, such truncation level will vary by mini-batch updates: see the loss scales change by blue, green, red, and purple dots according to various $\tau$s in Figure \ref{fig:monte_carlo_loss}-(b). Eventually, the softened $\tau$ will provide a fair chance to learn the score network from small as well as large diffusion time.

\subsection{Soft Truncation Equals to A Diffusion Model With A General Weight}\label{sec:soft_truncation_general_weight}

In the original diffusion model, the loss estimation, $\mathcal{\hat{L}}(\bm{\theta};g^{2},\epsilon)$, is just a batch-wise approximation of a population loss, $\mathcal{L}(\bm{\theta};g^{2},\epsilon)$. However, the target population loss of Soft Truncation, $\mathcal{L}(\bm{\theta};g^{2},\tau)$, is depending on a random variable $\tau$, so the target population loss itself becomes a random variable. Therefore, we derive the \textit{expected} Soft Truncation loss to reveal the connection to the original diffusion model:
\begin{align*}
&\mathcal{L}_{ST}(\bm{\theta};g^{2},\mathbb{P}):=\mathbb{E}_{\mathbb{P}(\tau)}\big[\mathcal{L}(\bm{\theta};g^{2},\tau)\big]\\
&\quad=\frac{1}{2}\int_{\epsilon}^{T}\mathbb{P}(\tau)\int_{\tau}^{T}g^{2}(t)\mathbb{E}\big[\Vert\mathbf{s}_{\bm{\theta}}-\nabla\log{p_{0t}}\Vert_{2}^{2}\big]\diff t\diff\tau\\
&\quad=\frac{1}{2}\int_{\epsilon}^{T}g^{2}_{\mathbb{P}}(t)\mathbb{E}\big[\Vert\mathbf{s}_{\bm{\theta}}-\nabla\log{p_{0t}}\Vert_{2}^{2}\big]\diff t,
\end{align*}
up to a constant, where $g^{2}_{\mathbb{P}}(t)=\big(\int_{0}^{t}\mathbb{P}(\tau)\diff\tau\big)g^{2}(t)$, by exchanging the orders of the integrations. Therefore, we conclude that Soft Truncation reduces to a diffusion model with a general weight of $g_{\mathbb{P}}^{2}(t)$, see Appendix \ref{sec:general_weight}:
\begin{align}\label{eq:general_weight}
\mathcal{L}_{ST}(\bm{\theta};g^{2},\mathbb{P})=\mathcal{L}(\bm{\theta};g^{2}_{\mathbb{P}},\epsilon).
\end{align}

\subsection{Soft Truncation is Maximum Perturbed Likelihood Estimation}\label{sec:MPLE}

As explained in Section \ref{sec:soft_truncation_general_weight}, Soft Truncation is a diffusion model with a general weight, in the expected sense. Reversely, this section analyzes a diffusion model with a general weight in view of Soft Truncation. Suppose we have a general weight $\lambda$. Theorem \ref{thm:1} implies that this general weighted diffusion loss, $\mathcal{L}(\bm{\theta};\lambda,\epsilon)$, is the variational bound of the perturbed KL divergence expected by $\mathbb{P}_{\lambda}(\tau)$. Theorem \ref{thm:1} collapses to Lemma \ref{lemma:1} if $\lambda(t)=c g^{2}(t)$ for any $c>0$\footnote{If $\lambda(t)=cg^{2}(t)$, the probability satisfies $\mathbb{P}([a,b])=1_{[a,b]}(\epsilon)$, which is a probability distribution of one mass at $\epsilon$.}. See Appendix \ref{sec:proof} for the detailed statement and proof.
\begin{theorem}\label{thm:1}
Suppose $\frac{\lambda(t)}{g^{2}(t)}$ is a nondecreasing and nonnegative absolutely continuous function on $[\epsilon,T]$ and zero on $[0,\epsilon)$. For the probability defined by
\begin{align*}
\mathbb{P}_{\lambda}([a,b])=\bigg[\int_{\text{max}(a,\epsilon)}^{b}\Big(\frac{\lambda(s)}{g^{2}(s)}\Big)'\diff s+\frac{\lambda(\epsilon)}{g^{2}(\epsilon)}1_{[a,b]}(\epsilon)\bigg]\bigg/Z,
\end{align*}
where $Z=\frac{\lambda(T)}{g^{2}(T)}$; up to a constant, the variational bound of the general weighted diffusion loss becomes
\begin{eqnarray*}
\lefteqn{\quad\mathbb{E}_{\mathbb{P}_{\lambda}(\tau)}\big[D_{KL}(p_{\tau}\Vert p_{\tau}^{\bm{\theta}})\big]}&\\
&&\le\frac{1}{2Z}\int_{\epsilon}^{T}\lambda(t)\mathbb{E}_{\mathbf{x}_{t}}\big[\Vert\mathbf{s}_{\bm{\theta}}(\mathbf{x}_{t},t)-\nabla\log{p_{t}(\mathbf{x}_{t})}\Vert_{2}^{2}\big]\diff t\\
&&=\frac{1}{Z}\mathcal{L}(\bm{\theta};\lambda,\epsilon)=\mathbb{E}_{\mathbb{P}_{\lambda}(\tau)}\big[\mathcal{L}(\bm{\theta};g^{2},\tau)\big].
\end{eqnarray*}
\end{theorem}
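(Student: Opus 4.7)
The plan is to reduce the theorem to a Fubini swap applied to a pointwise version of Lemma~\ref{lemma:1}. I would first rewrite Lemma~\ref{lemma:1} in its KL-divergence form (obtained by subtracting $H(p_\tau)$ from both sides and using the denoising score-matching identity to replace the conditional-score terms by the marginal score, exactly as in \citet{song2021maximum,huang2021variational}), giving
\begin{equation*}
D_{KL}(p_\tau\Vert p_\tau^{\bm{\theta}}) \le D_{KL}(p_T\Vert\pi) + \tfrac{1}{2}\int_\tau^T g^2(t)\,\mathbb{E}_{\mathbf{x}_t}\!\bigl[\Vert\mathbf{s}_{\bm{\theta}}(\mathbf{x}_t,t) - \nabla\log p_t(\mathbf{x}_t)\Vert_2^2\bigr]\diff t
\end{equation*}
for every $\tau\in[\epsilon,T]$. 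I would then take the $\mathbb{P}_\lambda$-expectation of both sides; because the right-hand integrand is nonnegative, the inequality is preserved and Tonelli's theorem applies.

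The key algebraic fact is an identity for the cumulative distribution function of $\mathbb{P}_\lambda$. By absolute continuity of $\lambda/g^2$ and the fundamental theorem of calculus,
\begin{equation*}
\mathbb{P}_\lambda([0,t]) \;=\; \frac{1}{Z}\!\left[\frac{\lambda(\epsilon)}{g^2(\epsilon)} + \int_\epsilon^t\!\Bigl(\frac{\lambda(s)}{g^2(s)}\Bigr)'\!\diff s\right] \;=\; \frac{\lambda(t)}{g^2(t)\,Z}
\end{equation*}
for every $t\in[\epsilon,T]$. Evaluating this at $t=T$ together with $Z=\lambda(T)/g^2(T)$ shows $\mathbb{P}_\lambda$ has total mass one, confirming it is a probability measure; the atom at $\epsilon$ is exactly what is required so that the CDF identity holds at $t=\epsilon$ when $\lambda(\epsilon)/g^2(\epsilon)>0$.

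Setting $h(t):=\mathbb{E}_{\mathbf{x}_t}[\Vert\mathbf{s}_{\bm{\theta}}-\nabla\log p_t\Vert_2^2]\ge 0$ and exchanging the order of integration, Tonelli gives
\begin{equation*}
\mathbb{E}_{\mathbb{P}_\lambda(\tau)}\!\left[\int_\tau^T g^2(t)h(t)\,\diff t\right] \;=\; \int_\epsilon^T g^2(t)h(t)\,\mathbb{P}_\lambda([0,t])\,\diff t \;=\; \frac{1}{Z}\int_\epsilon^T \lambda(t)\,h(t)\,\diff t,
\end{equation*}
where the cancellation of $g^2(t)$ against $1/g^2(t)$ in the CDF is precisely the mechanism that produces the general weight $\lambda(t)$. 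Combining this with the $\tau$-free $D_{KL}(p_T\Vert\pi)$ term yields the first line of the claimed bound. The final equality is the same Fubini swap read in reverse: applied to the $\tau$-truncated integral that defines $\mathcal{L}(\bm{\theta};g^2,\tau)$ (the remaining $\bm{\theta}$-independent pieces of $\mathcal{L}$ being constants absorbed in the Lemma~\ref{lemma:1} convention), it identifies the middle expression with $\mathbb{E}_{\mathbb{P}_\lambda(\tau)}[D_{KL}(p_T\Vert\pi)+\mathcal{L}(\bm{\theta};g^2,\tau)]$.

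The only subtlety, and therefore the main obstacle, is the bookkeeping around the mixed atomic/absolutely-continuous structure of $\mathbb{P}_\lambda$: without the point mass $\lambda(\epsilon)/(g^2(\epsilon)Z)$ at $\tau=\epsilon$, $\mathbb{P}_\lambda$ would fail to be a probability measure whenever $\lambda/g^2$ does not vanish at $\epsilon$, and the CDF identity would be off by that exact constant. Once that is handled, everything else is a Fubini exchange together with the designed cancellation of $g^2(t)$, both of which are essentially mechanical.
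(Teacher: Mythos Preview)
Your proposal is correct and follows essentially the same approach as the paper: both arguments hinge on the Fubini/Tonelli swap together with the fundamental-theorem-of-calculus identity $\lambda(t)/g^2(t)=\lambda(\epsilon)/g^2(\epsilon)+\int_\epsilon^t(\lambda/g^2)'\diff s$, applied on top of the per-$\tau$ KL bound of Lemma~\ref{lemma:1}. The only cosmetic difference is that the paper carries the atom at $\epsilon$ and the absolutely continuous part as two separate terms throughout (its ``exchange identity'' Eq.~\eqref{eq:exchange}), whereas you package both into the single CDF formula $\mathbb{P}_\lambda([0,t])=\lambda(t)/(g^2(t)Z)$ and integrate against $\mathbb{P}_\lambda$ directly; the computations are line-by-line equivalent.
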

The meaning of Soft Truncation becomes clearer in view of Theorem \ref{thm:1}. Instead of training the general weighted diffusion loss, $\mathcal{L}(\bm{\theta};\lambda,\epsilon)$, we optimize the \textit{truncated} variational bound, $\mathcal{L}(\bm{\theta};g^{2},\tau)$. This truncated loss upper bounds the \textit{perturbed} KL divergence, $D_{KL}(p_{\tau}\Vert p_{\tau}^{\bm{\theta}})$ by Lemma \ref{lemma:1}, and Figure \ref{fig:nelbo}-(c) indicates that the Inequality \eqref{eq:perturbed_nelbo} is nearly tight. Therefore, Soft Truncation could be interpreted as the Maximum Perturbed Likelihood Estimation (MPLE), where the perturbation level is a random variable. Soft Truncation is not MLE training because the Inequality \ref{eq:nelbo_st} is not tight as demonstrated in Figure \ref{fig:nelbo}-(b) unless $\tau$ is sufficiently small. 

Old wisdom is to minimize the loss variance if available for stable training. However, some optimization methods in the deep learning era (e.g., stochastic gradient descent) deliberately add noises to a loss function that eventually helps escape from a local optimum. Soft Truncation is categorized in such optimization methods that \textit{inflate} the loss variance by intentionally imposing auxiliary randomness on loss estimation. This randomness is represented by the outmost expectation of $\mathbb{E}_{\mathbb{P}_{\lambda}(\tau)}$, which controls the diffusion time range batch-wisely. Additionally, the loss with a sampled $\tau$ is the proxy of the perturbed KL divergence by $\tau$, so the auxiliary randomness on loss estimation is theoretically tamed, meaning that it is not a random perturbation.

\subsection{Choice of Truncation Probability Distribution}

We parametrize the probability distribution of $\tau$ by 
\begin{align}\label{eq:prior_example}
\mathbb{P}_{k}(\tau)=\frac{1/\tau^{k}}{Z_{k}}1_{[\epsilon,T]}(\tau)\propto\frac{1}{\tau^{k}},
\end{align}
where $Z_{k}=\int_{\epsilon}^{T}\frac{1}{\tau^{k}}\diff \tau$ with sufficiently small enough truncation hyperparameter. Note that it is still beneficial to remain $\epsilon$ strictly positive because a batch update with $\tau\approx 0<\epsilon$ would drift the score network away from the optimal point. Figure \ref{fig:monte_carlo_loss}-(c) illustrates the importance distribution of $\lambda_{\mathbb{P}_{k}}$ for varying $k$. From the definition of Eq. \eqref{eq:prior_example}, $\mathbb{P}_{k}(\tau)\rightarrow\delta_{\epsilon}(\tau)$ as $k\rightarrow \infty$, and this limiting delta distribution corresponds to the original diffusion model with the likelihood weighting. Figure \ref{fig:monte_carlo_loss}-(c) shows that the importance distribution of $\mathbb{P}_{k}$ with finite $k$ interpolates the likelihood weighting and the variance weighting. 

With the current simple form, we experimentally find that the sweet spot is $k\approx 1.0$ in VPSDE and $k=2.0$ in VESDE with the emphasis on the sample quality. For VPSDE, the importance distribution in Figure \ref{fig:monte_carlo_loss}-(c) is nearly equal to that of the variance weighting if $k\approx 1.0$, so Soft Truncation with $k\approx 1.0$ improves the sample fidelity, while maintaining low NLL. On the other hand, if $k$ is too small, no $\tau$ will be sampled near $\epsilon$, so it hurts both sample generation and density estimation. We leave further study on searching for the optimal distribution of $\tau$ as future work. 

\begin{table}[t]
\begin{minipage}[c]{0.5\textwidth}
\centering
	\begin{subfigure}{0.7\linewidth}
		\includegraphics[width=\linewidth]{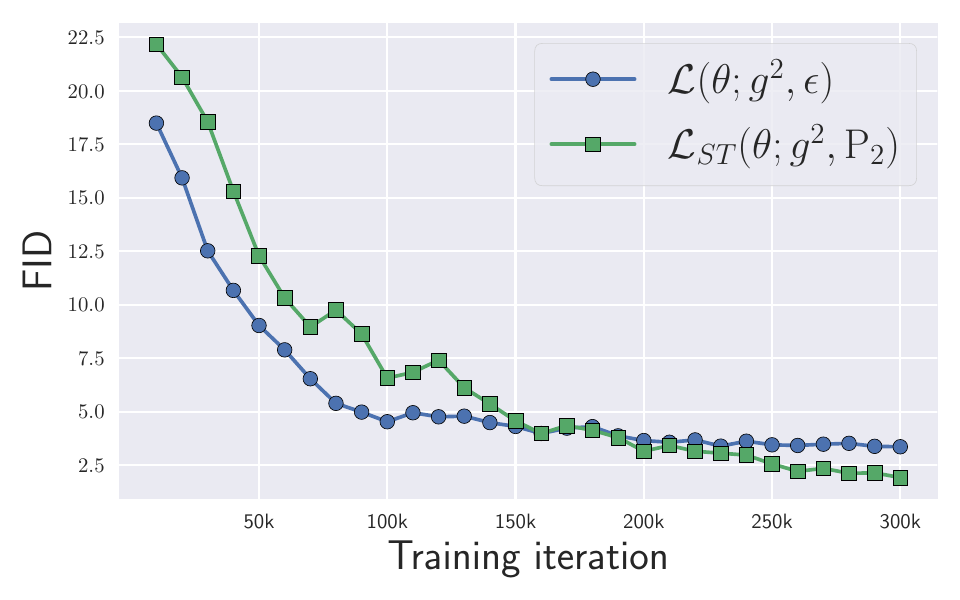}
	\end{subfigure}
	\vskip -0.1in
	\captionof{figure}{Soft Truncation improves FID on CelebA trained with UNCSN++ (RVE).}
	\label{fig:st_training}
	\end{minipage}

\begin{minipage}[c]{0.5\textwidth}
\centering
\vskip 0.1in
	\caption{Ablation study of Soft Truncation for various weightings on CIFAR-10 and ImageNet32 with DDPM++ (VP).}
	\label{tab:ablation_weighting_function}
	\vskip -0.05in
	\tiny
	\begin{tabular}{llcccc}
		\toprule
		& \multirow{2}{*}{Loss} & \multirow{2}{*}{\shortstack{Soft\\Truncation}} & \multirow{2}{*}{NLL} & \multirow{2}{*}{NELBO} & FID \\
		&&&&& ODE \\\midrule
		\multirow{4}{*}[-2pt]{CIFAR-10} & $\mathcal{L}(\bm{\theta};g^{2},\epsilon)$ & \xmark & 3.03 & 3.13 & 6.70 \\
		& $\mathcal{L}(\bm{\theta};\sigma^{2},\epsilon)$ & \xmark & 3.21 & 3.34 & 3.90 \\
		& $\mathcal{L}(\bm{\theta};g_{\mathbb{P}_{1}}^{2},\epsilon)$ & \xmark & 3.06 & 3.18 & 6.11 \\
		& $\mathcal{L}_{ST}(\bm{\theta};g^{2},\mathbb{P}_{1})$ & \cmark & \textbf{3.01} & \textbf{3.08} & 3.96 \\
		& $\mathcal{L}_{ST}(\bm{\theta};g^{2},\mathbb{P}_{0.9})$ & \cmark & 3.03 & 3.13 & \textbf{3.45} \\\midrule
		\multirow{4}{*}[-2pt]{ImageNet32} & $\mathcal{L}(\bm{\theta};g^{2},\epsilon)$ & \xmark & 3.92 & 3.94 & 12.68 \\
		& $\mathcal{L}(\bm{\theta};\sigma^{2},\epsilon)$ & \xmark & 3.95 & 4.00 & 9.22 \\
		& $\mathcal{L}(\bm{\theta};g_{\mathbb{P}_{1}}^{2},\epsilon)$ & \xmark & 3.93 & 3.97 & 11.89 \\
		& $\mathcal{L}_{ST}(\bm{\theta};g^{2},\mathbb{P}_{0.9})$ & \cmark & \textbf{3.90} & \textbf{3.91} & \textbf{8.42} \\
		\bottomrule
	\end{tabular}
\end{minipage}

\begin{minipage}[c]{0.5\textwidth}
\vskip 0.1in
\centering
	\caption{Ablation study of Soft Truncation for various model architectures and diffusion SDEs on CelebA.}
	\label{tab:ablation_architecture_sde}
	\vskip -0.05in
	\tiny
	\begin{tabular}{lllcccc}
		\toprule
		\multirow{2}{*}{SDE} & \multirow{2}{*}{Model} & \multirow{2}{*}{Loss} & \multirow{2}{*}{NLL} & \multirow{2}{*}{NELBO} & \multicolumn{2}{c}{FID} \\
		&&&&& PC & ODE \\\midrule
		\multirow{2}{*}{VE} & \multirow{2}{*}{NCSN++} & $\mathcal{L}(\bm{\theta};\sigma^{2},\epsilon)$ & 3.41 & 3.42 & 3.95 & -\\
		& & $\mathcal{L}_{ST}(\bm{\theta};\sigma^{2},\mathbb{P}_{2})$ & 3.44 & 3.44 & 2.68 & -\\\midrule
		\multirow{2}{*}{RVE} & \multirow{2}{*}{UNCSN++} & $\mathcal{L}(\bm{\theta};g^{2},\epsilon)$ & 2.01 & \textbf{2.01} & 3.36 & -\\
		& & $\mathcal{L}_{ST}(\bm{\theta};g^{2},\mathbb{P}_{2})$ & \textbf{1.97} & 2.02 & \textbf{1.92} & -\\\midrule
		\multirow{8}{*}[-9pt]{VP} & \multirow{2}{*}{DDPM++} & $\mathcal{L}(\bm{\theta};\sigma^{2},\epsilon)$  & 2.14 & 2.21 & 3.03 & 2.32 \\
		& & $\mathcal{L}_{ST}(\bm{\theta};\sigma^{2},\mathbb{P}_{1})$ & 2.17 & 2.29 & 2.88 & \textbf{1.90}\\\cmidrule(lr){2-3}
		& \multirow{2}{*}{UDDPM++} & $\mathcal{L}(\bm{\theta};\sigma^{2},\epsilon)$ & 2.11 & 2.20 & 3.23 & 4.72\\
		& & $\mathcal{L}_{ST}(\bm{\theta};\sigma^{2},\mathbb{P}_{1})$ & 2.16 & 2.28 & 2.22 & 1.94\\\cmidrule(lr){2-3}
		& \multirow{2}{*}{DDPM++} & $\mathcal{L}(\bm{\theta};g^{2},\epsilon)$ & 2.00 & 2.09 & 5.31 & 3.95\\
		& & $\mathcal{L}_{ST}(\bm{\theta};g^{2},\mathbb{P}_{1})$ & 2.00 & 2.11 & 4.50 & 2.90\\\cmidrule(lr){2-3}
		& \multirow{2}{*}{UDDPM++} & $\mathcal{L}(\bm{\theta};g^{2},\epsilon)$ & 1.98 & 2.12 & 4.65 & 3.98\\
		& & $\mathcal{L}_{ST}(\bm{\theta};g^{2},\mathbb{P}_{1})$ & 2.00 & 2.10 & 4.45 & 2.97\\
		\bottomrule
	\end{tabular}
	\end{minipage}
		\vskip -0.1in
\end{table}

\begin{table}[t]
\begin{minipage}[c]{0.5\textwidth}
\centering
	\caption{Ablation study of Soft Truncation for various $\epsilon$ on CIFAR-10 with DDPM++ (VP).}
	\label{tab:ablation_epsilon}
	\vskip -0.05in
	\tiny
	\begin{tabular}{lcccc}
		\toprule
		Loss & $\epsilon$ & NLL & NELBO & FID (ODE) \\\midrule
		\multirow{4}{*}{$\mathcal{L}(\bm{\theta};g^{2},\epsilon)$} & $10^{-2}$ & 4.64 & 4.69 & 38.82 \\
		& $10^{-3}$ & 3.51 & 3.52 & 6.21 \\
		& $10^{-4}$ & 3.05 & 3.08 & 6.33 \\
		& $10^{-5}$ & 3.03 & 3.13 & 6.70 \\\midrule
		\multirow{4}{*}{$\mathcal{L}_{ST}(\bm{\theta};g^{2},\mathbb{P}_{1})$} & $10^{-2}$ & 4.65 & 4.69 & 39.83 \\
		& $10^{-3}$ & 3.51 & 3.52 & 5.14 \\
		& $10^{-4}$ & 3.05 & 3.08 & 4.16 \\
		& $10^{-5}$ & \textbf{3.01} & \textbf{3.08} & \textbf{3.96} \\
		\bottomrule
	\end{tabular}
	\end{minipage}

\begin{minipage}[c]{0.5\textwidth}
\vskip 0.1in
\centering
\caption{Ablation study of Soft Truncation for various $\mathbb{P}_{k}$ on CIFAR-10 trained with DDPM++ (VP).}
\label{tab:ablation_prior}
	\vskip -0.05in
\tiny
\begin{tabular}{lccc}
	\toprule
	Loss & NLL & NELBO & FID (ODE) \\\midrule
	$\mathcal{L}_{ST}(\bm{\theta};g^{2},\mathbb{P}_{0})$ & 3.24 & 3.39 & 6.27 \\
	$\mathcal{L}_{ST}(\bm{\theta};g^{2},\mathbb{P}_{0.8})$ & 3.03 & \textbf{3.05} & 3.61 \\
	$\mathcal{L}_{ST}(\bm{\theta};g^{2},\mathbb{P}_{0.9})$ & 3.03 & 3.13 & \textbf{3.45} \\
	$\mathcal{L}_{ST}(\bm{\theta};g^{2},\mathbb{P}_{1})$ & \textbf{3.01} & 3.08 & 3.96 \\
	$\mathcal{L}_{ST}(\bm{\theta};g^{2},\mathbb{P}_{1.1})$ & 3.02 & 3.09 & 3.98 \\
	$\mathcal{L}_{ST}(\bm{\theta};g^{2},\mathbb{P}_{1.2})$ & 3.03 & 3.09 & 3.98 \\
	$\mathcal{L}_{ST}(\bm{\theta};g^{2},\mathbb{P}_{2})$ & \textbf{3.01} & 3.10 & 6.31 \\
	$\mathcal{L}_{ST}(\bm{\theta};g^{2},\mathbb{P}_{3})$ & 3.02 & 3.09 & 6.54 \\\midrule
	$\mathcal{L}_{ST}(\bm{\theta};g^{2},\mathbb{P}_{\infty})$ & \multirow{2}{*}{\textbf{3.01}} & \multirow{2}{*}{3.09} & \multirow{2}{*}{6.70} \\
	$=\mathcal{L}(\bm{\theta};g^{2},\epsilon)$ &&&\\
	\bottomrule
\end{tabular}
\end{minipage}

\begin{minipage}[c]{0.5\textwidth}
\vskip 0.1in
\centering
\caption{Ablation study of Soft Truncation for CIFAR-10 trained with DDPM++ when a diffusion is combined with a normalizing flow in INDM \cite{kim2022maximum}.}
\label{tab:ablation_indm}
	\vskip -0.05in
\tiny
\begin{tabular}{lccc}
	\toprule
	Loss & NLL & NELBO & FID (ODE) \\\midrule
	INDM (VP, NLL) & \textbf{2.98} & \textbf{2.98} & 6.01 \\
	INDM (VP, FID) & 3.17 & 3.23 & \textbf{3.61} \\
	INDM (VP, NLL) + ST & 3.01 & 3.02 & 3.88 \\
	\bottomrule
\end{tabular}
\end{minipage}
\end{table}

\begin{table*}
\centering
	\caption{Performance comparisons on benchmark datasets. The boldfaced numbers present the best performance, and the underlined numbers present the second-best performance. We report NLL of DDPM++ on CIFAR-10, ImageNet32, and CelebA with the variational dequantization \cite{song2021maximum} to compare with the baselines in a fair setting.}
	\label{tab:performances}
	\begin{adjustbox}{max width=\textwidth}
		\begin{tabular}{lccccccccccc}
			\toprule
			\multirow{3}{*}{Model} & \multicolumn{3}{c}{CIFAR10} & \multicolumn{3}{c}{ImageNet32} & \multicolumn{2}{c}{CelebA} & CelebA-HQ & \multicolumn{2}{c}{STL-10} \\
			& \multicolumn{3}{c}{$32\times 32$} & \multicolumn{3}{c}{$32\times 32$} & \multicolumn{2}{c}{$64\times 64$} & $256\times 256$ & \multicolumn{2}{c}{$48\times 48$} \\
			& NLL ($\downarrow$) & FID ($\downarrow$) & IS ($\uparrow$) & NLL & FID & IS & NLL & FID & FID & FID & IS \\\midrule
			\multicolumn{12}{l}{\textbf{Likelihood-free Models}}\\
			StyleGAN2-ADA+Tuning \citep{karras2020training} & - & 2.92 & \underline{10.02} & - & - & - & - & - & - & - & - \\
			Styleformer \citep{park2021styleformer} & - & 2.82 & 9.94 & - & - & - & - & 3.66 & - & \underline{15.17} & \underline{11.01} \\
			\multicolumn{12}{l}{\textbf{Likelihood-based Models}}\\
			ARDM-Upscale 4 \citep{hoogeboom2021autoregressive} & \textbf{2.64} & - & - & - & - & - & - & - & - & - & - \\
			VDM \citep{kingma2021variational} & \underline{2.65} & 7.41 & - & \underline{3.72} & - & - & - & - & - & - & - \\
			LSGM (FID) \citep{vahdat2021score} & 3.43 & \textbf{2.10} & - & - & - & - & - & - & - & - & - \\
			NCSN++ cont. (deep, VE) \citep{song2020score} & 3.45 & \underline{2.20} & 9.89 & - & - & - & 2.39 & 3.95 & \underline{7.23} & - & - \\
			DDPM++ cont. (deep, sub-VP) \citep{song2020score} & 2.99 & 2.41 & 9.57 & - & - & - & - & - & - & - & - \\
			DenseFlow-74-10 \citep{grcic2021densely} & 2.98 & 34.90 & - & \textbf{3.63} & - & - & 1.99 & - & - & - & - \\
			ScoreFlow (VP, FID) \citep{song2021maximum} & 3.04 & 3.98 & - & 3.84 & \textbf{8.34} & - & - & - & - & - & - \\
			Efficient-VDVAE \citep{hazami2022efficient} & 2.87 & - & - & - & - & - & \textbf{1.83} & - & - & - & - \\
			PNDM \citep{liu2022pseudo} & - & 3.26 & - & - & - & - & - & 2.71 & - & - & - \\
			ScoreFlow (deep, sub-VP, NLL) \citep{song2021maximum} & 2.81 & 5.40 & - & 3.76 & 10.18 & - & - & - & - & - & - \\
			Improved DDPM ($L_{simple}$) \citep{nichol2021improved} & 3.37 & 2.90 & - & - & - & - & - & - & - & - & - \\\midrule
			UNCSN++ (RVE) + ST & 3.04 & 2.33 & \textbf{10.11} & - & - & - & 1.97 & \underline{1.92} & \textbf{7.16} & \textbf{7.71} & \textbf{13.43} \\
			DDPM++ (VP, FID) + ST & 2.91 & 2.47 & 9.78 & - & - & - & 2.10 & \textbf{1.90} & - & - & - \\
			DDPM++ (VP, NLL) + ST & 2.88 & 3.45 & 9.19 & 3.85 & \underline{8.42} & \textbf{11.82} & \underline{1.96} & 2.90 & - & - & - \\
			\bottomrule
		\end{tabular}
	\end{adjustbox}
\end{table*}

\section{Experiments}

This section empirically studies our suggestions on benchmark datasets, including CIFAR-10 \citep{krizhevsky2009learning}, ImageNet $32\times 32$ \cite{van2016pixel}, STL-10 \citep{coates2011analysis}\footnote{We downsize the dataset from $96\times 96$ to $48\times 48$ following \citet{jiang2021transgan, park2021styleformer}.} CelebA \citep{liu2015deep} $64\times 64$ and CelebA-HQ \citep{karras2018progressive} $256\times 256$.

Soft Truncation is a universal training technique indepedent to model architectures and diffusion strategies. In the experiments, we test Soft Truncation on various architectures, including vanilla NCSN++, DDPM++, Unbounded NCSN++ (UNCSN++), and Unbounded DDPM++ (UDDPM++). Also, Soft Truncation is applied to various diffusion SDEs, such as VESDE, VPSDE, and Reverse VESDE (RVESDE). Although we use continuous SDEs for the diffusion strategies, Soft Truncation with the discrete model, such as DDPM \cite{ho2020denoising}, is a straightforward application of continuous models. Appendix \ref{sec:implementation_details} enumerates the specifications of score architectures and SDEs.

From Figure \ref{fig:nelbo}-(c), a sweet spot of the hard threshold is $\epsilon=10^{-5}$, in which NLL/NELBO are no longer improved under this threshold. As the diffusion model has no information on $[0,\epsilon)$, we comply \citet{kim2022maximum} to use Inequality \eqref{eq:VI} for NLL computation and Inequality \eqref{eq:nelbo_st} for NELBO computation. Following \citet{kim2022maximum}, we compute $\log{p_{\epsilon}^{\bm{\theta}}(\mathbf{x}_{\epsilon})}$, rather than $\log{p_{\epsilon}^{\bm{\theta}}(\mathbf{x}_{0})}$. It is the common practice of continuous diffusion models \cite{song2020score, song2021maximum, dockhorn2021score} to report their performances with $\log{p_{\epsilon}^{\bm{\theta}}(\mathbf{x}_{0})}$, but \citet{kim2022maximum} show that $\log{p_{\epsilon}^{\bm{\theta}}(\mathbf{x}_{\epsilon})}$ differs to $\log{p_{\epsilon}^{\bm{\theta}}(\mathbf{x}_{0})}$ by 0.05 in BPD scale when $\epsilon=10^{-5}$, which is quite significant. We use the uniform dequantization \cite{theis2016note} as default, otherwise noted. For sample generation, we use either of Predictor-Corrector (PC) sampler or Ordinary Differential Equation (ODE) sampler \cite{song2020score}. We denote $\mathcal{L}(\bm{\theta};\lambda,\epsilon)$ as the vanilla training with $\lambda$-weighting, and $\mathcal{L}_{ST}(\bm{\theta};g^{2},\mathbb{P})$ as the training by Soft Truncation with the truncation probability of $\mathbb{P}$. We additionally denote $\mathcal{L}_{ST}(\bm{\theta};\sigma^{2},\mathbb{P})$ for updating the network by the variance weighted loss per batch-wise update. We release our code at \url{https://github.com/Kim-Dongjun/Soft-Truncation}.

\textbf{FID by Iteration} Figure \ref{fig:st_training} illustrates the FID score \cite{heusel2017gans} in $y$-axis by training steps in $x$-axis. Figure \ref{fig:st_training} shows that Soft Truncation beats the vanilla training after 150k of training iterations.

\textbf{Ablation Studies} Tables \ref{tab:ablation_weighting_function}, \ref{tab:ablation_architecture_sde}, \ref{tab:ablation_epsilon}, and \ref{tab:ablation_prior} show ablation studies on various weighting functions, model architectures, SDEs, $\epsilon$s, and probability distributions of $\tau$, respectively. See Appendix \ref{sec:full_tables}. Table \ref{tab:ablation_weighting_function} shows that Soft Truncation beats or equals to the vanilla training in all performances. We highlight that Soft Truncation with $\mathbb{P}_{0.9}$ outperforms the FID-favorable model with the variance weighting with respect to FID on both CIFAR-10 and ImageNet32.

Not only comparing with the pre-existing weighting functions, such as $\lambda=g^{2}$ or $\lambda=\sigma^{2}$, Table \ref{tab:ablation_weighting_function} additionally reports the experimental result of a general weighting function of $\lambda=g_{\mathbb{P}_{1}}^{2}$. From Eq. \eqref{eq:general_weight}, Soft Truncation with $\mathbb{P}_{1}$ and the vanilla training with $\lambda=g_{\mathbb{P}_{1}}^{2}$ coincide in their loss functions in average, i.e., $\mathcal{L}(\bm{\theta};g_{\mathbb{P}_{1}}^{2},\epsilon)=\mathcal{L}_{ST}(\bm{\theta};g^{2},\mathbb{P}_{1})$. Thus, when comparing the paired experiments, Soft Truncation could be considered as an alternative way of estimating the same loss, and Table \ref{tab:ablation_weighting_function} implies that Soft Truncation gives better optimization than the vanilla method. This strongly implies that Soft Truncation could be a default training method for a general weighted denoising diffusion loss.

Table \ref{tab:ablation_architecture_sde} provides two implications. First, Soft Truncation particularly boosts FID while maintaining density estimation performances under the variation of score networks and diffusion strategies. Second, Table \ref{tab:ablation_architecture_sde} shows that Soft Truncation is effective on CelebA even when we apply Soft Truncation on the variance weighting, i.e., $\mathcal{L}_{ST}(\bm{\theta};\sigma^{2},\mathbb{P})$, but we find that this does not hold on CIFAR-10 and ImageNet32. We leave it as a future work on this extent.

Table \ref{tab:ablation_epsilon} shows a contrastive trend of the vanilla training and Soft Truncation. The inverse correlation appears between NLL and FID in the vanilla training, but Soft Truncation monotonically reduces both NLL and FID by $\epsilon$. This implies that Soft Truncation significantly reduces the effort of the $\epsilon$ search. Table \ref{tab:ablation_prior} studies the effect of the probability distribution of $\tau$ in VPSDE. It shows that Soft Truncation significantly improves FID upon the experiment of $\mathcal{L}(\bm{\theta};g^{2},\epsilon)$ on the range of $0.8\le k\le 1.2$. Finally, Table \ref{tab:ablation_indm} shows that Soft Truncation also works with a nonlinear forward SDE \cite{kim2022maximum}, so the scope of Soft Truncation is not limited to a family of linear SDEs.

\textbf{Quantitative Comparison to SOTA} Table \ref{tab:performances} compares Soft Truncation (ST) against the current best generative models. It shows that Soft Truncation achieves the state-of-the-art sample generation performances on CIFAR-10, CelebA, CelebA-HQ, and STL-10, while keeping NLL intact. In particular, we have experimented thoroughly on the CelebA dataset, and we find that Soft Truncation largely exceeds the previous best FID scores by far. In FID, Soft Truncation with DDPM++ performs 1.90, which exceeds the previous best FID of 2.92 by DDGM. Also, Soft Truncation significantly improves FID on STL-10.

\section{Conclusion}

This paper proposes a generally applicable training method for diffusion models. The suggested training method, Soft Truncation, is motivated from the observation that the density estimation is mostly counted on small diffusion time, while the sample generation is mostly constructed on large diffusion time. However, small diffusion time dominates the Monte-Carlo estimation of the loss function, so this imbalance contribution prevents accurate score learning on large diffusion time. Soft Truncation softens the truncation level at each mini-batch update, and this simple modification is connected to the general weighted diffusion loss and the concept of Maximum Perturbed Likelihood Estimation.

\section*{Acknowledgements}

This research was supported by AI Technology Development for Commonsense Extraction, Reasoning, and Inference from Heterogeneous Data(IITP) funded by the Ministry of Science and ICT(2022-0-00077). We thank Jaeyoung Byeon and Daehan Park for their fruitful mathematical advice, and Byeonghu Na for his support of the experiments.

\nocite{langley00}

\bibliography{references}
\bibliographystyle{icml2022}

\newpage
\appendix
\onecolumn

\section{Derivation}

\subsection{Transition Probability for Linear SDEs}\label{sec:transition_probability}

\citet{kim2022maximum} has classified linear SDEs as
\begin{align}\label{eq:appendix_linear_SDE}
\diff\mathbf{x}_{t}=-\frac{1}{2}\beta(t)\mathbf{x}_{t}\diff t+g(t)\diff\mathbf{w}_{t},
\end{align}
where $\beta:\mathbb{R}\rightarrow\mathbb{R}_{\ge 0}$ and $g:\mathbb{R}\rightarrow\mathbb{R}_{\ge 0}$ are real-valued functions. VESDE has $\beta(t)\equiv 0$ and $g(t)=\sqrt{\diff\sigma^{2}(t)/\diff t}=\sigma_{min}(\frac{\sigma_{max}}{\sigma_{min}})^{t}\sqrt{2\log{\frac{\sigma_{max}}{\sigma_{min}}}}$, where $\sigma_{min}$ and $\sigma_{max}$ are the minimum/maximum perturbation variances, respectively. It has the transition probability of
\begin{align*}
p_{0t}(\mathbf{x}_{t}\vert\mathbf{x}_{0})=\mathcal{N}(\mathbf{x}_{t};\mu_{VE}(t)\mathbf{x}_{0},\sigma_{VE}^{2}(t)\mathbf{I}),
\end{align*}
where $\mu_{VE}(t)\equiv 1$ and $\sigma_{VE}^{2}(t):=\sigma_{min}^{2}[(\frac{\sigma_{max}}{\sigma_{min}})^{2t}-1]$. VPSDE has $\beta(t)=\beta_{min}+(\beta_{max}-\beta_{min})t$ and $g(t)=\sqrt{\beta(t)}$ with the transition probability of
\begin{align*}
p_{0t}(\mathbf{x}_{t}\vert\mathbf{x}_{0})=\mathcal{N}(\mathbf{x}_{t};\mu_{VP}(t)\mathbf{x}_{0},\sigma_{VP}^{2}(t)\mathbf{I}),
\end{align*}
where $\mu_{VP}(t)=e^{-\frac{1}{2}\int_{0}^{t}\beta(s)\diff s}$ and $\sigma^{2}(t)=1-e^{-\int_{0}^{t}\beta(s)\diff s}$.

Analogous to VE/VP SDEs, the transition probability of the generic linear SDE of Eq. \eqref{eq:appendix_linear_SDE} is a Gaussian distribution of $p_{0t}(\mathbf{x}_{t}\vert\mathbf{x}_{0})=\mathcal{N}(\mathbf{x}_{t}\vert\mu(t)\mathbf{x}_{0},\sigma^{2}(t)\mathbf{I})$, where its mean and covariance functions are characterized as a system of ODEs of
\begin{align}
&\frac{\diff \mu(t)}{\diff t}=-\frac{1}{2}\beta(t)\mu(t),\label{eq:mean}\\
&\frac{\diff \sigma^{2}(t)}{\diff t}=-\beta(t)\sigma^{2}(t)+g^{2}(t),\label{eq:covariance}
\end{align}
with initial conditions to be $\mu(0)=1$ and $\sigma^{2}(0)=0$.

Eq. \eqref{eq:mean} has its solution by
\begin{align*}
\mu(t)=e^{-\frac{1}{2}\int_{0}^{t}\beta(s)\diff s}.
\end{align*}
If we multiply $e^{\int_{0}^{t}\beta(s)\diff s}$ to Eq. \eqref{eq:covariance}, then Eq. \eqref{eq:covariance} equals to
\begin{eqnarray}
\lefteqn{e^{\int_{0}^{t}\beta(s)\diff s}\frac{\diff \sigma^{2}(t)}{\diff t}+e^{\int_{0}^{t}\beta(s)\diff s}\beta(t)\sigma^{2}(t)=e^{\int_{0}^{t}\beta(s)\diff s}g^{2}(t)}&\notag\\
&&\iff\frac{\diff \Big[e^{\int_{0}^{t}\beta(s)\diff s}\sigma^{2}(t)\Big]}{\diff t}=e^{\int_{0}^{t}\beta(s)\diff s}g^{2}(t)\notag\\
&&\iff e^{\int_{0}^{t}\beta(s)\diff s}\sigma^{2}(t)=\int_{0}^{t}e^{\int_{0}^{\tau}\beta(s)\diff s}g^{2}(\tau)\diff\tau+C\notag\\
&&\iff \sigma^{2}(t)=e^{-\int_{0}^{t}\beta(s)\diff s}\int_{0}^{t}e^{\int_{0}^{\tau}\beta(s)\diff s}g^{2}(\tau)\diff\tau + Ce^{-\int_{0}^{t}\beta(s)\diff s}.\label{eq:appendix_variance}
\end{eqnarray}
If we impose $\sigma^{2}(0)=0$ to Eq. \eqref{eq:appendix_variance}, then the constant $C$ satisfies $C=0$, and the variance formula becomes
\begin{align*}
\sigma^{2}(t)=e^{-\int_{0}^{t}\beta(s)\diff s}\int_{0}^{t}e^{\int_{0}^{\tau}\beta(s)\diff s}g^{2}(\tau)\diff\tau.
\end{align*}
To sum up, the family of linear SDEs of $\diff\mathbf{x}_{t}=-\frac{1}{2}\beta(t)\mathbf{x}_{t}\diff t+g(t)\diff\mathbf{w}_{t}$ gets the transition probability to be
\begin{align}\label{eq:appendix_transition_probability}
p_{0t}(\mathbf{x}_{t}\vert\mathbf{x}_{0})=\mathcal{N}\bigg(\mathbf{x}_{t}\Big\vert e^{-\frac{1}{2}\int_{0}^{t}\beta(s)\diff s}\mathbf{x}_{0},e^{-\int_{0}^{t}\beta(s)\diff s}\Big(\int_{0}^{t}e^{\int_{0}^{\tau}\beta(s)\diff s}g^{2}(\tau)\diff\tau\Big)\mathbf{I}\bigg).
\end{align}

\subsection{Diverging Denoising Loss}\label{sec:score_fail}

The gradient of the log transition probability, $\nabla\log{p_{0t}(\mathbf{x}_{t}\vert\mathbf{x}_{0})}=-\frac{\mathbf{x}_{t}-\mu(t)\mathbf{x}_{0}}{\sigma^{2}(t)}=-\frac{\mathbf{z}}{\sigma(t)}$, is diverging at $\mu(t)\mathbf{x}_{0}$, where $\mathbf{x}_{t}=\mu(t)\mathbf{x}_{0}+\sigma(t)\mathbf{z}$. Below Lemma \ref{lemma:2} indicates that $\Vert\mathbf{s}(\mathbf{x}_{t},t)-\nabla\log{p_{0t}(\mathbf{x}_{t}\vert\mathbf{x}_{0})}\Vert_{2}\rightarrow\infty$ for any continuous score function, $\mathbf{s}$. This leads that the denoising score loss diverges as $t\rightarrow 0$ as illustrated in Figure \ref{fig:nelbo}-(a).

\begin{lemma}\label{lemma:2}
		Let $\mathcal{H}_{[0,T]}=\{\mathbf{s}:\mathbb{R}^{d}\times [0,T]\rightarrow \mathbb{R}^{d},\text{ $\mathbf{s}$ is locally Lipschitz}\}$. Suppose a continuous vector field $\mathbf{v}$ defined on a subset $U$ of a compact manifold $M$ (i.e., $\mathbf{v}:U\subset M\rightarrow\mathbb{R}^{d}$) is unbounded, then there exists no $\mathbf{s}\in\mathcal{H}_{[0,T]}$ such that $\lim_{t\rightarrow 0}\mathbf{s}(\mathbf{x},t)=\mathbf{v}(\mathbf{x})$ a.e. on $U$.
	\end{lemma}

\begin{proof}[Proof of Lemma \ref{lemma:2}]
		Since $U$ is an open subset of a compact manifold $M$, $\Vert \mathbf{x}_{1}-\mathbf{x}_{2}\Vert\le \text{diam}(M)$ for all $\mathbf{x}_{1},\mathbf{x}_{2}\in U$. Also, if $t_{1},t_{2}\in [0,T]$, $\vert t_{1}-t_{2}\vert$ is bounded. Hence, the local Lipschitzness of $\mathbf{s}$ implies that there exists a positive $K>0$ such that $\Vert s(\mathbf{x}_{1},t_{1})-s(\mathbf{x}_{2},t_{2})\Vert \le K(\Vert \mathbf{x}_{1}-\mathbf{x}_{2}\Vert+\vert t_{1}-t_{2}\vert)$ for any $\mathbf{x}_{1},\mathbf{x}_{2}\in U$ and $t_{1},t_{2}\in[0,T]$. Therefore, for any $\mathbf{s}\in\mathcal{H}_{[0,T]}$, there exists $C>0$ such that $\Vert \mathbf{s}(\mathbf{x},t)\Vert<C$ for all $\mathbf{x}\in U$ and $t\in [0,T]$, which leads no $\mathbf{s}$ that satisfies $\mathbf{s}(\mathbf{x},t)\rightarrow v(\mathbf{x})$ a.e. on $U$ as $t\rightarrow 0$.
	\end{proof}

\subsection{General Weighted Diffusion Loss}\label{sec:general_weight}
The denoising score loss is
\begin{align}
\begin{split}\label{eq:appendix_denoising_loss}
\mathcal{L}(\bm{\theta};g^{2},\tau)=&\frac{1}{2}\int_{\tau}^{T}g^{2}(t)\mathbb{E}_{\mathbf{x}_{0},\mathbf{x}_{t}}\big[\Vert\mathbf{s}_{\bm{\theta}}(\mathbf{x}_{t},t)-\nabla_{\mathbf{x}_{t}}\log{p_{0t}(\mathbf{x}_{t}\vert\mathbf{x}_{0})}\Vert_{2}^{2}-\Vert\log{p_{0t}(\mathbf{x}_{t}\vert\mathbf{x}_{0})}\Vert_{2}^{2}\big]\diff t\\
&-\int_{\tau}^{T}\mathbb{E}_{\mathbf{x}_{t}}\big[\text{div}(\mathbf{f}(\mathbf{x}_{t},t))\big]\diff t-\mathbb{E}_{\mathbf{x}_{T}}\big[\log{\pi(\mathbf{x}_{T})}\big],
\end{split}
\end{align}
for any $\tau\in[0,T]$. For an appropriate class of function $A(t)$,
\begin{align*}
\begin{split}
\int_{0}^{T}\mathbb{P}(\tau)\bigg(\int_{\tau}^{T}A(t)\diff t\bigg)\diff \tau=&\int_{0}^{T}\int_{0}^{T}\mathbb{P}(\tau)A(t)1_{[\tau,T]}(t)\diff t\diff\tau\\
=&\int_{0}^{T}\int_{0}^{T}\mathbb{P}(\tau)A(t)1_{[\tau,T]}(t)\diff \tau\diff t\\
=&\int_{0}^{T}\int_{0}^{t}\mathbb{P}(\tau)A(t)\diff\tau\diff t\\
=&\int_{0}^{T}\bigg(\int_{0}^{t}\mathbb{P}(\tau)\diff\tau\bigg)A(t)\diff t
\end{split}
\end{align*}
holds by changing the order of integration. Therefore, we get
\begin{eqnarray*}
\lefteqn{\mathcal{L}_{ST}(\bm{\theta};g^{2},\mathbb{P}):=\mathbb{E}_{\mathbb{P}(\tau)}\big[\mathcal{L}(\bm{\theta};g^{2},\tau)\big]}\\
&&=\int_{0}^{T}\mathbb{P}(\tau)\bigg[\frac{1}{2}\int_{\tau}^{T}g^{2}(t)\mathbb{E}_{\mathbf{x}_{0},\mathbf{x}_{t}}\big[\Vert\mathbf{s}_{\bm{\theta}}(\mathbf{x}_{t},t)-\nabla_{\mathbf{x}_{t}}\log{p_{0t}(\mathbf{x}_{t}\vert\mathbf{x}_{0})}\Vert_{2}^{2}-\Vert\log{p_{0t}(\mathbf{x}_{t}\vert\mathbf{x}_{0})}\Vert_{2}^{2}\big]\diff t\\
&&\quad-\int_{\tau}^{T}\mathbb{E}_{\mathbf{x}_{t}}\big[\text{div}(\mathbf{f}(\mathbf{x}_{t},t))\big]\diff t-\mathbb{E}_{\mathbf{x}_{T}}\big[\log{\pi(\mathbf{x}_{T})}\big]\bigg]\diff\tau\\
&&=\int_{0}^{T}\Big(\int_{0}^{t}\mathbb{P}(\tau)\diff\tau\Big)\bigg[\frac{1}{2}g^{2}(t)\mathbb{E}_{\mathbf{x}_{0},\mathbf{x}_{t}}\big[\Vert\mathbf{s}_{\bm{\theta}}(\mathbf{x}_{t},t)-\nabla_{\mathbf{x}_{t}}\log{p_{0t}(\mathbf{x}_{t}\vert\mathbf{x}_{0})}\Vert_{2}^{2}-\Vert\log{p_{0t}(\mathbf{x}_{t}\vert\mathbf{x}_{0})}\Vert_{2}^{2}\big]\\
&&\quad-\mathbb{E}_{\mathbf{x}_{t}}\big[\text{div}(\mathbf{f}(\mathbf{x}_{t},t))\big]\bigg]\diff t-\mathbb{E}_{\mathbf{x}_{T}}\big[\log{\pi(\mathbf{x}_{T})}\big]\\
&&=\frac{1}{2}\int_{0}^{T}g_{\mathbb{P}}^{2}(t)\mathbb{E}_{\mathbf{x}_{0},\mathbf{x}_{t}}\big[\Vert\mathbf{s}_{\bm{\theta}}(\mathbf{x}_{t},t)-\nabla_{\mathbf{x}_{t}}\log{p_{0t}(\mathbf{x}_{t}\vert\mathbf{x}_{0})}\Vert_{2}^{2}\big]\diff t+C,
\end{eqnarray*}
where
\begin{align*}
C=-\frac{1}{2}\int_{0}^{T}g_{\mathbb{P}}^{2}(t)\mathbb{E}_{\mathbf{x}_{0},\mathbf{x}_{t}}\big[\Vert\log{p_{0t}(\mathbf{x}_{t}\vert\mathbf{x}_{0})}\Vert_{2}^{2}\big]\diff t-\int_{0}^{T}\Big(\int_{0}^{t}\mathbb{P}(\tau)\diff\tau\Big)\mathbb{E}_{\mathbf{x}_{t}}\big[\text{div}(\mathbf{f}(\mathbf{x}_{t},t))\big]\diff t-\mathbb{E}_{\mathbf{x}_{T}}\big[\log{\pi(\mathbf{x}_{T})}\big].
\end{align*}
If $\mathbf{f}(\mathbf{x}_{t},t)=-\frac{1}{2}\beta(t)\mathbf{x}_{t}$, then we have
\begin{align*}
C=-\frac{d}{2}\int_{0}^{T}\Big(\int_{0}^{t}\mathbb{P}(\tau)\diff\tau\Big)\frac{g^{2}(t)}{\sigma^{2}(t)}\diff t+\frac{d}{2}\int_{0}^{T}\Big(\int_{0}^{t}\mathbb{P}(\tau)\diff\tau\Big)\beta(t)\diff t-\mathbb{E}_{\mathbf{x}_{T}}\big[\log{\pi(\mathbf{x}_{T})}\big].
\end{align*}

\section{Theorems and Proofs}\label{sec:proof}

\begingroup
\renewcommand\thelemma{1}
\begin{lemma}\label{lemma:1}
For any $\tau\in[0,T]$,
\begin{align*}
\mathbb{E}_{\mathbf{x}_{\tau}}\big[-\log{p_{\tau}^{\bm{\theta}}(\mathbf{x}_{\tau})}\big]\le&\mathcal{L}(\bm{\theta};g^{2},\tau)=\frac{1}{2}\int_{\tau}^{T}g^{2}(t)\mathbb{E}_{\mathbf{x}_{0},\mathbf{x}_{t}}\big[\Vert\mathbf{s}_{\bm{\theta}}(\mathbf{x}_{t},t)-\nabla_{\mathbf{x}_{t}}\log{p_{0t}(\mathbf{x}_{t}\vert\mathbf{x}_{0})}\Vert_{2}^{2}\\
&-\Vert\nabla_{\mathbf{x}_{t}}\log{p_{0t}(\mathbf{x}_{t}\vert\mathbf{x}_{0})}\Vert_{2}^{2}\big]\diff t-\int_{\tau}^{T}\mathbb{E}_{\mathbf{x}_{t}}\big[\textup{div}(\mathbf{f}(\mathbf{x}_{t},t))\big]\diff t-\mathbb{E}_{\mathbf{x}_{T}}\big[\log{\pi(\mathbf{x}_{T})}\big].
\end{align*}
\end{lemma}
\endgroup

\begin{proof}
Suppose $\bm{\mu}$ is the path measure of the forward SDE, and $\bm{\nu}_{\bm{\theta}}$ is the path measure of the generative SDE. The restricted measure is defined by $\bm{\mu}\vert_{[\tau,T]}(\{F_{t}\}_{t=\tau}^{T}):=\bm{\mu}(\{F_{t}\}_{t=0}^{T})$, where $F_{t}=\mathbb{R}^{d}$ if $t\in[0,\tau)$ and $F_{t}$ is a measurable set in $\mathbb{R}^{d}$ otherwise. The restricted measure of $\bm{\nu}_{\bm{\theta}}$ is defined analogously. Then, by the data processing inequality, we get
\begin{align}\label{eq:appendix_data}
D_{KL}(p_{\tau}\Vert p_{\tau}^{\bm{\theta}})\le D_{KL}(\bm{\mu}\vert_{[\tau,T]}\Vert\bm{\nu}_{\bm{\theta}}\vert_{[\tau,T]}).
\end{align}
Now, from the chain rule of KL divergences, we have
\begin{align}\label{eq:appendix_chain}
D_{KL}(\bm{\mu}\vert_{[\tau,T]}\Vert\bm{\nu}_{\bm{\theta}}\vert_{[\tau,T]})=D_{KL}(p_{T}\Vert\pi)+\mathbb{E}_{\mathbf{z}\sim p_{T}}\Big[D_{KL}\big(\bm{\mu}\vert_{[\tau,T]}(\cdot\vert\mathbf{x}_{T}=\mathbf{z})\Vert\bm{\nu}_{\bm{\theta}}\vert_{[\tau,T]}(\cdot\vert\mathbf{x}_{T}=\mathbf{z})\big)\Big].
\end{align}
From the Girsanov theorem and the Martingale property, we get
\begin{align}\label{eq:appendix_girsanov}
D_{KL}\big(\bm{\mu}\vert_{[\tau,T]}(\cdot\vert\mathbf{x}_{T}=\mathbf{z})\Vert\bm{\nu}_{\bm{\theta}}\vert_{[\tau,T]}(\cdot\vert\mathbf{x}_{T}=\mathbf{z})\big)=\frac{1}{2}\int_{\tau}^{T}\mathbb{E}_{p_{t}(\mathbf{x}_{t})}\big[ g^{2}(t)\Vert\mathbf{s}_{\bm{\theta}}(\mathbf{x}_{t},t)-\nabla\log{p_{t}(\mathbf{x}_{t})}\Vert_{2}^{2}\big]\diff t,
\end{align}
and combining Eq. \eqref{eq:appendix_data}, \eqref{eq:appendix_chain} and \eqref{eq:appendix_girsanov}, we have
\begin{align}\label{eq:appendix_truncated_song}
D_{KL}(p_{\tau}\Vert p_{\tau}^{\bm{\theta}})\le D_{KL}(p_{T}\Vert \pi)+\frac{1}{2}\int_{\tau}^{T}\mathbb{E}_{p_{t}(\mathbf{x}_{t})}\big[ g^{2}(t)\Vert\mathbf{s}_{\bm{\theta}}(\mathbf{x}_{t},t)-\nabla\log{p_{t}(\mathbf{x}_{t})}\Vert_{2}^{2}\big]\diff t.
\end{align}
Now, from
\begin{eqnarray*}
\lefteqn{\frac{1}{2}\int_{\tau}^{T}\mathbb{E}_{p_{t}(\mathbf{x}_{t})}\big[ g^{2}(t)[\Vert\mathbf{s}_{\bm{\theta}}(\mathbf{x}_{t},t)-\nabla_{\mathbf{x}_{t}}\log{p_{t}(\mathbf{x}_{t})}\Vert_{2}^{2}-\Vert\log{p_{t}(\mathbf{x}_{t})}\Vert_{2}^{2}]\big]\diff t}&\\
&&=\frac{1}{2}\int_{\tau}^{T}\mathbb{E}_{p_{t}(\mathbf{x}_{t})}\big[ g^{2}(t)\Vert\mathbf{s}_{\bm{\theta}}(\mathbf{x}_{t},t)\Vert_{2}^{2}-2g^{2}(t)\mathbf{s}_{\bm{\theta}}(\mathbf{x}_{t},t)\cdot\nabla_{\mathbf{x}_{t}}\log{p_{t}(\mathbf{x}_{t})} \big]\diff t\\
&&=\frac{1}{2}\int_{\tau}^{T}\mathbb{E}_{p_{t}(\mathbf{x}_{t})}\big[ g^{2}(t)\Vert\mathbf{s}_{\bm{\theta}}(\mathbf{x}_{t},t)\Vert_{2}^{2}\big]\diff t-\int_{\tau}^{T}\int g^{2}(t)\mathbf{s}_{\bm{\theta}}(\mathbf{x}_{t},t)\cdot\nabla_{\mathbf{x}_{t}} p_{t}(\mathbf{x}_{t}) \diff \mathbf{x}_{t}\diff t\\
&&=\frac{1}{2}\int_{\tau}^{T}\mathbb{E}_{p_{t}(\mathbf{x}_{t})}\big[ g^{2}(t)\Vert\mathbf{s}_{\bm{\theta}}(\mathbf{x}_{t},t)\Vert_{2}^{2}\big]\diff t-\int_{\tau}^{T}\int g^{2}(t)\mathbf{s}_{\bm{\theta}}(\mathbf{x}_{t},t)\cdot\nabla_{\mathbf{x}_{t}} \int p_{r}(\mathbf{x}_{0})p_{0t}(\mathbf{x}_{t}\vert\mathbf{x}_{0})\diff\mathbf{x}_{0} \diff \mathbf{x}_{t}\diff t\\
&&=\frac{1}{2}\int_{\tau}^{T}\mathbb{E}_{p_{t}(\mathbf{x}_{t})}\big[ g^{2}(t)\Vert\mathbf{s}_{\bm{\theta}}(\mathbf{x}_{t},t)\Vert_{2}^{2}\big]\diff t-\int_{\tau}^{T}\int g^{2}(t)\mathbf{s}_{\bm{\theta}}(\mathbf{x}_{t},t)\cdot\int p_{r}(\mathbf{x}_{0})\nabla_{\mathbf{x}_{t}}p_{0t}(\mathbf{x}_{t}\vert\mathbf{x}_{0})\diff\mathbf{x}_{0} \diff \mathbf{x}_{t}\diff t\\
&&=\frac{1}{2}\int_{\tau}^{T}\mathbb{E}_{p_{r}(\mathbf{x}_{0})p_{0t}(\mathbf{x}_{t}\vert\mathbf{x}_{0})}\big[ g^{2}(t)[\Vert\mathbf{s}_{\bm{\theta}}(\mathbf{x}_{t},t)-\nabla_{\mathbf{x}_{t}}\log{p_{0t}(\mathbf{x}_{t}\vert\mathbf{x}_{0})}\Vert_{2}^{2}-\Vert\nabla_{\mathbf{x}_{t}}\log{p_{0t}(\mathbf{x}_{t}\vert\mathbf{x}_{0})}\Vert_{2}^{2}]\big]\diff t,
\end{eqnarray*}
we can transform $\Vert\mathbf{s}_{\bm{\theta}}(\mathbf{x}_{t},t)-\nabla\log{p_{t}(\mathbf{x}_{t})}\Vert_{2}^{2}$ into $\Vert\mathbf{s}_{\bm{\theta}}(\mathbf{x}_{t},t)-\nabla\log{p_{0t}(\mathbf{x}_{t}\vert\mathbf{x}_{0})}\Vert_{2}^{2}$, Eq. \eqref{eq:appendix_truncated_song} is equivalent to
\begin{eqnarray}\label{eq:appendix_truncated}
\lefteqn{\mathbb{E}_{p_{\tau}(\mathbf{x}_{\tau})}\big[-\log{p_{\tau}^{\bm{\theta}}(\mathbf{x}_{\tau})}\big]\le D_{KL}(p_{T}\Vert \pi)+\frac{1}{2}\int_{\tau}^{T}\mathbb{E}_{p_{t}(\mathbf{x}_{t})}\big[ g^{2}(t)\Vert\mathbf{s}_{\bm{\theta}}(\mathbf{x}_{t},t)-\nabla\log{p_{t}(\mathbf{x}_{t})}\Vert_{2}^{2}\big]\diff t+\mathcal{H}(p_{\tau})}&\\
&&=D_{KL}(p_{T}\Vert \pi)+\frac{1}{2}\int_{\tau}^{T}\mathbb{E}_{p_{t}(\mathbf{x}_{t})}\big[ g^{2}(t)\Vert\mathbf{s}_{\bm{\theta}}(\mathbf{x}_{t},t)-\nabla\log{p_{0t}(\mathbf{x}_{t}\vert\mathbf{x}_{0})}\Vert_{2}^{2}-\Vert\nabla\log{p_{0t}(\mathbf{x}_{t}\vert\mathbf{x}_{0})}\Vert_{2}^{2}\big]\diff t\\
&&\quad+\frac{1}{2}\int_{\tau}^{T}\mathbb{E}_{p_{t}(\mathbf{x}_{t})}\big[g^{2}(t)\nabla\log{p_{t}(\mathbf{x}_{t})}\Vert_{2}^{2}\big]\diff t+\mathcal{H}(p_{\tau}).
\end{eqnarray}
Now, directly applying Theorem 4 of \citet{song2021maximum}, the entropy of $\mathcal{H}(p_{\tau})$ becomes
\begin{align}\label{eq:appendix_theorem4_song}
\mathcal{H}(p_{\tau})=\mathcal{H}(p_{T})-\frac{1}{2}\int_{\tau}^{T}\mathbb{E}_{p_{t}(\mathbf{x}_{t})}\big[2\text{div}\big(\mathbf{f}(\mathbf{x}_{t},t)\big)+g^{2}(t)\Vert\nabla\log{p_{t}(\mathbf{x}_{t})}\Vert_{2}^{2}\big]\diff t.
\end{align}
Therefore, from Eq. \eqref{eq:appendix_truncated} and \eqref{eq:appendix_theorem4_song}, we get
\begin{align*}
\mathbb{E}_{p_{\tau}(\mathbf{x}_{\tau})}\big[-\log{p_{\tau}^{\bm{\theta}}(\mathbf{x}_{\tau})}\big]\le& \frac{1}{2}\int_{\tau}^{T}\mathbb{E}_{p_{t}(\mathbf{x}_{t})}\big[ g^{2}(t)\Vert\mathbf{s}_{\bm{\theta}}(\mathbf{x}_{t},t)-\nabla\log{p_{0t}(\mathbf{x}_{t}\vert\mathbf{x}_{0})}\Vert_{2}^{2}-\Vert\nabla\log{p_{0t}(\mathbf{x}_{t}\vert\mathbf{x}_{0})}\Vert_{2}^{2}\big]\diff t\\
&-\int_{\tau}^{T}\mathbb{E}_{\mathbf{x}_{t}}\big[\textup{div}(\mathbf{f}(\mathbf{x}_{t},t))\big]\diff t-\mathbb{E}_{\mathbf{x}_{T}}\big[\log{\pi(\mathbf{x}_{T})}\big].
\end{align*}
\end{proof}

\begingroup
	\renewcommand\thetheorem{1}
\begin{theorem}
Suppose $\lambda(t)$ is a weighting function of the NCSN loss. If $\frac{\lambda(t)}{g^{2}(t)}$ is a nondecreasing and nonnegative absolutely continuous function on $[\epsilon,T]$ and zero on $[0,\epsilon)$, then
\begin{align*}
\mathcal{L}(\bm{\theta};\lambda,\epsilon)\ge&\int_{\epsilon}^{T}\Big(\frac{\lambda(\tau)}{g^{2}(\tau)}\Big)'\mathbb{E}_{\mathbf{x}_{\tau}}\big[-\log{p_{\tau}^{\bm{\theta}}(\mathbf{x}_{\tau})}\big]\diff \tau+\frac{\lambda(\epsilon)}{g^{2}(\epsilon)}\mathbb{E}_{\mathbf{x}_{\epsilon}}\big[-\log{p_{\epsilon}^{\bm{\theta}}(\mathbf{x}_{\epsilon})}\big]\\
&+\int_{\epsilon}^{T}\Big(\frac{\lambda(\tau)}{g^{2}(\tau)}-1\Big)\mathbb{E}_{\mathbf{x}_{\tau}}\big[\textup{div}(\mathbf{f}(\mathbf{x}_{\tau},\tau))\big]\diff \tau+\Big[\frac{\lambda(T)}{g^{2}(T)}-1\Big]\mathbb{E}_{\mathbf{x}_{T}}\big[\log{\pi(\mathbf{x}_{T})}\big].
\end{align*}
\end{theorem}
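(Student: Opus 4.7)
The approach is to start with the truncated variational bound from Lemma~\ref{lemma:1} viewed as a \emph{family} of inequalities indexed by $\tau \in [\epsilon, T]$, and then integrate the family against a cleverly chosen non-negative measure on $[\epsilon, T]$ whose cumulative distribution is $\lambda/g^2$. The hypothesis that $\lambda/g^2$ is non-decreasing, non-negative, and absolutely continuous on $[\epsilon, T]$ (and vanishes on $[0, \epsilon)$) is precisely what makes such a measure exist, namely
\[
\diff\mu(\tau) = \Big(\frac{\lambda(\tau)}{g^2(\tau)}\Big)' \diff\tau \;+\; \frac{\lambda(\epsilon)}{g^2(\epsilon)}\,\delta_\epsilon(\diff\tau),
\]
which is non-negative by monotonicity and has the crucial telescoping property $\mu([\epsilon, t]) = \lambda(t)/g^2(t)$ for every $t \in [\epsilon, T]$ (including the endpoint via the Dirac mass at $\epsilon$ that captures the possible jump of $\lambda/g^2$ from $0$ to $\lambda(\epsilon)/g^2(\epsilon)$). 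The total mass is $\mu([\epsilon,T]) = \lambda(T)/g^2(T) = Z$.

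Next, I would integrate both sides of Lemma~\ref{lemma:1} against $\mu$. Since $\mu$ is a positive measure, the inequality is preserved. The LHS becomes exactly the first two terms in the theorem statement,
\[
\int_\epsilon^T \Big(\frac{\lambda(\tau)}{g^2(\tau)}\Big)'\mathbb{E}_{\mathbf{x}_\tau}[-\log p_\tau^{\bm{\theta}}(\mathbf{x}_\tau)]\,\diff\tau \;+\; \frac{\lambda(\epsilon)}{g^2(\epsilon)}\mathbb{E}_{\mathbf{x}_\epsilon}[-\log p_\epsilon^{\bm{\theta}}(\mathbf{x}_\epsilon)].
\]
For the RHS, write $\mathcal{L}(\bm{\theta};g^2,\tau) = \tfrac{1}{2}\int_\tau^T \phi(t)\diff t - \mathbb{E}[\log\pi(\mathbf{x}_T)]$ with $\phi(t) := \mathbb{E}[g^2\|\mathbf{s}_{\bm{\theta}} - \nabla\log p_{0t}\|_2^2 - g^2\|\nabla\log p_{0t}\|_2^2 - 2\,\textup{div}(\mathbf{f})]$. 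By Fubini applied to the region $\{(\tau,t) : \epsilon \le \tau \le t \le T\}$ and using $\mu([\epsilon,t]) = \lambda(t)/g^2(t)$,
\[
\int_{[\epsilon,T]}\!\int_\tau^T \phi(t)\,\diff t\,\diff\mu(\tau) \;=\; \int_\epsilon^T \phi(t)\,\frac{\lambda(t)}{g^2(t)}\,\diff t.
\]

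Multiplying $\phi(t)$ by $\lambda(t)/g^2(t)$ replaces every occurrence of $g^2(t)$ inside the expectation by $\lambda(t)$, except the $\textup{div}(\mathbf{f})$ term which is scaled by $\lambda(t)/g^2(t)$ rather than by $1$; similarly the prior term picks up a factor $\lambda(T)/g^2(T)$. Regrouping, the integrated RHS equals
\[
\mathcal{L}(\bm{\theta};\lambda,\epsilon) \;-\; \int_\epsilon^T\Big(\frac{\lambda(t)}{g^2(t)} - 1\Big)\mathbb{E}_{\mathbf{x}_t}[\textup{div}(\mathbf{f}(\mathbf{x}_t,t))]\,\diff t \;-\; \Big[\frac{\lambda(T)}{g^2(T)} - 1\Big]\mathbb{E}_{\mathbf{x}_T}[\log\pi(\mathbf{x}_T)],
\]
where $\mathcal{L}(\bm{\theta};\lambda,\epsilon)$ is defined as in Lemma~\ref{lemma:1} but with $g^2$ replaced by $\lambda$ in the two quadratic terms while the $\textup{div}(\mathbf{f})$ and $\log\pi$ terms keep their Lemma~\ref{lemma:1} coefficients (so that $\mathcal{L}(\bm{\theta};g^2,\epsilon)$ recovers Lemma~\ref{lemma:1} exactly). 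Rearranging gives the desired inequality.

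\textbf{Main obstacle.} The delicate step is the bookkeeping at the endpoints: if $\lambda/g^2$ fails to be continuous at $\epsilon$ (it drops to $0$ on $[0,\epsilon)$), then $\mu$ must include the Dirac mass $(\lambda(\epsilon)/g^2(\epsilon))\delta_\epsilon$ so that the identity $\mu([\epsilon,t]) = \lambda(t)/g^2(t)$ holds up to $t=T$; this is what produces the stand-alone $\mathbb{E}[-\log p_\epsilon^{\bm{\theta}}]$ term in the bound. One must also verify the absolute-continuity hypothesis lets the fundamental theorem of calculus be applied to $\lambda/g^2$ on $[\epsilon,T]$ and that Fubini is justified (e.g.\ the integrands are non-negative after accounting for the $\bm{\theta}$-independent constants in Lemma~\ref{lemma:1}, or one invokes the standard integrability assumptions from Appendix A of \citet{song2021maximum}). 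Apart from these measure-theoretic details, the proof is essentially a Fubini computation.
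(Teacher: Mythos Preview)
Your proposal is correct and follows essentially the same route as the paper: both proofs use the fundamental theorem of calculus on the absolutely continuous function $\lambda/g^{2}$ to write $\lambda(t)=\big[\int_{\epsilon}^{t}(\lambda/g^{2})'\diff\tau+\lambda(\epsilon)/g^{2}(\epsilon)\big]g^{2}(t)$, then apply Fubini to exchange the $(t,\tau)$ integrals, and finally invoke Lemma~\ref{lemma:1} at each $\tau$. Your phrasing in terms of the measure $\diff\mu(\tau)=(\lambda/g^{2})'(\tau)\diff\tau+(\lambda(\epsilon)/g^{2}(\epsilon))\delta_{\epsilon}$ is a clean repackaging of the paper's explicit computation (its Eq.~\eqref{eq:exchange}), and your endpoint and $\textup{div}(\mathbf{f})$/$\log\pi$ bookkeeping match the paper's add-and-subtract step exactly.
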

\endgroup

\begin{proof}
We prove the theorm by using
\begin{align}\label{eq:exchange}
\begin{split}
\int_{\epsilon}^{T}\lambda(t)A(t)\diff t=&\int_{\epsilon}^{T}\bigg[\int_{\epsilon}^{t}\Big(\frac{\lambda(t)}{g^{2}(t)}\Big)'\diff\tau+\frac{\lambda(\epsilon)}{g^{2}(\epsilon)}\bigg]g^{2}(t)A(t)\diff t\\
=&\int_{\epsilon}^{T}\int_{\epsilon}^{T}1_{[\epsilon,t]}(\tau)\Big(\frac{\lambda(\tau)}{g^{2}(\tau)}\Big)' g^{2}(t)A(t)\diff \tau\diff t+\frac{\lambda(\epsilon)}{g^{2}(\epsilon)}\int_{\epsilon}^{T}g^{2}(t)A(t)\diff t\\
=&\int_{\epsilon}^{T}\Big(\frac{\lambda(\tau)}{g^{2}(\tau)}\Big)'\int_{\tau}^{T}g^{2}(t)A(t)\diff t \diff\tau+\frac{\lambda(\epsilon)}{g^{2}(\epsilon)}\int_{\epsilon}^{T}g^{2}(t)A(t)\diff t.
\end{split}
\end{align}
By plugging $A(t)=\frac{1}{2}\mathbb{E}_{\mathbf{x}_{t}}\big[\Vert\mathbf{s}_{\bm{\theta}}(\mathbf{x}_{t},t)-\nabla_{\mathbf{x}_{t}}\log{p_{t}(\mathbf{x}_{t})}\Vert_{2}^{2}-\Vert\log{p_{t}(\mathbf{x}_{t})}\Vert_{2}^{2}\big]$ in Eq. \eqref{eq:exchange}, we have
\begin{align}
\mathcal{L}(\bm{\theta};\lambda,\epsilon):=&\frac{1}{2}\int_{\epsilon}^{T}\lambda(t)\mathbb{E}_{\mathbf{x}_{t}}\big[\Vert\mathbf{s}_{\bm{\theta}}(\mathbf{x}_{t},t)-\nabla_{\mathbf{x}_{t}}\log{p_{t}(\mathbf{x}_{t})}\Vert_{2}^{2}-\Vert\nabla_{\mathbf{x}_{t}}\log{p_{t}(\mathbf{x}_{t})}\Vert_{2}^{2}\big]\diff t\notag\\
&-\int_{\epsilon}^{T}\mathbb{E}_{\mathbf{x}_{t}}\big[\text{div}(\mathbf{f}(\mathbf{x}_{t},t))\big]\diff t-\mathbb{E}_{\mathbf{x}_{T}}\big[\log{\pi(\mathbf{x}_{T})}\big]\notag\\
=&\int_{\epsilon}^{T}\Big(\frac{\lambda(\tau)}{g^{2}(\tau)}\Big)'\bigg[\frac{1}{2}\int_{\tau}^{T}g^{2}(t)\mathbb{E}_{\mathbf{x}_{t}}\big[\Vert\mathbf{s}_{\bm{\theta}}(\mathbf{x}_{t},t)-\nabla_{\mathbf{x}_{t}}\log{p_{t}(\mathbf{x}_{t})}\Vert_{2}^{2}-\Vert\nabla_{\mathbf{x}_{t}}\log{p_{t}(\mathbf{x}_{t})}\Vert_{2}^{2}\big]\diff t\notag\\
&\quad\quad-\int_{\tau}^{T}\mathbb{E}_{\mathbf{x}_{t}}\big[\text{div}(\mathbf{f}(\mathbf{x}_{t},t))\big]\diff t-\mathbb{E}_{\mathbf{x}_{T}}\big[\log{\pi(\mathbf{x}_{T})}\big]\bigg]\diff \tau\notag\\
&+\frac{\lambda(\epsilon)}{g^{2}(\epsilon)}\bigg[\frac{1}{2}\int_{\epsilon}^{T}g^{2}(t)\mathbb{E}_{\mathbf{x}_{t}}\big[\Vert\mathbf{s}_{\bm{\theta}}(\mathbf{x}_{t},t)-\nabla_{\mathbf{x}_{t}}\log{p_{t}(\mathbf{x}_{t})}\Vert_{2}^{2}-\Vert\nabla_{\mathbf{x}_{t}}\log{p_{t}(\mathbf{x}_{t})}\Vert_{2}^{2}\big]\diff t\label{eq:appendix_derivation}\\
&\quad\quad-\int_{\epsilon}^{T}\mathbb{E}_{\mathbf{x}_{t}}\big[\text{div}(\mathbf{f}(\mathbf{x}_{t},t))\big]\diff t-\mathbb{E}_{\mathbf{x}_{T}}\big[\log{\pi(\mathbf{x}_{T})}\big]\bigg]\notag\\
&+\int_{\epsilon}^{T}\Big(\frac{\lambda(\tau)}{g^{2}(\tau)}\Big)'\int_{\tau}^{T}\mathbb{E}_{\mathbf{x}_{t}}\big[\text{div}(\mathbf{f}(\mathbf{x}_{t},t))\big]\diff t\diff\tau+\Big(\frac{\lambda(\epsilon)}{g^{2}(\epsilon)}\Big)\int_{\epsilon}^{T}\mathbb{E}_{\mathbf{x}_{t}}\big[\text{div}(\mathbf{f}(\mathbf{x}_{t},t))\big]\diff t\notag\\
&-\int_{\epsilon}^{T}\mathbb{E}_{\mathbf{x}_{t}}\big[\text{div}(\mathbf{f}(\mathbf{x}_{t},t))\big]\diff t+\mathbb{E}_{\mathbf{x}_{T}}\big[\log{\pi(\mathbf{x}_{T})}\big]\bigg[\int_{\epsilon}^{T}\Big(\frac{\lambda(\tau)}{g^{2}(\tau)}\Big)'\diff\tau+\frac{\lambda(\epsilon)}{g^{2}(\epsilon)}-1\bigg].\notag
\end{align}

Also, plugging $A(t)=\frac{1}{g^{2}(t)}\mathbb{E}_{\mathbf{x}_{t}}\big[\text{div}\big(\mathbf{f}(\mathbf{x}_{t},t)\big)\big]$ into Eq. \eqref{eq:exchange}, we have
\begin{align}\label{eq:appendix_plugging}
\int_{\epsilon}^{T}\frac{\lambda(t)}{g^{2}(t)}\mathbb{E}_{\mathbf{x}_{t}}\big[\text{div}\big(\mathbf{f}(\mathbf{x}_{t},t)\big)\big]=\int_{\epsilon}^{T}\Big(\frac{\lambda(\tau)}{g^{2}(\tau)}\Big)'\int_{\tau}^{T}\mathbb{E}_{\mathbf{x}_{t}}\big[\text{div}(\mathbf{f}(\mathbf{x}_{t},t))\big]\diff t\diff\tau+\Big(\frac{\lambda(\epsilon)}{g^{2}(\epsilon)}\Big)\int_{\epsilon}^{T}\mathbb{E}_{\mathbf{x}_{t}}\big[\text{div}(\mathbf{f}(\mathbf{x}_{t},t))\big]\diff t.
\end{align}

Using Eq. \eqref{eq:appendix_derivation} and \eqref{eq:appendix_plugging}, we get
\begin{align}\label{eq:appendix_final_}
\begin{split}
\mathcal{L}(\bm{\theta};\lambda,\epsilon)=&\int_{\epsilon}^{T}\Big(\frac{\lambda(\tau)}{g^{2}(\tau)}\Big)'\mathcal{L}(\bm{\theta};g^{2},\tau)\diff\tau+\frac{\lambda(\epsilon)}{g^{2}(\epsilon)}\mathcal{L}(\bm{\theta};g^{2},\epsilon)\\
&+\int_{\epsilon}^{T}\Big(\frac{\lambda(t)}{g^{2}(t)}-1\Big)\mathbb{E}_{\mathbf{x}_{t}}\big[\text{div}(\mathbf{f}(\mathbf{x}_{t},t))\big]\diff t+\Big[\frac{\lambda(T)}{g^{2}(T)}-1\Big]\mathbb{E}_{\mathbf{x}_{T}}\big[\log{\pi(\mathbf{x}_{T})}\big].
\end{split}
\end{align}
Then, applying Lemma \ref{lemma:1} to Eq. \eqref{eq:appendix_final_} yields the desired result.
\end{proof}

\begin{corollary}
Suppose $\lambda(t)$ is a weighting function of the NCSN loss. If $\frac{\lambda(t)}{g^{2}(t)}$ is a nondecreasing and nonnegative continuous function on $[\epsilon,T]$ and zero on $[0,\epsilon)$, then
\begin{align*}
&\frac{1}{2}\int_{\epsilon}^{T}\lambda(t)\mathbb{E}_{\mathbf{x}_{t}}\big[\Vert\mathbf{s}_{\bm{\theta}}(\mathbf{x}_{t},t)-\nabla_{\mathbf{x}_{t}}\log{p_{t}(\mathbf{x}_{t})}\Vert_{2}^{2}\big]\diff t+\frac{\lambda(T)}{g^{2}(T)}D_{KL}(p_{T}\Vert\pi)\\
&\quad\quad\quad\ge\int_{\epsilon}^{T}\Big(\frac{\lambda(\tau)}{g^{2}(\tau)}\Big)'D_{KL}(p_{\tau}\Vert p_{\tau}^{\bm{\theta}})\diff \tau+\frac{\lambda(\epsilon)}{g^{2}(\epsilon)}D_{KL}(p_{\epsilon}\Vert p_{\epsilon}^{\bm{\theta}}).
\end{align*}
\end{corollary}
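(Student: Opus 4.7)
The plan is to reduce the corollary to Theorem 1 by converting each cross-entropy $\mathbb{E}_{\mathbf{x}_\tau}[-\log p_\tau^{\bm{\theta}}]$ into entropy plus KL, and then using the Fokker--Planck equation to show that the leftover entropy and divergence terms telescope to zero. First, I would isolate the score-matching piece in $\mathcal{L}(\bm{\theta};\lambda,\epsilon)$: by definition
\[
\tfrac12\!\int_\epsilon^T\!\!\lambda(t)\mathbb{E}[\|\mathbf{s}_{\bm{\theta}}-\nabla\log p_t\|_2^2]\,dt = \mathcal{L}(\bm{\theta};\lambda,\epsilon) + \tfrac12\!\int_\epsilon^T\!\!\lambda(t)\mathbb{E}[\|\nabla\log p_t\|_2^2]\,dt + \int_\epsilon^T\!\!\mathbb{E}[\textup{div}(\mathbf{f})]\,dt + \mathbb{E}[\log\pi(\mathbf{x}_T)].
\]
Plugging in the lower bound from Theorem 1 and collapsing the $(u(\tau)-1)$ and $(u(T)-1)$ coefficients (where $u:=\lambda/g^2$) yields
\[
\tfrac12\!\int_\epsilon^T\!\!\lambda(t)\mathbb{E}[\|\mathbf{s}_{\bm{\theta}}-\nabla\log p_t\|_2^2]\,dt \;\ge\; \int_\epsilon^T\! u'(\tau)\mathbb{E}[-\log p_\tau^{\bm{\theta}}]\,d\tau + u(\epsilon)\mathbb{E}[-\log p_\epsilon^{\bm{\theta}}] + \int_\epsilon^T\! u(\tau)\mathbb{E}[\textup{div}(\mathbf{f})]\,d\tau + u(T)\mathbb{E}[\log\pi] + \tfrac12\!\int_\epsilon^T\!\!\lambda(t)\mathbb{E}[\|\nabla\log p_t\|_2^2]\,dt.
\]

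Next, I would substitute $\mathbb{E}[-\log p_\tau^{\bm{\theta}}] = H(p_\tau) + D_{KL}(p_\tau\|p_\tau^{\bm{\theta}})$ and $\mathbb{E}[\log\pi(\mathbf{x}_T)] = -H(p_T) - D_{KL}(p_T\|\pi)$ into the bound above. This immediately produces the three KL terms that appear in the corollary, namely $\int u'\,D_{KL}(p_\tau\|p_\tau^{\bm{\theta}})d\tau + u(\epsilon)D_{KL}(p_\epsilon\|p_\epsilon^{\bm{\theta}}) - u(T)D_{KL}(p_T\|\pi)$, together with a residual
\[
R := \int_\epsilon^T u'(\tau)H(p_\tau)d\tau + u(\epsilon)H(p_\epsilon) - u(T)H(p_T) + \int_\epsilon^T u(\tau)\mathbb{E}[\textup{div}(\mathbf{f})]\,d\tau + \tfrac12\!\int_\epsilon^T\!\lambda(\tau)\mathbb{E}[\|\nabla\log p_\tau\|_2^2]d\tau.
\]
It remains to show $R=0$. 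Integration by parts gives $\int_\epsilon^T u'(\tau)H(p_\tau)d\tau = u(T)H(p_T) - u(\epsilon)H(p_\epsilon) - \int_\epsilon^T u(\tau)H'(p_\tau)d\tau$, so the boundary entropies cancel and $R = -\int_\epsilon^T u(\tau)H'(p_\tau)d\tau + \int_\epsilon^T u(\tau)\mathbb{E}[\textup{div}(\mathbf{f})]d\tau + \tfrac12\!\int_\epsilon^T\lambda(\tau)\mathbb{E}[\|\nabla\log p_\tau\|_2^2]d\tau$.

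The final step uses the entropy evolution formula along the forward SDE, which follows from the Fokker--Planck equation and two integrations by parts on the spatial variable:
\[
H'(p_\tau) = \mathbb{E}_{\mathbf{x}_\tau}[\textup{div}(\mathbf{f}(\mathbf{x}_\tau,\tau))] + \tfrac12 g^2(\tau)\mathbb{E}_{\mathbf{x}_\tau}[\|\nabla\log p_\tau(\mathbf{x}_\tau)\|_2^2].
\]
Multiplying by $u(\tau)$ and using the identity $u(\tau)g^2(\tau)=\lambda(\tau)$, the integral $\int_\epsilon^T u(\tau)H'(p_\tau)d\tau$ matches the remaining two integrals in $R$ exactly, so $R=0$ and the corollary follows. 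The main obstacle I anticipate is the rigorous justification of the entropy derivative identity (spatial tail conditions for the integration by parts and smoothness of $p_\tau$), which I would discharge by appealing to the standing regularity assumptions from Appendix A of \citet{song2021maximum}; a secondary point is that the $u'$ used in Theorem 1 requires $\lambda/g^2$ to be at least absolutely continuous, which I would state as part of the hypothesis so that the application of Theorem 1 is literal.
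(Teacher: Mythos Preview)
Your argument is correct, but it takes a substantially longer route than the paper's. You pass through Theorem~1 (the cross-entropy/NELBO form), decompose each $\mathbb{E}[-\log p_\tau^{\bm{\theta}}]$ as $H(p_\tau)+D_{KL}(p_\tau\Vert p_\tau^{\bm{\theta}})$, and then have to invoke the Fokker--Planck entropy identity $H'(p_\tau)=\mathbb{E}[\textup{div}(\mathbf{f})]+\tfrac12 g^2(\tau)\mathbb{E}[\|\nabla\log p_\tau\|^2]$ to make the residual $R$ vanish. The paper, by contrast, does \emph{not} go through Theorem~1 at all: it applies the same exchange-of-integration identity (their Eq.~\eqref{eq:exchange}) directly to $A(t)=\tfrac12\mathbb{E}[\|\mathbf{s}_{\bm{\theta}}-\nabla\log p_t\|_2^2]$, and then uses the already-established KL bound
\[
\tfrac12\!\int_\tau^T g^2(t)\,\mathbb{E}[\|\mathbf{s}_{\bm{\theta}}-\nabla\log p_t\|_2^2]\,\diff t \;\ge\; D_{KL}(p_\tau\Vert p_\tau^{\bm{\theta}})-D_{KL}(p_T\Vert\pi)
\]
inside the $\tau$-integral (valid because $(\lambda/g^2)'\ge 0$). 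The boundary $D_{KL}(p_T\Vert\pi)$ terms then telescope via $\int_\epsilon^T u'(\tau)\diff\tau+u(\epsilon)=u(T)$, and one is done in three lines. Your route has the merit of displaying the corollary as a direct consequence of Theorem~1 and making the entropy bookkeeping explicit, but it incurs the Fokker--Planck computation and its attendant tail/regularity justifications; the paper's route avoids all of that by working in KL form from the outset.
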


\begin{remark}
A direct extension of the proof indicates that Theorem \ref{thm:1} still holds when $\frac{\lambda(t)}{g^{2}(t)}$ has finite jump on $[0,T]$.
\end{remark}
\begin{remark}
The weight of $\frac{\lambda(T)}{g^{2}(T)}$ is the normalizing constant of the unnormalized truncation probability, $\mathbb{P}$.
\end{remark}

\begin{proof}
By plugging $A(t)=\frac{1}{2}\mathbb{E}_{\mathbf{x}_{t}}\big[\Vert\mathbf{s}_{\bm{\theta}}(\mathbf{x}_{t},t)-\nabla_{\mathbf{x}_{t}}\log{p_{t}(\mathbf{x}_{t})}\Vert_{2}^{2}\big]$ in Eq. \eqref{eq:exchange} and using Lemma \ref{lemma:1}, we have
\begin{eqnarray*}
\lefteqn{\frac{1}{2}\int_{\epsilon}^{T}\lambda(t)\mathbb{E}_{\mathbf{x}_{t}}\big[\Vert\mathbf{s}_{\bm{\theta}}(\mathbf{x}_{t},t)-\nabla_{\mathbf{x}_{t}}\log{p_{t}(\mathbf{x}_{t})}\Vert_{2}^{2}\big]\diff t+\frac{\lambda(T)}{g^{2}(T)}D_{KL}(p_{T}\Vert\pi)}&\\
&&=\int_{\epsilon}^{T}\Big(\frac{\lambda(\tau)}{g^{2}(\tau)}\Big)'\frac{1}{2}\int_{\tau}^{T}g^{2}(t)\mathbb{E}_{\mathbf{x}_{t}}\big[\Vert\mathbf{s}_{\bm{\theta}}(\mathbf{x}_{t},t)-\nabla_{\mathbf{x}_{t}}\log{p_{t}(\mathbf{x}_{t})}\Vert_{2}^{2}\big]\diff t\diff\tau\\
&&\quad+\Big(\frac{\lambda(\epsilon)}{g^{2}(\epsilon)}\Big)\frac{1}{2}\int_{\epsilon}^{T}g^{2}(t)\mathbb{E}_{\mathbf{x}_{t}}\big[\Vert\mathbf{s}_{\bm{\theta}}(\mathbf{x}_{t},t)-\nabla_{\mathbf{x}_{t}}\log{p_{t}(\mathbf{x}_{t})}\Vert_{2}^{2}\big]\diff t+\frac{\lambda(T)}{g^{2}(T)}D_{KL}(p_{T}\Vert\pi)\\
&&\ge\int_{\epsilon}^{T}\Big(\frac{\lambda(\tau)}{g^{2}(\tau)}\Big)'\big[D_{KL}(p_{\tau}\Vert p_{\tau}^{\bm{\theta}})-D_{KL}(p_{T}\Vert\pi)\big]\diff\tau+\frac{\lambda(\epsilon)}{g^{2}(\epsilon)}\big[D_{KL}(p_{\epsilon}\Vert p_{\epsilon}^{\bm{\theta}})-D_{KL}(p_{T}\Vert\pi)\big]+\frac{\lambda(T)}{g^{2}(T)}D_{KL}(p_{T}\Vert\pi)\\
&&=\int_{\epsilon}^{T}\Big(\frac{\lambda(\tau)}{g^{2}(\tau)}\Big)'D_{KL}(p_{\tau}\Vert p_{\tau}^{\bm{\theta}})\diff\tau+\frac{\lambda(\epsilon)}{g^{2}(\epsilon)}D_{KL}(p_{\epsilon}\Vert p_{\epsilon}^{\bm{\theta}}).
\end{eqnarray*}
\end{proof}

\section{Additional Score Architectures and SDEs}

\subsection{Additional Score Architectures: Unbounded Parametrization}

From the released code of \citet{song2020score}, the NCSN++ network is modeled by $\mathbf{s}_{\bm{\theta}}(\mathbf{x}_{t},\log{\sigma(t)})$, where the second argument is $\log{\sigma(t)}$ instead of $t$. Experiments with $\mathbf{s}_{\bm{\theta}}(\mathbf{x}_{t},t)$ or $\mathbf{s}_{\bm{\theta}}(\mathbf{x}_{t},\sigma(t))$ were not as good as the parametrization of $\mathbf{s}_{\bm{\theta}}(\mathbf{x}_{t},\log{\sigma(t)})$, and we analyze this experimental results from Lemma \ref{lemma:2} and Proposition \ref{prop:1}.

\begin{proposition}\label{prop:1}
		Let $\mathcal{H}_{[1,\infty)}=\{\mathbf{s}:\mathbb{R}^{d}\times[1,\infty)\rightarrow\mathbb{R}^{d},\text{ $\mathbf{s}$ is locally Lipschitz}\}$. Suppose a continuous vector field $\mathbf{v}$ defined on a $d$-dimensional open subset $U$ of a compact manifold $M$ is unbounded, and the projection of $\mathbf{v}$ on each axis is locally integrable. Then, there exists $\mathbf{s}\in\mathcal{H}_{[1,\infty)}$ such that $\lim_{\eta\rightarrow \infty}\mathbf{s}(\mathbf{x},\eta)=\mathbf{v}(\mathbf{x})$ a.e. on $U$.
	\end{proposition}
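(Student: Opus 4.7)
The plan is to construct $\mathbf{s}$ by convolution of $\mathbf{v}$ with a mollifier whose bandwidth shrinks like $1/\eta$. Since $\eta$ ranges over the unbounded interval $[1,\infty)$, the mollifier can sharpen indefinitely, recovering the (possibly unbounded) $\mathbf{v}$ in the limit, while each $\mathbf{s}(\cdot,\eta)$ is smooth. This contrasts with Lemma 1, whose contradiction crucially exploits the compactness of $[0,T]$ together with local Lipschitzness to force $\mathbf{s}$ to be bounded near $t=0$.

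Concretely, I first work in a single chart covering $U$ (handling the general case by a partition of unity if needed) to identify $\mathbf{v}$ with a map into $\mathbb{R}^d$ and extend it by $\mathbf{0}$ outside $U$ to obtain $\tilde{\mathbf{v}}\colon\mathbb{R}^d\to\mathbb{R}^d$; by the local-integrability hypothesis, each component of $\tilde{\mathbf{v}}$ lies in $L^1_{\mathrm{loc}}(\mathbb{R}^d)$. Fix a standard mollifier $\phi\in C_c^\infty(\mathbb{R}^d)$ with $\mathrm{supp}(\phi)\subset B_1(0)$, $\phi\ge 0$, $\int\phi\,d\mathbf{x}=1$, and set $\phi_\eta(\mathbf{x}):=\eta^d\phi(\eta\mathbf{x})$. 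Then define
\[
\mathbf{s}(\mathbf{x},\eta)\;:=\;(\tilde{\mathbf{v}}*\phi_\eta)(\mathbf{x})\;=\;\int_{\mathbb{R}^d}\tilde{\mathbf{v}}(\mathbf{y})\,\eta^d\phi\bigl(\eta(\mathbf{x}-\mathbf{y})\bigr)\,d\mathbf{y}.
\]

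Next I verify joint smoothness, hence local Lipschitzness, on $\mathbb{R}^d\times[1,\infty)$. Any compact $K\subset\mathbb{R}^d\times[1,\infty)$ sits inside some $K_1\times[1,M]$ with $K_1\subset\mathbb{R}^d$ compact and $M<\infty$. On such a box the integration variable $\mathbf{y}$ is confined to the compact set $K_1+\overline{B_1(0)}$ (the mollifier support has radius at most $1/\eta\le 1$), so $\tilde{\mathbf{v}}$ is integrable there, while $\eta^d\phi\bigl(\eta(\mathbf{x}-\mathbf{y})\bigr)$ and all its $(\mathbf{x},\eta)$-partials are uniformly bounded on $K_1\times[1,M]\times(K_1+\overline{B_1(0)})$ in terms of $M$, $d$, and the $C^k$-norms of $\phi$. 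Dominated convergence then justifies iterated differentiation under the integral, giving $\mathbf{s}\in C^\infty(\mathbb{R}^d\times[1,\infty))$, which is a fortiori locally Lipschitz. For the limit, any $\mathbf{x}\in U$ is an interior point, so for every $\eta$ larger than some $\eta_0(\mathbf{x})$ the set $\mathbf{x}-\mathrm{supp}(\phi_\eta)$ lies inside $U$ and $\mathbf{s}(\mathbf{x},\eta)$ reduces to the classical mollification of the continuous function $\mathbf{v}$. Standard mollifier theory then gives $\mathbf{s}(\mathbf{x},\eta)\to\mathbf{v}(\mathbf{x})$ pointwise (in fact uniformly on compact subsets of $U$), which is stronger than the claimed a.e.\ convergence.

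The main obstacle I anticipate is not the existence of the construction but the joint local Lipschitz condition on an unbounded $\eta$-domain. The naive worry is that sharpening the mollifier forces the Lipschitz constant of $\mathbf{s}(\cdot,\eta)$ to blow up in $\eta$; the resolution, and precisely the reason the hypothesis $[1,\infty)$ (as opposed to $[0,T]$ in Lemma 1) is essential, is that local Lipschitzness is insensitive to growth of Lipschitz constants at the non-compact end of the time axis. Verifying this requires only that compact subsets of $[1,\infty)$ are bounded above, which is precisely why Lemma 1's obstruction disappears here.
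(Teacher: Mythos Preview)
Your proposal is correct and follows essentially the same route as the paper: define $\mathbf{s}(\mathbf{x},\eta)$ as the mollification of (the zero-extended) $\mathbf{v}$ with bandwidth $1/\eta$, verify local Lipschitzness on compact boxes $K_1\times[1,M]$, and invoke standard mollifier convergence for the limit. Your joint-smoothness argument via differentiation under the integral is slightly cleaner than the paper's triangle-inequality split into separate $\mathbf{x}$- and $\eta$-Lipschitz estimates, but the core construction is identical.
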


The gradient of the log transition probability diverges at $t\approx 0$ theoretically (Section \ref{sec:score_fail}) and empirically (Figure \ref{fig:uddpm}-(a)). Here, in high-dimensional space, $p_{0t}(\mathbf{x}_{t}\vert\mathbf{x}_{0})/p_{0t}(\mathbf{x}_{t}\vert\mathbf{x}_{0})$ with $\mathbf{x}_{0}\neq\mathbf{x}_{0}'$ is either zero or infinity. Thus, the data score is nearly identical to the gradient of the log transition probability, $\Vert\nabla_{\mathbf{x}_{t}}\log{p_{t}(\mathbf{x}_{t})}\Vert_{2}^{2}=\Vert\nabla_{\mathbf{x}_{t}}\log{\int p_{r}(\mathbf{x}_{0})p_{0t}(\mathbf{x}_{t}\vert\mathbf{x}_{0})\diff\mathbf{x}_{0}}\Vert_{2}^{2}\approx\Vert\nabla_{\mathbf{x}_{t}}\log{p_{0t}(\mathbf{x}_{t}\vert\mathbf{x}_{0})}\Vert_{2}^{2}$, and the observation of Figure \ref{fig:uddpm}-(a) is valid for the exact data score, as well.

Although Lemma \ref{lemma:2} is based on $\mathbf{s}_{\bm{\theta}}(\mathbf{x}_{t},t)$, the identical result also holds for the parametrization of $\mathbf{s}_{\bm{\theta}}(\mathbf{x}_{t},\sigma(t))$, so it indicates that both $\mathbf{s}_{\bm{\theta}}(\mathbf{x}_{t},t)$ and $\mathbf{s}_{\bm{\theta}}(\mathbf{x}_{t},\sigma(t))$ cannot estimate the data score as $t\rightarrow 0$. On the other hand, Proposition \ref{prop:1} implies that there exists a score function that estimates the unbounded data score asymptotically, and Proposition \ref{prop:1} explains the reason why the parametrization of \citet{song2020score}, i.e., $\mathbf{s}_{\bm{\theta}}(\mathbf{x}_{t},\log{\sigma(t)})$, is successful on score estimation.

On top of that, we introduce another parametrization that particularly focuses on the score estimation near $t\approx 0$. We name Unbounded NCSN++ (UNCSN++) as the network of $\mathbf{s}_{\bm{\theta}}(\mathbf{x}_{t},\eta(t))$ with $\eta(t)=\left\{\begin{array}{ll}\log{\sigma(t)}&\text{if }\sigma(t)\ge \sigma_{0}\\ -\frac{c_{1}}{\sigma(t)}+c_{2}&\text{if }\sigma(t)<\sigma_{0}\end{array}\right.$ and Unbounded DDPM++ (UDDPM++) as the network of $\mathbf{s}_{\bm{\theta}}(\mathbf{x}_{t},\eta(t))$ with $\eta(t):=\int\frac{g^{2}(t)}{\sigma^{2}(t)}\diff t$.

	In UNCSN++, $c_{1}, c_{2}$ and $\sigma_{0}$ are the hyperparameters. By acknowledging the parametrization of $\log{\sigma(t)}$, we choose $\sigma_{0}$ as $0.01$. Also, to satisfy the continuously differentiability of $\eta(t)$, two hyperparameters $c_{1}$ and $c_{2}$ satisfy a system of equations with degree 2, so $c_{1}$ and $c_{2}$ are fully determined with this system of equations.
	
\begin{figure*}[t]
\centering
	\begin{subfigure}{0.32\linewidth}
	\includegraphics[width=\linewidth]{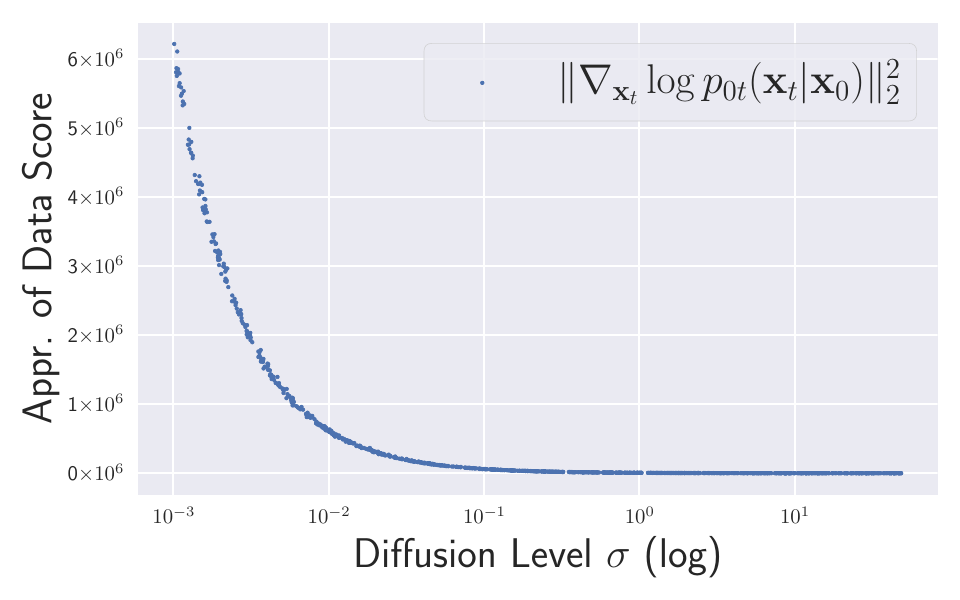}
	\subcaption{Approximate data score diverges.}
	\end{subfigure}
	\hfil
	\begin{subfigure}{0.32\linewidth}
	\includegraphics[width=\linewidth]{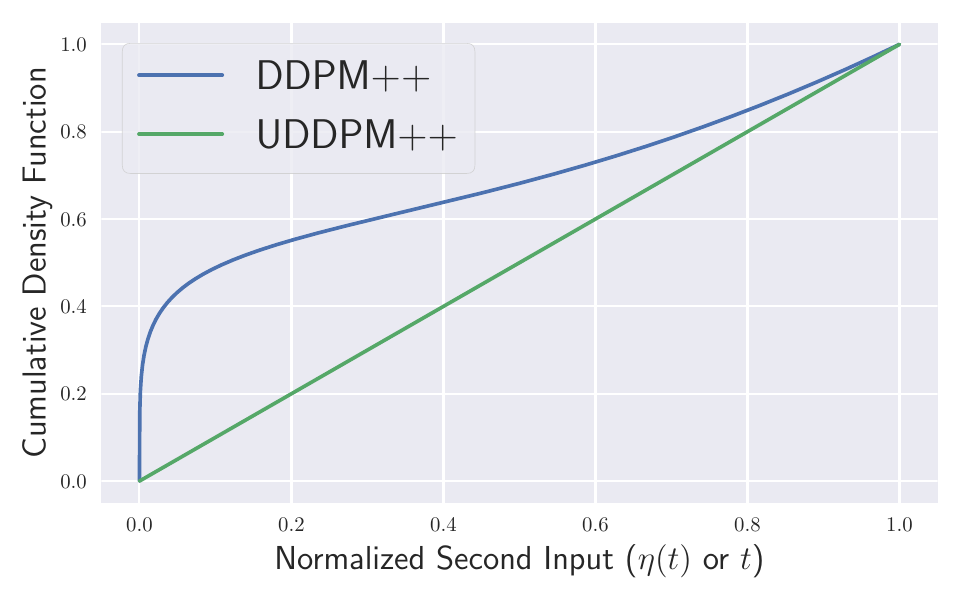}
	\subcaption{Cumulative density function of $t$ and $\eta$.}
	\end{subfigure}
	\hfil
	\begin{subfigure}{0.32\linewidth}
	\includegraphics[width=\linewidth]{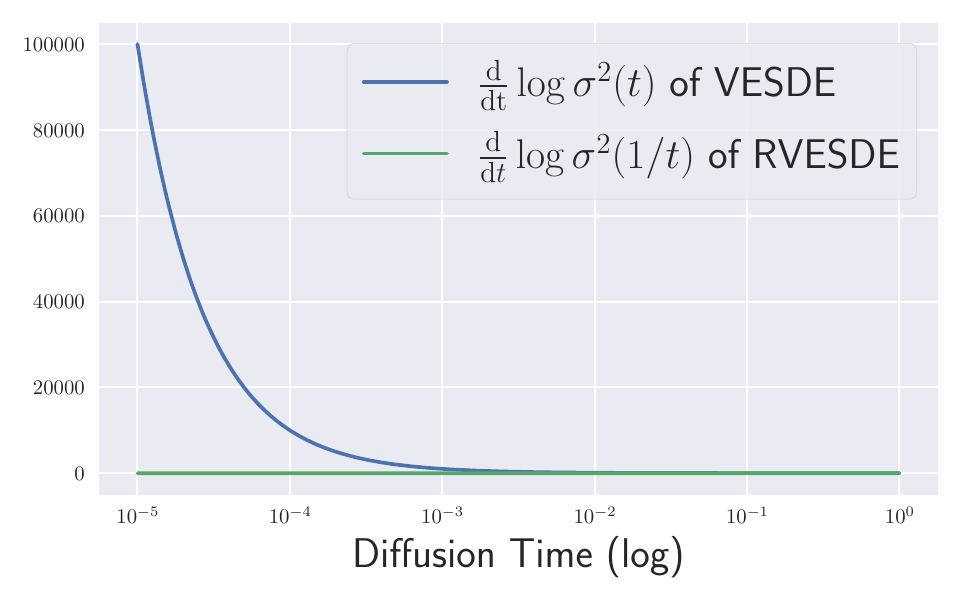}
	\subcaption{VESDE violates geometric progression.}
	\end{subfigure}
	\caption{(a) The approximate data score, $\Vert\nabla_{\mathbf{x}_{t}}\log{p_{t}(\mathbf{x}_{t})}\Vert_{2}^{2}=\Vert\nabla_{\mathbf{x}_{t}}\log{\int p_{r}(\mathbf{x}_{0})p_{0t}(\mathbf{x}_{t}\vert\mathbf{x}_{0})\diff\mathbf{x}_{0}}\Vert_{2}^{2}\approx\Vert\nabla_{\mathbf{x}_{t}}\log{p_{0t}(\mathbf{x}_{t}\vert\mathbf{x}_{0})}\Vert_{2}^{2}$, diverges as $t\rightarrow 0$. (b) Comparison of DDPM++ and UDDPM++ in terms of the cumulative density function of the second input. (c) Comparison of VESDE and RVESDE in terms of $\frac{\diff}{\diff t}\log{\sigma^{2}}$.}
	\label{fig:uddpm}
\end{figure*}
The choice of such $\eta(t)$ for UDDPM++ is expected to enhance the score estimation near $t\approx 0$ because the input of $\eta(t)$ is distributed uniformly when we draw samples from the importance weight. Concretely, when the sampling distribution on the diffusion time is given by $p_{iw}(t)\propto\frac{g^{2}(t)}{\sigma^{2}(t)}$, the $\eta$-distribution from the importance sampling becomes $p(\eta)\propto 1$, which is depicted in Figure \ref{fig:uddpm}-(b).

	\begin{proof}[Proof of Proposition \ref{prop:1}]
		Let $h$ be a standard mollifier function. If $h_{t}(x)=t^{-n}h(\mathbf{x}/t)$, then $v_{t}:=h_{t}*v$ converges to $v$ a.e. on $U$ as $t\rightarrow 0$ (Theorem 7-(ii) of Appendix C in \cite{evans1998partial}). Therefore, if we define $s(\mathbf{x},\eta):=v_{1/\eta}(\mathbf{x})$ on the domain of $v_{1/\eta}(\mathbf{x})$ and $s(\mathbf{x},\eta):=0$ elsewhere, then $s(\mathbf{x},\eta)=v_{1/\eta}(\mathbf{x})\rightarrow v(\mathbf{x})$ a.e. on $U$ as $\eta\rightarrow\infty$.
		
		Now, to show that $\mathbf{s}(\mathbf{x},\eta)$ is locally Lipschitz, let $\tilde{M}\times [\underline{\eta},\overline{\eta}]$ be a compact subset of $\mathbb{R}^{n}\times[1,\infty)$. From $\Vert \mathbf{s}(\mathbf{x}_{1},\eta_{1})-\mathbf{s}(\mathbf{x}_{2},\eta_{2})\Vert=\Vert v_{1/\eta_{1}}(\mathbf{x}_{1})-v_{1/\eta_{2}}(\mathbf{x}_{2})\Vert\le\Vert v_{1/\eta_{1}}(\mathbf{x}_{1})-v_{1/\eta_{1}}(\mathbf{x}_{2})\Vert+\Vert v_{1/\eta_{1}}(\mathbf{x}_{2})-v_{1/\eta_{2}}(\mathbf{x}_{2})\Vert$, if there exists $K_{1},K_{2}>0$ such that $\Vert v_{1/\eta_{1}}(\mathbf{x}_{1})-v_{1/\eta_{1}}(\mathbf{x}_{2})\Vert\le K_{1}\Vert \mathbf{x}_{1}-\mathbf{x}_{2}\Vert$ and $\Vert v_{1/\eta_{1}}(\mathbf{x}_{1})-v_{1/\eta_{2}}(\mathbf{x}_{1})\Vert\le K_{2}\vert \eta_{1}-\eta_{2}\vert$ for all $\mathbf{x}_{1},\mathbf{x}_{2}\in \tilde{M}$ and $\eta_{1},\eta_{2}\in [\underline{\eta},\overline{\eta}]$, then $\mathbf{s}(\mathbf{x},\eta)=v_{1/\eta}(\mathbf{x})$ is Lipschitz on $\tilde{M}\times[\underline{\eta},\overline{\eta}]$. 
		
		First, since $v_{1/\eta}$ is infinitely differentiable on its domain (Theorem 7-(i) of Appendix C in \cite{evans1998partial}) and $\eta\in[\underline{\eta},\overline{\eta}]$, there exists $K_{1}>0$ such that $\Vert v_{1/\eta}(\mathbf{x}_{1})-v_{1/\eta}(\mathbf{x}_{2})\Vert\le K_{1}\Vert \mathbf{x}_{1}-\mathbf{x}_{2}\Vert$. Second, the mollifier satisfies the uniform convergence on any compact subset of $U$ (Theorem 7-(iii) of Appendix C in \cite{evans1998partial}), which leads that $\Vert v_{1/\eta_{1}}(\mathbf{x})-v_{1/\eta_{2}}(\mathbf{x})\Vert\le K_{2}\vert \frac{1}{\eta_{1}}-\frac{1}{\eta_{2}}\vert=K_{2}\frac{\vert \eta_{1}-\eta_{2}\vert}{\eta_{1}\eta_{2}}\le K_{3}\vert \eta_{1}-\eta_{2}\vert$ for some $K_{2},K_{3}>0$. Therefore, $\mathbf{s}$ becomes an element of $\mathcal{H}_{[1,\infty)}$.
	\end{proof}	

\subsection{Additional SDE: Reciprocal VESDE}

VESDE assumes $g(t)=\sigma_{min}(\frac{\sigma_{max}}{\sigma_{min}})^{t}\sqrt{2\log{\frac{\sigma_{max}}{\sigma_{min}}}}$. Then, the variance of the transition probability $p_{0t}(\mathbf{x}_{t}\vert\mu_{VE}(t)\mathbf{x}_{0},\sigma_{VE}^{2}(t))$ becomes $\sigma_{VE}^{2}(t)=\int_{0}^{t}g^{2}(s)\diff s =\sigma_{min}^{2}[(\frac{\sigma_{max}}{\sigma_{min}})^{2t}-1]$ if the diffusion starts from $t=0$ with the initial condition of $\mathbf{x}_{0}\sim p_{r}$. VESDE was originally introduced in \citet{song2020improved} in order to satisfy the geometric property for its smooth transition of the distributional shift. Mathematically, the variance is geometric if $\frac{\diff}{\diff t}\log{\sigma_{VE}^{2}(t)}$ is a constant, but VESDE losses the geometric property as illustrated in Figure \ref{fig:uddpm}-(c).

To attain the geometric property in VESDE, VESDE approximates the variance to be $\tilde{\sigma}_{VE}^{2}(t)=\sigma_{min}^{2}(\frac{\sigma_{max}}{\sigma_{min}})^{2t}$ by omitting 1 from $\sigma_{VE}^{2}(t)$. However, this approximation leads that $\mathbf{x}_{t}$ is not converging to $\mathbf{x}_{0}$ in distribution because $\sigma_{min}^{2}(\frac{\sigma_{max}}{\sigma_{min}})^{2t}\rightarrow \sigma_{min}^{2}\neq 0$ as $t\rightarrow 0$. Indeed, a bit stronger claim is possible:
\begin{proposition}\label{prop:2}
		There is no SDE that has the stochastic process $\{\mathbf{x}_{t}\}_{t\in[0,T]}$, defined by a transition probability $p_{0t}(\mathbf{x}_{t}\vert\mathbf{x}_{0})=\mathcal{N}(\mathbf{x}_{t};\mathbf{x}_{0},\sigma_{min}^{2}(\frac{\sigma_{max}}{\sigma_{min}})^{2t}\mathbf{I})$, as the solution.
	\end{proposition}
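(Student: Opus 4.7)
The plan is a short contradiction argument localized at the initial time $t=0$. Suppose toward contradiction that some It\^{o} SDE admits a strong solution whose transition density equals $p_{0t}(\mathbf{x}_{t}\vert\mathbf{x}_{0})=\mathcal{N}\big(\mathbf{x}_{t};\mathbf{x}_{0},\tilde{\sigma}_{VE}^{2}(t)\mathbf{I}\big)$ on $[0,T]$, where $\tilde{\sigma}_{VE}^{2}(t):=\sigma_{min}^{2}(\sigma_{max}/\sigma_{min})^{2t}$. The contradiction will come from comparing the value of $p_{0t}$ forced by this formula at $t=0$ against the value that any It\^{o} SDE must assign to $p_{0t}$ at $t=0$.

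First I would recall the tautological initial condition for an It\^{o} diffusion: the solution starting from the deterministic point $\mathbf{x}_{0}$ at time $0$ equals $\mathbf{x}_{0}$ almost surely, so the regular conditional law of $\mathbf{x}_{0}$ given $\mathbf{x}_{0}$ is the Dirac measure $\delta_{\mathbf{x}_{0}}$. Moreover, from the integrated form $\mathbf{x}_{t}-\mathbf{x}_{0}=\int_{0}^{t}\mathbf{f}(\mathbf{x}_{s},s)\diff s+\int_{0}^{t}g(s)\diff\bm{\omega}_{s}$ (or its state-dependent analogue), the It\^{o} isometry together with the local integrability conditions assumed in the appendix of \citet{song2021maximum} gives $\mathbf{x}_{t}\to\mathbf{x}_{0}$ in $L^{2}$ as $t\to 0^{+}$, so $p_{0t}(\cdot\vert\mathbf{x}_{0})\to\delta_{\mathbf{x}_{0}}$ weakly.

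Second, I would simply evaluate the proposed density at (and near) $t=0$: it is the non-degenerate Gaussian $\mathcal{N}(\mathbf{x}_{0};\mathbf{x}_{0},\sigma_{min}^{2}\mathbf{I})$ with strictly positive variance $\sigma_{min}^{2}>0$, and by continuity of $\tilde{\sigma}_{VE}^{2}(t)$ at $t=0$ this is also the weak $t\to 0^{+}$ limit of the family $p_{0t}(\cdot\vert\mathbf{x}_{0})$. Since a non-degenerate Gaussian cannot equal the Dirac mass $\delta_{\mathbf{x}_{0}}$, the two previous conclusions are incompatible, yielding the desired contradiction.

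I do not anticipate a substantive obstacle; the entire argument hinges on the elementary principle that an It\^{o} diffusion cannot ``jump'' at the initial time, whereas $\tilde{\sigma}_{VE}^{2}$ inserts a spurious Gaussian spread of variance $\sigma_{min}^{2}$ precisely there. Indeed, the only property of $\tilde{\sigma}_{VE}^{2}$ actually used is that $\tilde{\sigma}_{VE}^{2}(0)>0$, which also explains why the exact VESDE variance $\sigma_{VE}^{2}(t)=\sigma_{min}^{2}[(\sigma_{max}/\sigma_{min})^{2t}-1]$---vanishing at $t=0$---\emph{does} correspond to a valid SDE, highlighting that the offending feature is the $1$ omitted by the geometric approximation.
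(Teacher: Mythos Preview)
Your argument is correct and matches the paper's own reasoning. The paper does not give a formal proof of Proposition~\ref{prop:2}; it only states, in the sentence immediately preceding the proposition, that ``this approximation leads that $\mathbf{x}_{t}$ is not converging to $\mathbf{x}_{0}$ in distribution because $\sigma_{min}^{2}(\frac{\sigma_{max}}{\sigma_{min}})^{2t}\rightarrow \sigma_{min}^{2}\neq 0$ as $t\rightarrow 0$.'' Your proposal is precisely a rigorous version of this one-line observation: you make explicit why any It\^{o} SDE solution must satisfy $p_{0t}(\cdot\vert\mathbf{x}_{0})\to\delta_{\mathbf{x}_{0}}$ as $t\to 0^{+}$ (via the integrated form and It\^{o} isometry), and then contrast this with $\tilde{\sigma}_{VE}^{2}(0)=\sigma_{min}^{2}>0$. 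There is nothing to add---your write-up is strictly more detailed than what the paper provides.
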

	Proposition \ref{prop:2} indicates that if we approximate the variance by $\sigma_{VE}^{2}(t)$, then the reverse diffusion process cannot be modeled by a generative process. 

Rigorously, however, if the diffusion process starts from $t=-\infty$, rather than $t=0$, then the variance of the transition probability becomes $\sigma_{VE,-\infty}^{2}(t)=\int_{-\infty}^{t}g^{2}(s)\diff s=\sigma_{min}^{2}(\frac{\sigma_{max}}{\sigma_{min}})^{2t}$, which is exactly the variance $\tilde{\sigma}_{VE}^{2}(t)$. Therefore, VESDE can be considered as a diffusion process starting from $t=-\infty$.

From this point of view, we introduce a SDE that satisfies the geometric progression property starting from $t=0$. We name a new SDE as the Reciprocal VE SDE (RVESDE). RVESDE has the identical form of SDE, $\diff\mathbf{x}_{t}=g_{RVE}(t)\diff\mathbf{w}_{t}$, with
\begin{align*}
	g_{RVE}(t):=\left \{\begin{array}{ll}
	\sigma_{max}\big(\frac{\sigma_{min}}{\sigma_{max}})^{\frac{\epsilon}{t}}\frac{\sqrt{2\epsilon\log{(\frac{\sigma_{max}}{\sigma_{min}})}}}{t} & \text{if }t>0,\\
	0 & \text{if }t=0.
	\end{array}
	\right.
	\end{align*}
Then, the transition probability of RVESDE becomes
\begin{align*}
p_{0t}(\mathbf{x}_{t}\vert\mathbf{x}_{0})=\mathcal{N}\bigg(\mathbf{x}_{t};\mathbf{x}_{0},\sigma_{max}^{2}\Big(\frac{\sigma_{min}}{\sigma_{max}}\Big)^{\frac{2\epsilon}{t}}\mathbf{I}\bigg).
\end{align*}
As illustrated in Figure \ref{fig:uddpm}-(c), RVESDE attains the geometric property at the expense of having reciprocated time, $1/t$. Also, RVESDE satisfies $\sigma_{RVE}^{2}(\epsilon)=\sigma_{min}^{2}$ and $\sigma_{RVE}^{2}(T)\approx\sigma_{max}^{2}$. The existence and uniqueness of solution for RVESDE is guaranteed by Theorem 5.2.1 in \cite{oksendal2013stochastic}. 

\section{Implementation Details}\label{sec:implementation_details}

\subsection{Experimental Details}

	\textbf{Training} Throughout the experiments, we train our model with a learning rate of 0.0002, warmup of 5000 iterations, and gradient clipping by 1. For UNCSN++, we take $\sigma_{min}=10^{-3}$, and for NCSN++, we take $\sigma_{min}=10^{-2}$. On ImageNet32 training of the likelihood weighting and the variance weighting without Soft Truncation, we take $\epsilon=5\times 10^{-5}$, following the setting of \citet{song2021maximum}. Otherwise, we take $\epsilon=10^{-5}$. For other hyperparameters, we run our experiments according to \citet{song2020score,song2021maximum}.
	
	On datasets of resolution $32\times 32$, we use the batch size of 128, which consumes about 48Gb GPU memory. On STL-10 with resolution $48\times 48$, we use the batch size of 192, and on datasets of resolution $64\times 64$, we experiment with 128 batch size. The batch size for the datasets of resolution $256\times 256$ is 40, which takes nearly 120Gb of GPU memory. On the dataset of $1024\times1024$ resolution, we use the batch size of 16, which takes around 120Gb of GPU memory. We use five NVIDIA RTX-3090 GPU machines to train the model exceeding 48Gb, and we use a pair of NVIDIA RTX-3090 GPU machines to train the model that consumes less than 48Gb. 
	
	\textbf{Evaluation} We apply the EMA with rate of 0.999 on NCSN++/UNCSN++ and 0.9999 on DDPM++/UDDPM++. For the density estimation, we obtain the NLL performance by the Instantaneous Change of Variable \citep{song2020score, chen2018neural}. We choose $[\epsilon=10^{-5},T=1]$ to integrate the instantaneous change-of-variable of the probability flow as default, even for the ImageNet32 dataset. In spite that \citet{song2020score, song2021maximum} integrates the change-of-variable formula with the starting variable to be $\mathbf{x}_{0}$, Table 5 of \citet{kim2022maximum} analyzes that there are significant difference between starting from $\mathbf{x}_{\epsilon}$ and $\mathbf{x}_{0}$, if $\epsilon$ is not small enough. Therefore, we follow \citet{kim2022maximum} to compute $\mathbb{E}_{\mathbf{x}_{\epsilon}}\big[-\log{p_{\epsilon}^{\bm{\theta}}(\mathbf{x}_{\epsilon})}\big]$. However, to compare with the baseline models, we also evaluate the way \citet{song2020score, song2021maximum} and \citet{vahdat2021score} compute NLL. We denote the way of \citet{kim2022maximum} as \textit{after correction} and \citet{song2021maximum} as \textit{before correction}, throughout the appendix. We dequantize the data variable by the uniform dequantization \cite{ho2019flow++} for both \text{after}-and-\text{before corrections}. In the main paper, we only report the \textit{after correction} performances.
		
	For the sampling, we apply the Predictor-Corrector (PC) algorithm introduced in \citet{song2020score}. We set the signal-to-noise ratio as 0.16 on $32\times 32$ datasets, 0.17 on $48\times 48$ and $64\times 64$ datasets, 0.075 on 256$\times$256 sized datasets, and 0.15 on $1024\times 1024$. On datasets less than $256\times 256$ resolution, we iterate 1,000 steps for the PC sampler, while we apply 2,000 steps on the other high-dimensional datasets. Throughout the experiments for VESDE, we use the reverse diffusion \citep{song2020score} for the predictor algorithm and the annealed Langevin dynamics \citep{welling2011bayesian} for the corrector algorithm. For VPSDE, we use the Euler-Maruyama for the predictor algorithm, and we do not use any corrector algorithm.
	
	We compute the FID score \citep{song2020score} based on the modified Inception V1 network\footnote{\url{https://tfhub.dev/tensorflow/tfgan/eval/inception/1}} using the tensorflow-gan package for CIFAR-10 dataset, and we use the clean-FID \citep{parmar2021buggy} based on the Inception V3 network \citep{szegedy2016rethinking} for the remaining datasets. We note that FID computed by \cite{parmar2021buggy} reports a higher FID score compared to the original FID calculation\footnote{See \url{https://github.com/GaParmar/clean-fid} for the detailed experimental results.}.
	
\section{Additional Experimental Results}\label{sec:additional}
		
		\subsection{Ablation Study on Reconstruction Term}\label{sec:reconstruction_training}
	
	\begin{table}[t]
\centering
\caption{Ablation study of Soft Truncation with/without the reconstruction term when training on CIFAR-10 trained with DDPM++ (VP).}
\label{tab:ablation_reconstruction_appendix}
	\vskip -0.05in
\tiny
\begin{tabular}{lcc|cc|cc|c}
	\toprule
	\multirow{6}{*}[-2pt]{Loss} & \multirow{6}{*}[-2pt]{\shortstack{Soft\\Truncation}} & \multirow{6}{*}[-2pt]{\shortstack{Reconstruction\\Term for\\Training}} & \multicolumn{2}{c|}{NLL} & \multicolumn{2}{c|}{NELBO} & FID \\\cmidrule(lr){4-8}
	&&& \multirow{5}{*}{\shortstack{$\mathbb{E}_{\mathbf{x}_{0}}[-\log{p_{\epsilon}^{\bm{\theta}}(\mathbf{x}_{0})}]$\\(before correction)}} & \multirow{5}{*}{\shortstack{$\mathbb{E}_{\mathbf{x}_{\epsilon}}[-\log{p_{\epsilon}^{\bm{\theta}}(\mathbf{x}_{\epsilon})}]+R_{\epsilon}(\bm{\theta})$\\(after correction)}} & \multirow{5}{*}{\shortstack{$\mathcal{L}(\bm{\theta};g^{2},\epsilon)$\\(without\\residual)}} & \multirow{5}{*}{\shortstack{$\mathcal{L}(\bm{\theta};g^{2},\epsilon)$\\$+R_{\epsilon}(\bm{\theta})$\\(with\\residual)}} & \multicolumn{1}{c}{\multirow{5}{*}{ODE}} \\
	&&&&&&&\\
	&&&&&&&\\
	&&&&&&&\\
	&&&&&&&\\\midrule
	$\mathcal{L}(\bm{\theta};g^{2},\epsilon)$ & \xmark & \multicolumn{1}{c}{\xmark} & \multicolumn{1}{c}{2.97} & \multicolumn{1}{c}{3.03} & \multicolumn{1}{c}{3.11} & \multicolumn{1}{c}{3.13} & 6.70\\\cmidrule(lr){1-3}
	$\mathcal{L}(\bm{\theta};g^{2},\epsilon)+\mathbb{E}_{\mathbf{x}_{0},\mathbf{x}_{\epsilon}}\big[-\log{p(\mathbf{x}_{0}\vert\mathbf{x}_{\epsilon})}\big]$ & \xmark & \multicolumn{1}{c}{\cmark} & \multicolumn{1}{c}{3.01} & \multicolumn{1}{c}{2.99} & \multicolumn{1}{c}{3.07} & \multicolumn{1}{c}{3.09} & 6.93 \\\cmidrule(lr){1-3}
	$\mathcal{L}_{ST}(\bm{\theta};g^{2},\mathbb{P}_{1})=\mathbb{E}_{\mathbb{P}_{1}(\tau)}\big[\mathcal{L}(\bm{\theta};g^{2},\tau)\big]$ & \multirow{2}{*}{\cmark} & \multicolumn{1}{c}{\multirow{2}{*}{\xmark}} & \multicolumn{1}{c}{\multirow{2}{*}{2.98}} & \multicolumn{1}{c}{\multirow{2}{*}{3.01}} & \multicolumn{1}{c}{\multirow{2}{*}{3.08}} & \multicolumn{1}{c}{\multirow{2}{*}{3.08}} & \multirow{2}{*}{\textbf{3.96}} \\
	$=\mathbb{E}_{\mathbb{P}_{1}(\tau)}\big[\mathcal{L}(\bm{\theta};g^{2},\tau)\big]$ &&\multicolumn{1}{c}{} &&\multicolumn{1}{c}{}&&\multicolumn{1}{c}{}&\\\cmidrule(lr){1-3}
	$\mathbb{E}_{\mathbb{P}_{1}(\tau)}\big[\mathcal{L}(\bm{\theta};g^{2},\tau)+R_{\tau}(\bm{\theta})$ & \cmark & \multicolumn{1}{c}{\cmark} & \multicolumn{1}{c}{\textbf{2.95}} & \multicolumn{1}{c}{\textbf{2.98}} & \multicolumn{1}{c}{\textbf{3.04}} & \multicolumn{1}{c}{\textbf{3.04}} & 4.23 \\
	\bottomrule
\end{tabular}
\end{table}
	
		Table \ref{tab:ablation_reconstruction_appendix} presents that the training with the reconstruction term outperforms the training without the reconstruction term on NLL/NELBO with the sacrifice on sample generation. If $\tau$ is fixed as $\epsilon$, then the bound
		\begin{align*}
		\mathbb{E}_{\mathbf{x}_{0}}\big[-\log{p_{0}^{\bm{\theta}}(\mathbf{x}_{0})}\big]\le\mathcal{L}(\bm{\theta};g^{2},\tau)+\mathbb{E}_{\mathbf{x}_{0},\mathbf{x}_{\tau}}\big[-\log{p(\mathbf{x}_{0}\vert\mathbf{x}_{\tau})}\big]
		\end{align*}
		is tight enough to estimate the negative log-likelihood. However, if $\tau$ is a subject of random variable, then the bound is not tight to the negative log-likelihood, as evidenced in Figure \ref{fig:nelbo}-(b). On the other hand, if we do not count the reconstruction, then the bound becomes
		\begin{align*}
		\mathbb{E}_{\mathbf{x}_{0}}\big[-\log{p_{\tau}^{\bm{\theta}}(\mathbf{x}_{\tau})}\big]\le\mathcal{L}(\bm{\theta};g^{2},\tau),
		\end{align*}
		up to a constant, and this bound becomes tight regardless of $\tau$, which is evidenced in Figure \ref{fig:nelbo}-(c). This is why we call Soft Truncation as Maximum Perturbed Likelihood Estimation (MPLE).

\begin{table}[t]
\vskip 0.1in
\centering
	\caption{Ablation study of Soft Truncation for various weightings on CIFAR-10 and ImageNet32 with DDPM++ (VP).}
	\label{tab:ablation_weighting_function_appendix}
	\vskip -0.05in
	\tiny
	\begin{tabular}{llcccccc}
		\toprule
		\multirow{3}{*}[-2pt]{Dataset} & \multirow{3}{*}[-2pt]{Loss} & \multirow{3}{*}[-2pt]{\shortstack{Soft\\Truncation}} & \multicolumn{2}{c}{NLL} & \multicolumn{2}{c}{NELBO} & FID \\\cmidrule(lr){4-8}
		&&& \multirow{2}{*}{\shortstack{after\\correction}} & \multirow{2}{*}{\shortstack{before\\correction}} & \multirow{2}{*}{\shortstack{with\\residual}} & \multirow{2}{*}{\shortstack{without\\residual}} & \multirow{2}{*}{ODE} \\
		&&&&&&&\\\midrule
		\multirow{4}{*}[-2pt]{CIFAR-10} & $\mathcal{L}(\bm{\theta};g^{2},\epsilon)$ & \xmark & 3.03 & \textbf{2.97} & 3.13 & 3.11 & 6.70 \\
		& $\mathcal{L}(\bm{\theta};\sigma^{2},\epsilon)$ & \xmark & 3.21 & 3.16 & 3.34 & 3.32 & \textbf{3.90} \\
		& $\mathcal{L}(\bm{\theta};g_{\mathbb{P}_{1}}^{2},\epsilon)$ & \xmark & 3.06 & 3.02 & 3.18 & 3.14 & 6.11 \\
		& $\mathcal{L}_{ST}(\bm{\theta};g^{2},\mathbb{P}_{1})$ & \cmark & \textbf{3.01} & 2.98 & \textbf{3.08} & \textbf{3.08} & 3.96 \\\midrule
		\multirow{5}{*}[-2pt]{ImageNet32} & $\mathcal{L}(\bm{\theta};g^{2},\epsilon)$ & \xmark & 3.92 & 3.90 & 3.94 & 3.95 & 12.68 \\
		& $\mathcal{L}(\bm{\theta};\sigma^{2},\epsilon)$ & \xmark & 3.95 & 3.96 & 4.00 & 4.01 & 9.22 \\
		& $\mathcal{L}(\bm{\theta};g_{\mathbb{P}_{1}}^{2},\epsilon)$ & \xmark & 3.93 & 3.92 & 3.97 & 3.98 & 11.89 \\
		& $\mathcal{L}_{ST}(\bm{\theta};g^{2},\mathbb{P}_{1})$ & \cmark & \textbf{3.90} & \textbf{3.87} & 3.92 & 3.92 & 9.52 \\
		& $\mathcal{L}_{ST}(\bm{\theta};g^{2},\mathbb{P}_{0.9})$ & \cmark & \textbf{3.90} & 3.88 & \textbf{3.91} & \textbf{3.91} & \textbf{8.42} \\
		\bottomrule
	\end{tabular}
\end{table}

\begin{table}[t]
\vskip 0.1in
\centering
	\caption{Ablation study of Soft Truncation for various model architectures and diffusion SDEs on CelebA.}
	\label{tab:ablation_architecture_sde_appendix}
	\vskip -0.05in
	\tiny
	\begin{tabular}{lllcccccc}
		\toprule
		\multirow{3}{*}[-2pt]{SDE} & \multirow{3}{*}[-2pt]{Model} & \multirow{3}{*}[-2pt]{Loss} & \multicolumn{2}{c}{NLL} & \multicolumn{2}{c}{NELBO} & \multicolumn{2}{c}{FID} \\\cmidrule(lr){4-9}
		& && \multirow{2}{*}{\shortstack{after\\correction}} & \multirow{2}{*}{\shortstack{before\\correction}} & \multirow{2}{*}{\shortstack{with\\residual}} & \multirow{2}{*}{\shortstack{without\\residual}} & \multirow{2}{*}{PC} & \multirow{2}{*}{ODE} \\
		&&&&&&&&\\\midrule
		\multirow{2}{*}{VE} & \multirow{2}{*}{NCSN++} & $\mathcal{L}(\bm{\theta};\sigma^{2},\epsilon)$ & 3.41 & 2.37 & 3.42 & 3.96 & 3.95 & -\\
		& & $\mathcal{L}_{ST}(\bm{\theta};\sigma^{2},\mathbb{P}_{2})$ & 3.44 & 2.42 & 3.44 & 3.97 & 2.68 & -\\\midrule
		\multirow{2}{*}{RVE} & \multirow{2}{*}{UNCSN++} & $\mathcal{L}(\bm{\theta};g^{2},\epsilon)$ & 2.01 & 1.96 & \textbf{2.01} & 2.17 & 3.36 & -\\
		& & $\mathcal{L}_{ST}(\bm{\theta};g^{2},\mathbb{P}_{2})$ & \textbf{1.97} & \textbf{1.91} & 2.02 & 2.18 & \textbf{1.92} & -\\\midrule
		\multirow{8}{*}[-9pt]{VP} & \multirow{2}{*}{DDPM++} & $\mathcal{L}(\bm{\theta};\sigma^{2},\epsilon)$ & 2.14 & 2.07 & 2.21 & 2.22 & 3.03 & 2.32 \\
		& & $\mathcal{L}_{ST}(\bm{\theta};\sigma^{2},\mathbb{P}_{1})$ & 2.17 & 2.08 & 2.29 & 2.26 & 2.88 & \textbf{1.90}\\\cmidrule(lr){2-3}
		& \multirow{2}{*}{UDDPM++} & $\mathcal{L}(\bm{\theta};\sigma^{2},\epsilon)$ & 2.11 & 2.07 & 2.20 & 2.21 & 3.23 & 4.72\\
		& & $\mathcal{L}_{ST}(\bm{\theta};\sigma^{2},\mathbb{P}_{1})$ & 2.16 & 2.08 & 2.28 & 2.25 & 2.22 & 1.94\\\cmidrule(lr){2-3}
		& \multirow{2}{*}{DDPM++} & $\mathcal{L}(\bm{\theta};g^{2},\epsilon)$ & 2.00 & 1.93 & 2.09 & \textbf{2.09} & 5.31 & 3.95\\
		& & $\mathcal{L}_{ST}(\bm{\theta};g^{2},\mathbb{P}_{1})$ & 2.00 & 1.94 & 2.11 & 2.11 & 4.50 & 2.90\\\cmidrule(lr){2-3}
		& \multirow{2}{*}{UDDPM++} & $\mathcal{L}(\bm{\theta};g^{2},\epsilon)$ & 1.98 & 1.95 & 2.12 & 2.15 & 4.65 & 3.98\\
		& & $\mathcal{L}_{ST}(\bm{\theta};g^{2},\mathbb{P}_{1})$ & 2.00 & 1.94 & 2.10 & 2.10 & 4.45 & 2.97\\
		\bottomrule
	\end{tabular}
\end{table}

\begin{table}[t]
\centering
	\caption{Ablation study of Soft Truncation for various $\sigma_{min}$ (equivalently, $\epsilon$) on CIFAR-10 with UNCSN++ (RVE).}
	\label{tab:ablation_epsilon_appendix}
	\vskip -0.05in
	\tiny
	\begin{tabular}{lcccccc}
		\toprule
		\multirow{3}{*}[-2pt]{Loss} & \multirow{3}{*}[-2pt]{$\epsilon$} & \multicolumn{2}{c}{NLL} & \multicolumn{2}{c}{NELBO} & FID \\\cmidrule(lr){3-7}
		&& \multirow{2}{*}{\shortstack{after\\correction}} & \multirow{2}{*}{\shortstack{before\\correction}} & \multirow{2}{*}{\shortstack{with\\residual}} & \multirow{2}{*}{\shortstack{without\\residual}} & \multirow{2}{*}{ODE} \\
		&&&&&&\\\midrule
		\multirow{4}{*}{$\mathcal{L}(\bm{\theta};g^{2},\epsilon)$} & $10^{-2}$ & 4.64 & 4.02 & 4.69 & 5.20 & 38.82 \\
		& $10^{-3}$ & 3.51 & 3.20 & 3.52 & 3.90 & 6.21 \\
		& $10^{-4}$ & 3.05 & 2.98 & 3.08 & 3.24 & 6.33 \\
		& $10^{-5}$ & 3.03 & \textbf{2.97} & 3.13 & 3.11 & 6.70 \\\midrule
		\multirow{4}{*}{$\mathcal{L}_{ST}(\bm{\theta};g^{2},\mathbb{P}_{1})$} & $10^{-2}$ & 4.65 & 4.03 & 4.69 & 5.20 & 39.83 \\
		& $10^{-3}$ & 3.51 & 3.21 & 3.52 & 3.88 & 5.14 \\
		& $10^{-4}$ & 3.05 & 2.98 & 3.08 & 3.24 & 4.16 \\
		& $10^{-5}$ & \textbf{3.01} & 2.98 & \textbf{3.08} & \textbf{3.08} & \textbf{3.96} \\
		\bottomrule
	\end{tabular}
\end{table}

\begin{table}[t]
\vskip 0.1in
\centering
\caption{Ablation study of Soft Truncation for various $\mathbb{P}_{k}$ on CIFAR-10 trained with DDPM++ (VP).}
\label{tab:ablation_prior_appendix}
	\vskip -0.05in
\tiny
\begin{tabular}{lccccc}
	\toprule
	\multirow{3}{*}[-2pt]{Loss} & \multicolumn{2}{c}{NLL} & \multicolumn{2}{c}{NELBO} & FID \\\cmidrule(lr){2-6}
	& \multirow{2}{*}{\shortstack{after\\correction}} & \multirow{2}{*}{\shortstack{before\\correction}} & \multirow{2}{*}{\shortstack{with\\residual}} & \multirow{2}{*}{\shortstack{without\\residual}} & \multirow{2}{*}{ODE} \\
	&&&&&\\\midrule
	$\mathcal{L}_{ST}(\bm{\theta};g^{2},\mathbb{P}_{0})$ & 3.24 & 3.16 & 3.39 & 3.34 & 6.27 \\
	$\mathcal{L}_{ST}(\bm{\theta};g^{2},\mathbb{P}_{0.8})$ & 3.03 & 3.00 & \textbf{3.05} & \textbf{3.05} & 3.61 \\
	$\mathcal{L}_{ST}(\bm{\theta};g^{2},\mathbb{P}_{0.9})$ & 3.03 & 2.99 & 3.13 & 3.13 & \textbf{3.45} \\
	$\mathcal{L}_{ST}(\bm{\theta};g^{2},\mathbb{P}_{1})$ & \textbf{3.01} & 2.98 & 3.08 & 3.08 & 3.96 \\
	$\mathcal{L}_{ST}(\bm{\theta};g^{2},\mathbb{P}_{1.1})$ & 3.02 & 2.99 & 3.09 & 3.10 & 3.98 \\
	$\mathcal{L}_{ST}(\bm{\theta};g^{2},\mathbb{P}_{1.2})$ & 3.03 & 2.99 & 3.09 & 3.09 & 3.98 \\
	$\mathcal{L}_{ST}(\bm{\theta};g^{2},\mathbb{P}_{2})$ & \textbf{3.01} & 2.97 & 3.10 & 3.09 & 6.31 \\
	$\mathcal{L}_{ST}(\bm{\theta};g^{2},\mathbb{P}_{3})$ & 3.02 & 2.96 & 3.09 & 3.09 & 6.54 \\\midrule
	$\mathcal{L}_{ST}(\bm{\theta};g^{2},\mathbb{P}_{\infty})$ & \multirow{2}{*}{\textbf{3.01}} & \multirow{2}{*}{\textbf{2.95}} & \multirow{2}{*}{3.09} & \multirow{2}{*}{3.07} & \multirow{2}{*}{6.70} \\
	$=\mathcal{L}(\bm{\theta};g^{2},\epsilon)$ &&&&&\\
	\bottomrule
\end{tabular}
\end{table}

\begin{table}[t]
\vskip 0.1in
\centering
\caption{Ablation study of Soft Truncation for CIFAR-10 trained with DDPM++ when a diffusion is combined with a normalizing flow \cite{kim2022maximum}. We use $\mathbb{P}([a,b])=\frac{1}{2}1_{[a,b]}(\epsilon)+\frac{1}{2}\mathbb{P}_{0.9}([a,b])$.}
\label{tab:ablation_indm_appendix}
	\vskip -0.05in
\tiny
\begin{tabular}{lcccccc}
	\toprule
	\multirow{3}{*}[-2pt]{Loss} & \multicolumn{2}{c}{NLL} & \multicolumn{2}{c}{NELBO} & FID \\\cmidrule(lr){2-6}
	& \multirow{2}{*}{\shortstack{after\\correction}} & \multirow{2}{*}{\shortstack{before\\correction}} & \multirow{2}{*}{\shortstack{with\\residual}} & \multirow{2}{*}{\shortstack{without\\residual}} & \multirow{2}{*}{ODE} \\
	&&&&&\\\midrule
	$\mathcal{L}(\bm{\theta};g^{2},\epsilon)$ & 2.97 & 2.94 & 2.97 & 2.96 & 6.06 \\
	$\mathcal{L}(\bm{\theta};\sigma^{2},\epsilon)$ & 3.17 & 3.11 & 3.23 & 3.18 & 3.61 \\
	$\mathcal{L}(\bm{\theta};g^{2},\mathbb{P})$ & 3.01 & 2.98 & 3.02 & 3.01 & 3.89 \\
	\bottomrule
\end{tabular}
\end{table}
		
		\subsection{Full Tables}\label{sec:full_tables}
		
		Tables \ref{tab:ablation_weighting_function_appendix}, \ref{tab:ablation_architecture_sde_appendix}, \ref{tab:ablation_epsilon_appendix}, \ref{tab:ablation_prior_appendix}, and \ref{tab:ablation_indm_appendix} present the full list of performances for Soft Truncation.
		
		\subsection{Generated Samples}
		
		Figure \ref{fig:denoising} shows how images are created from the trained model, and Figures from \ref{fig:cifar10} to \ref{fig:celebahq256} present non-cherry picked generated samples of the trained model.
		
	\begin{figure}[t]
		\centering
		\includegraphics[width=\linewidth]{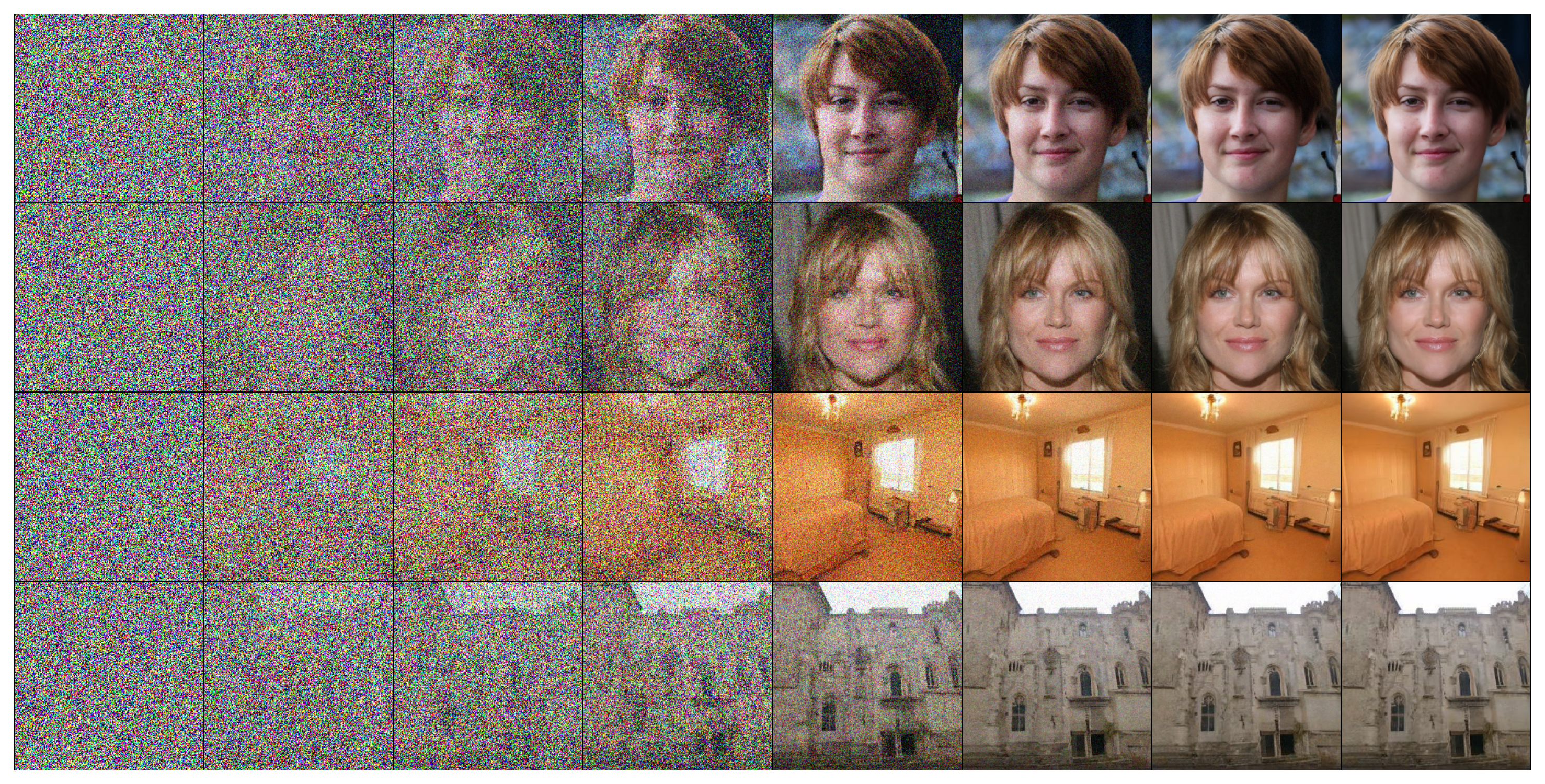}
		\caption{Image generation by denoising via predictor-corrector sampler.}
		\label{fig:denoising}
	\end{figure}
	
	\begin{figure}[t]
		\centering
		\includegraphics[width=\linewidth]{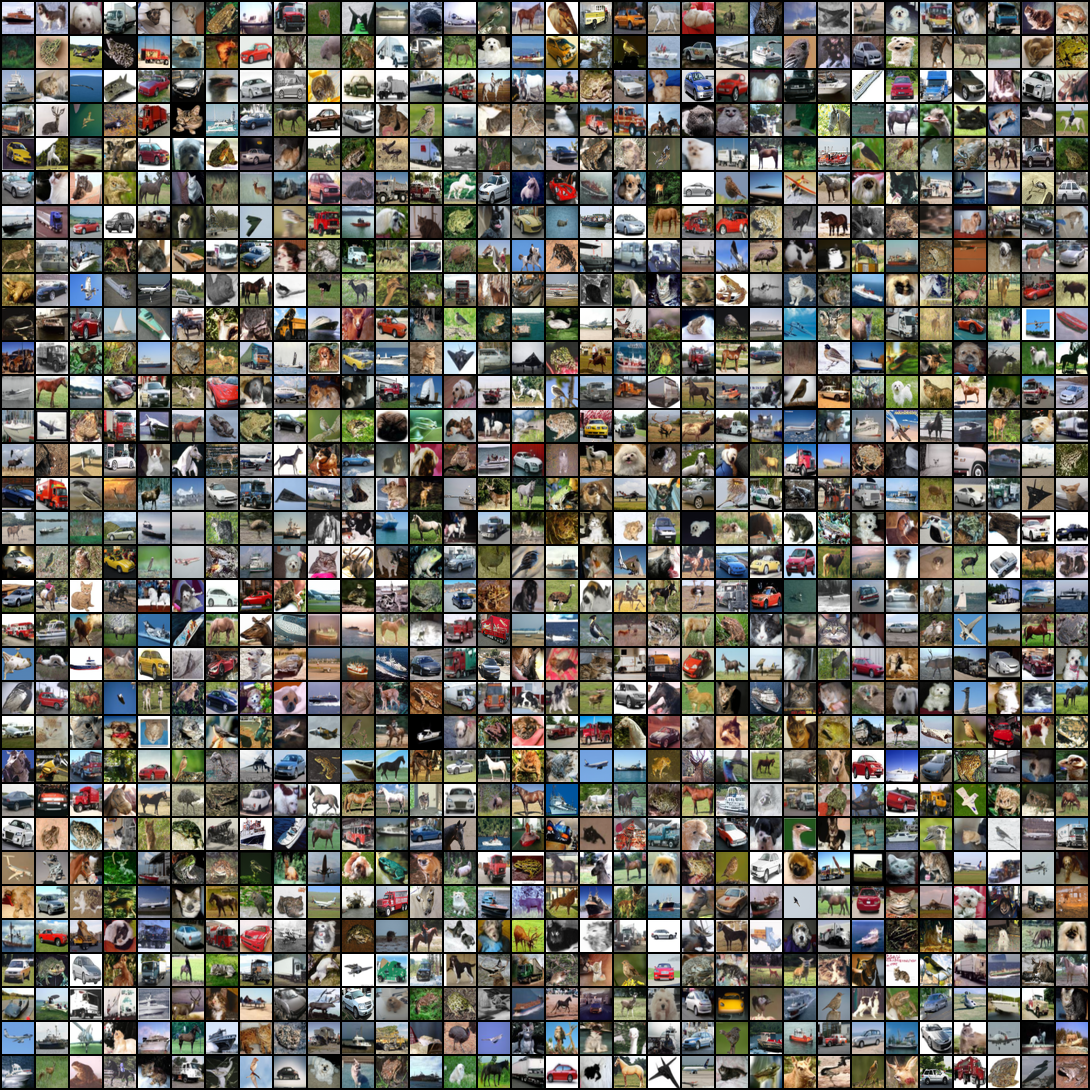}
		\caption{Random samples of CIFAR-10.}
		\label{fig:cifar10}
	\end{figure}
	
	\begin{figure}[t]
		\centering
		\includegraphics[width=\linewidth]{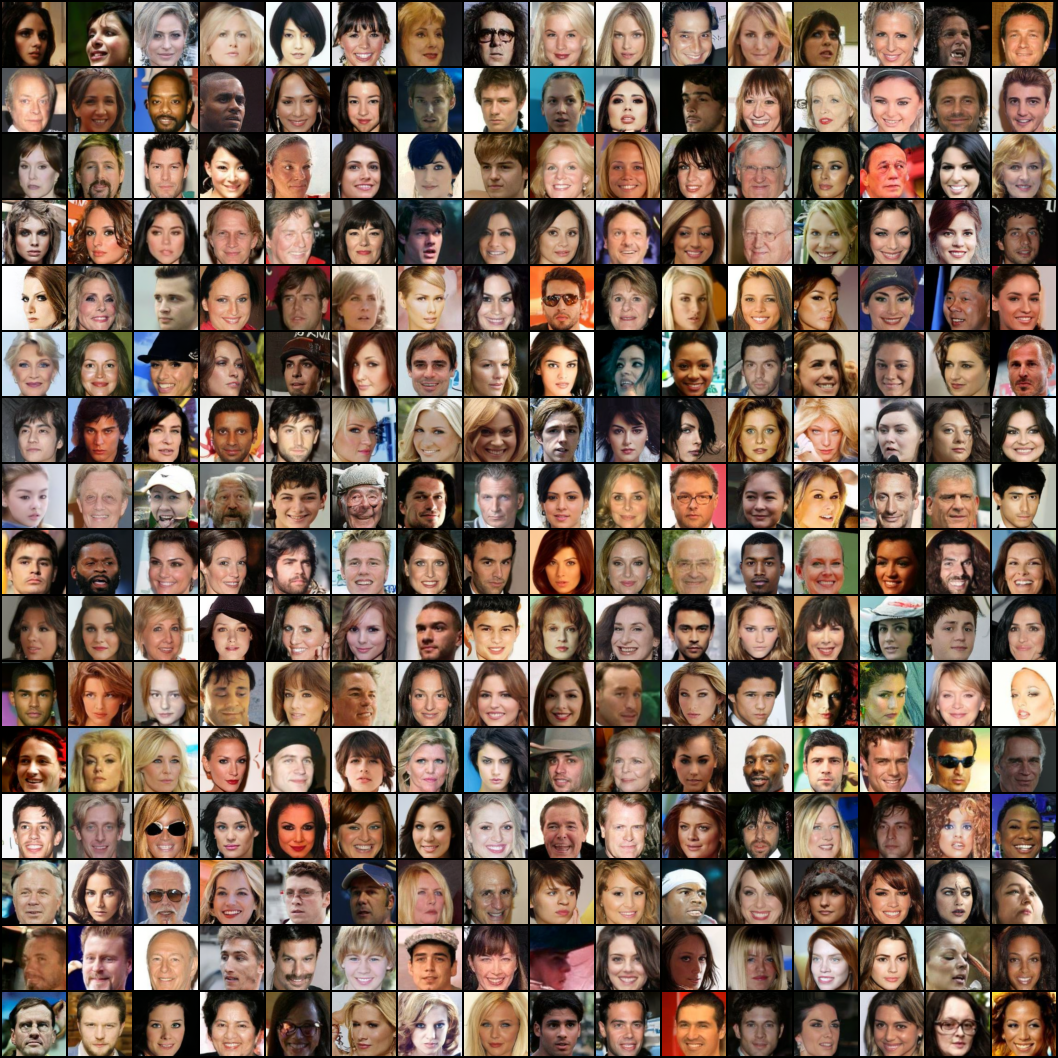}
		\caption{Random samples on CelebA.}
		\label{fig:celeba}
	\end{figure}
	
	\begin{figure}[t]
		\centering
		\includegraphics[width=\linewidth]{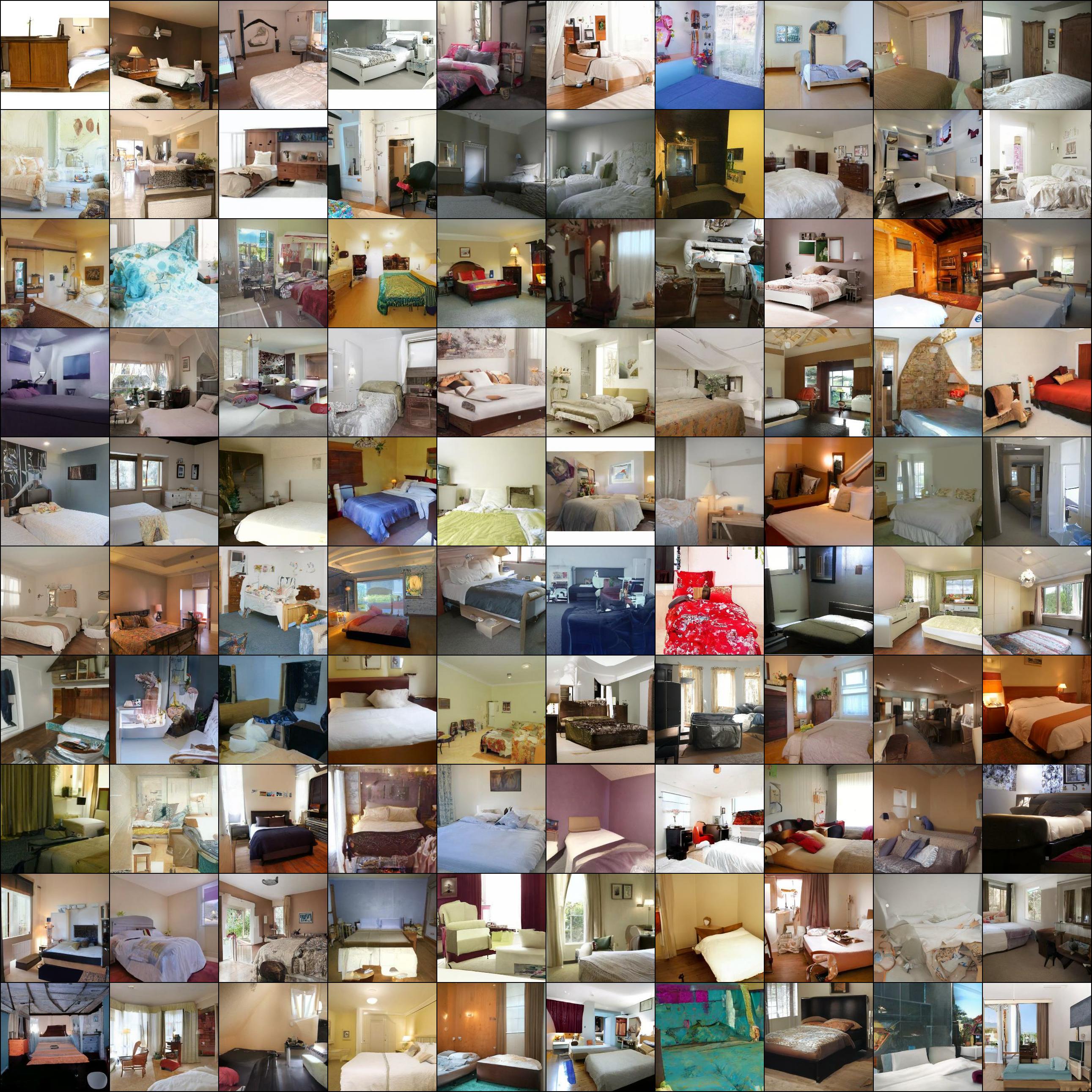}
		\caption{Random samples on LSUN Bedroom.}
		\label{fig:bedroom}
	\end{figure}
	
	\begin{figure}[t]
		\centering
		\includegraphics[width=\linewidth]{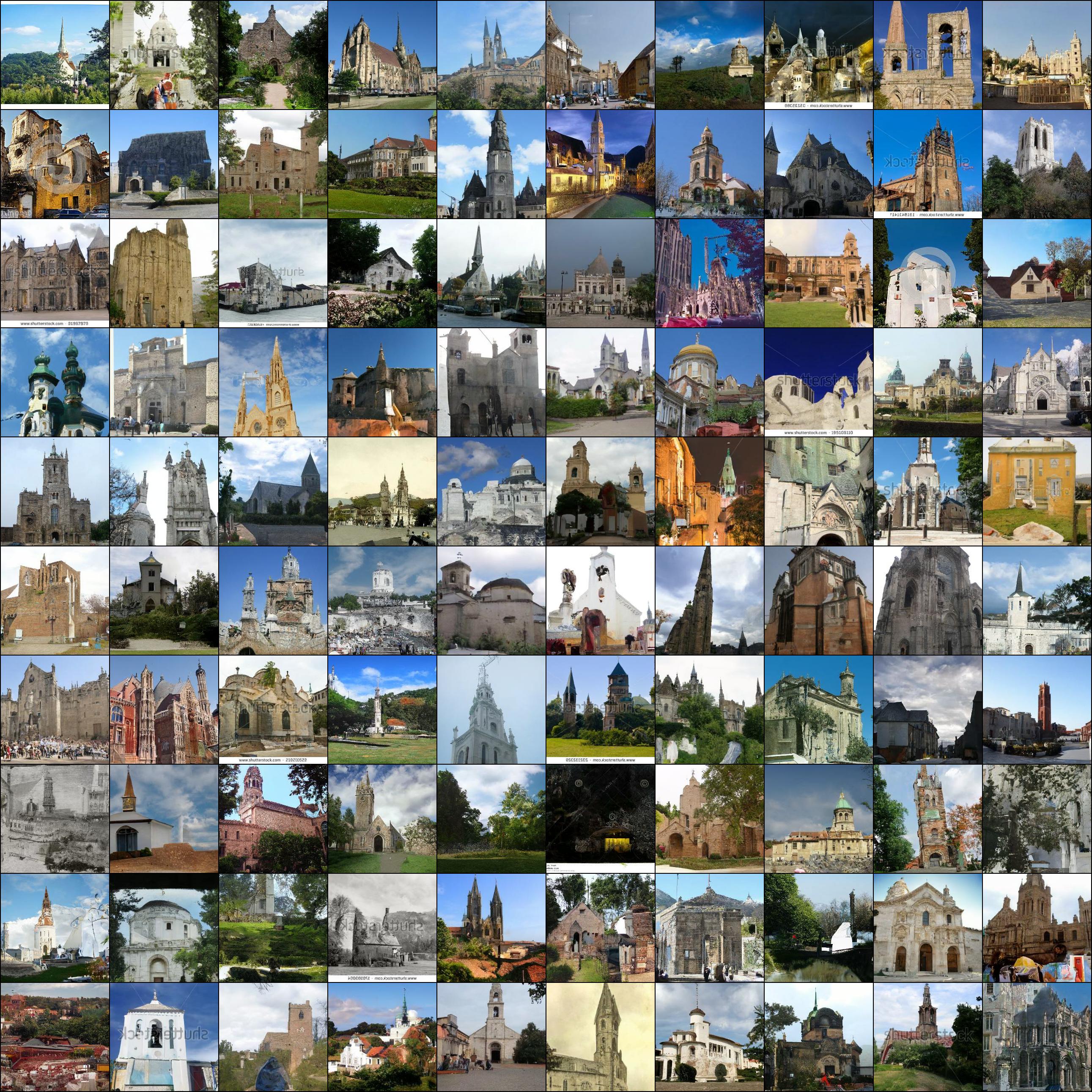}
		\caption{Random samples on LSUN Church.}
		\label{fig:church}
	\end{figure}
	
	\begin{figure}[t]
		\centering
		\includegraphics[width=\linewidth]{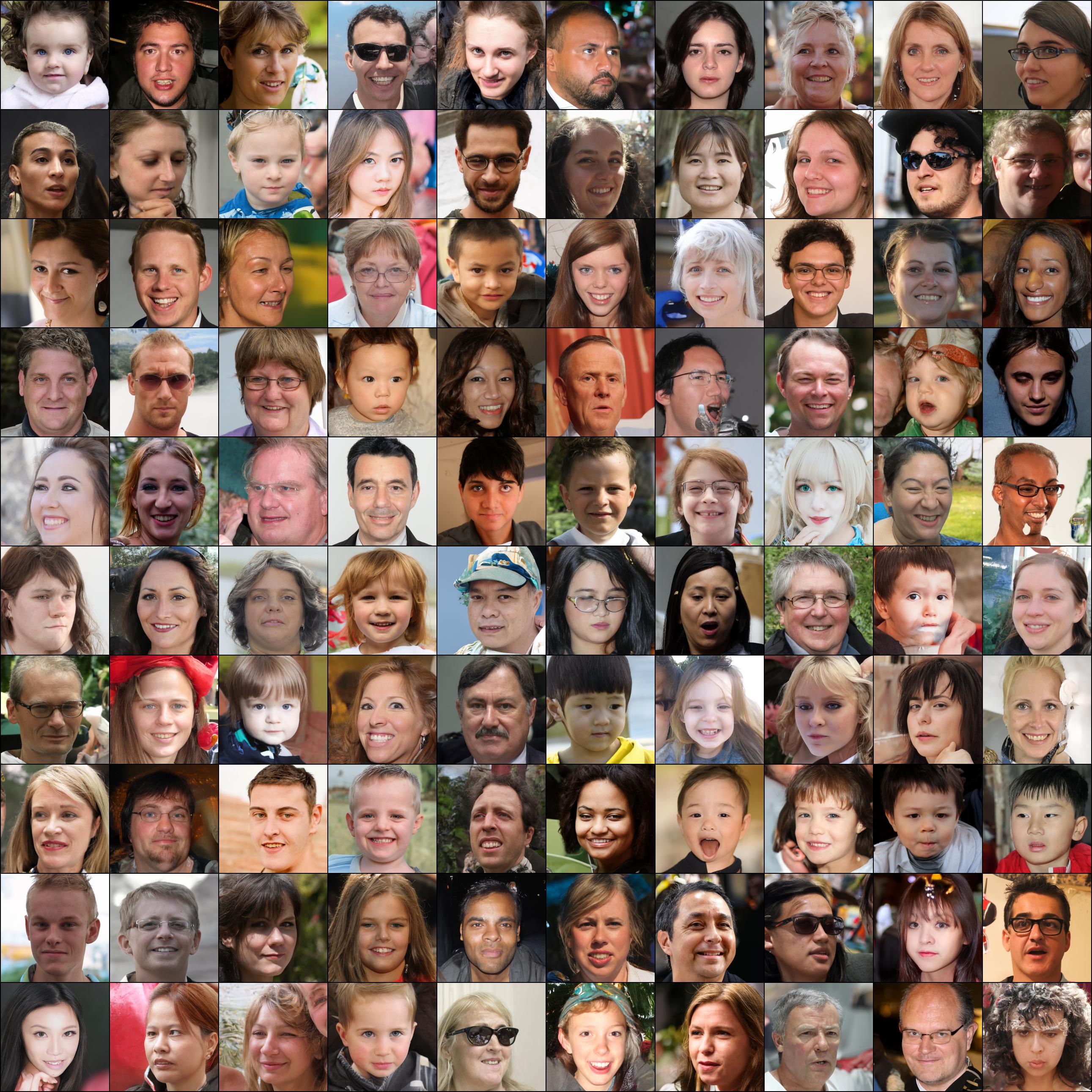}
		\caption{Random samples on FFHQ 256.}
		\label{fig:ffhq256}
	\end{figure}
	
	\begin{figure}[t]
		\centering
		\includegraphics[width=\linewidth]{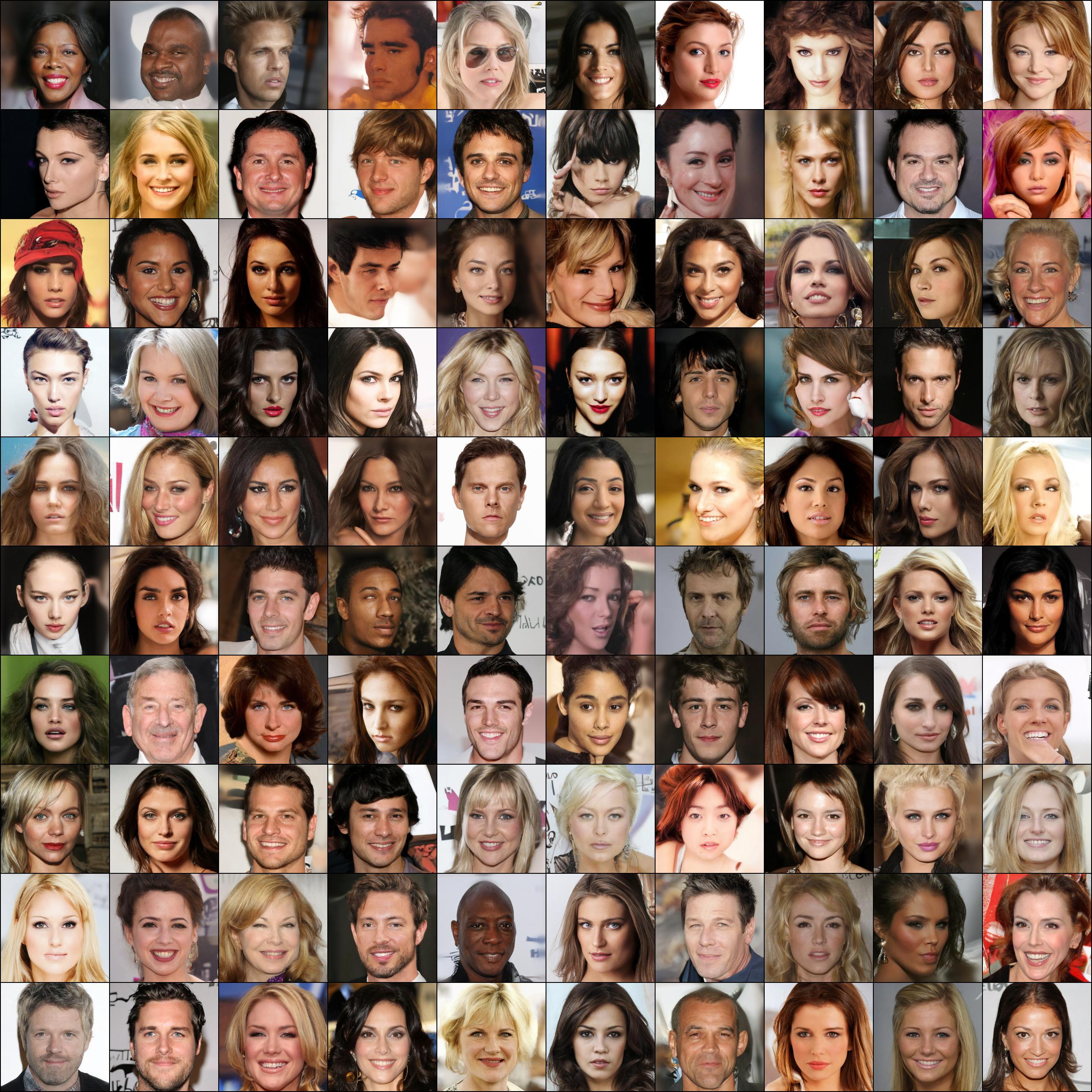}
		\caption{Random samples on CelebA-HQ 256.}
		\label{fig:celebahq256}
	\end{figure}


\end{document}